\documentclass[11pt]{article}

\usepackage[final]{acl}

\usepackage{times}
\usepackage{latexsym}

\usepackage[T1]{fontenc}

\usepackage[utf8]{inputenc}

\usepackage{microtype}

\usepackage{inconsolata}

\usepackage{graphicx}
\usepackage{amsmath}
\usepackage{amssymb}
\usepackage{paralist}
\usepackage{amsthm}
\usepackage{thm-restate}
\usepackage{thmtools}
\usepackage{algorithm}
\usepackage{algorithmic}
\usepackage{booktabs}
\usepackage{multirow}

\newtheorem{proposition}{Proposition}

\makeatletter
\newcommand{\qualification}[1]{\ifthmt@thisistheone#1\fi}
\makeatother

\title{Mind Which Bird You Favour: Parameterizing Adequacy–Fluency Balance in Meta-Evaluation of Machine Translation}

\author{Behzad Shayegh \and
  Niloofar Kazemi \\
  Independent Researchers}

\begin{document}
\maketitle
\begingroup
\renewcommand\thefootnote{$\star$}
\footnotetext{Correspondence to: \href{mailto:the.shayegh@gmail.com}{the.shayegh@gmail.com}}
\endgroup
\begin{abstract}
There is a tradeoff in machine translation meta-evaluation between prioritizing alignment with adequacy versus fluency. The balance depends on the combination of translation systems in the meta-evaluation dataset. This system set is a small, filtered sample whose characteristics change heavily across years and language pairs; it does not represent the true system distribution. Consequently, the adequacy–fluency balance is often unrepresentative and subject to change. For sensitive domains, controlling this balance is critical. We expose this balance as a tunable choice. To achieve a target balance, we reweight existing systems while minimizing distortion from uniform weighting, ensuring the evaluated systems remain real and representative. We provide an exact optimization algorithm with theoretical guarantees and pruning mechanisms to compute these weights. To validate meta-evaluation internal consistency, we design a scorer-augmentation framework that establishes a known relative identity for the scorers. Results demonstrate that our reweighting method effectively controls the adequacy–fluency balance and preserves the internal consistency, outperforming prior approaches. Finally, we analyze the performance of popular scorers across a sweep of this parameter.\footnote{Code available at \href{https://github.com/TheShayegh/adequacy-fluency-param}{https://github.com/TheShayegh/adequacy-fluency-param}}
\end{abstract}

\section{Introduction}

Machine translation (MT) is typically judged by automatic metrics, which we refer to as \emph{scorers}.\footnote{We use the term ``scorer'' instead of ``metric'' to avoid ambiguity with meta-metrics.} Deciding which scorer to trust is itself an evaluation problem known as \emph{meta-evaluation}~\citep{callison-burch-etal-2007-meta}. Concretely, meta-evaluation fixes a set of translation systems and scores them once with a human protocol taken as the ground truth~\citep[Multidimensional Quality Metrics, MQM;][]{freitag-etal-2021-experts}, and once with the scorer under evaluation. It then asks how well the scorer's system ranking agrees with that of the human, as measured by a \emph{meta-metric} such as pairwise accuracy~\citep[PA;][]{kocmi-etal-2021-ship} or soft pairwise accuracy~\citep[SPA;][]{thompson-etal-2024-improving}. The WMT~metrics shared task \citep{freitag-etal-2023-results} performs this meta-evaluation every year, and it sits at the top of the MT stack: it identifies the best scorers, which subsequent research adopts and reports, in turn shaping the next generation of MT systems. Both the scorers and the MT systems therefore silently inherit whatever bias the meta-evaluation imposes.

\begin{figure}[t]
\begin{center}
\includegraphics[width=1\linewidth]{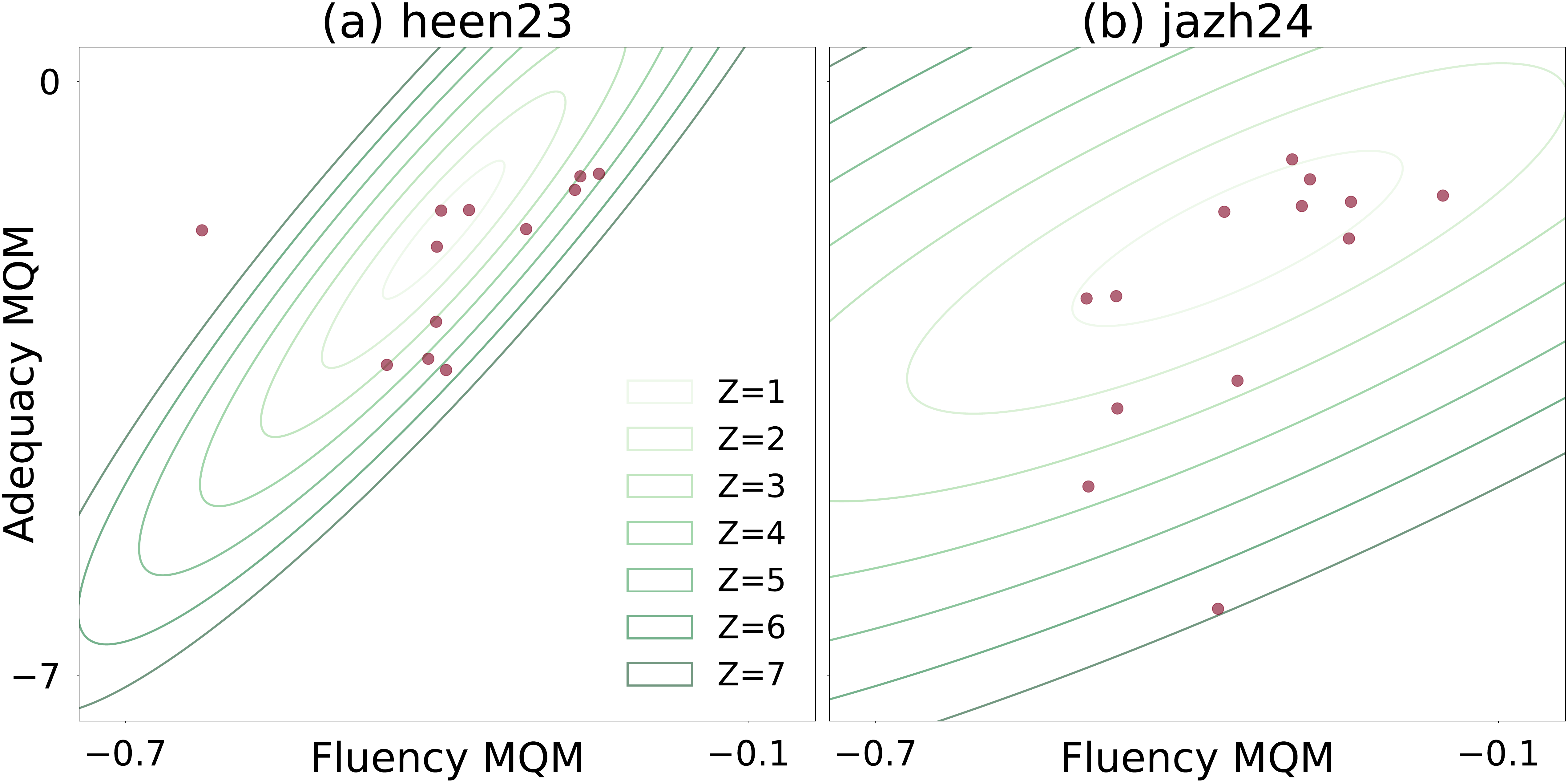}
\end{center}
\vspace{-8pt}
\caption{System-level Adequacy vs.\ Fluency MQM for (a)~heen23 and (b)~jazh24 WMT datasets. Each dot is an MT system. Green contours are robust Mahalanobis-distance levels~\citep{gnanadesikan-kettenring-1972}.}
\label{fig:1}
\vspace{-5pt}
\end{figure}

The MQM ground truth is a multi-component quantity~\citep{freitag-etal-2021-experts}. A translation is penalized both for {adequacy} errors (failing to convey the meaning of the source) and for {fluency} errors (being ungrammatical or unnatural). \citet{flamich2025feedbirdsscoreaccuracynaturalness} show that these two aspects trade off against each other, and \citet{shayegh-etal-2025-feeding} show that the tradeoff reaches the {meta-evaluation} level: the overall human ranking is dominated by whichever aspect varies more across the systems being compared, so the scorers judged best are those aligned with that aspect. In other words, if the systems differ mostly in adequacy, human rankings will be driven primarily by adequacy as well; consequently, metrics sensitive to adequacy will perform better in the meta-evaluation. Crucially, this lean is heavily driven by {system selection}~\citep{shayegh-etal-2025-feeding}.

This is especially concerning because the included systems are a small, filtered subset, thus likely a biased sample of the true, unknown distribution. Figure~\ref{fig:1} shows that the lean differs substantially across WMT datasets. Consequently, meta-evaluation produces uncontrolled scorer rankings, risking the endorsement of undesired scorers that misdirect future model development. This motivates the need for an explicit control mechanism.

\citet{shayegh-etal-2025-feeding} attempt to address this problem through the lens of ``{debiasing}.'' They frame system-selection imbalance as undesirable {extrinsic bias} and introduce a system-synthesis method to reduce it. However, not all system-selection imbalance is undesirable; only its deviation from the true system distribution is. As the true distribution remains unknown, this attribute is unmeasurable, leading us to abandon debiasing in favor of control. Furthermore, we show that the system synthesis produces unrepresentative systems, yielding a distorted meta-evaluation.

In this work, we expose the previously uncontrolled adequacy--fluency balance as a deliberate, tunable choice. Specifically, we parameterize the evaluation pool using the ratio of weighted marginal variances of adequacy and fluency, a parameter we name $\beta$. We treat the systems in the MQM data as a biased sample and reweight them to achieve a target $\beta$ while minimizing distortion from uniform weighting. Because this reweighting is applied at the system level, the evaluated systems remain real and representative, preserving the meta-evaluation's original characteristics. This approach is an instance of importance sampling~\citep{tokdar2010importance} with minimal weight distortion~\citep{deville-sarndal-1992}.

The reweighting is a nontrivial optimization problem, for which we design an exact algorithm, pruning mechanisms that improve efficiency, and certificates that verify optimality. We prove the soundness and completeness of the algorithm, pruning mechanisms, and certificates, and provide an empirical computational-cost analysis.

Validating meta-evaluation is challenging without a ground truth. To address this, we introduce a scorer-augmentation framework that injects controlled signals into real base scorers, preserving the inherent statistical signatures of real scorers while providing known \emph{relative identity}. Using these augmented scorers, we assess a meta-evaluation's (1)~adequacy-over-fluency preference, (2)~MQM adherence, and (3)~explainability by MQM. The latter measures noise avoidance while retaining the natural scorer--MQM relationship. Our reweighting method effectively controls the first measure while preserving the other two, outperforming \citet{shayegh-etal-2025-feeding}'s synthesis method.

Finally, we report and analyze the meta-evaluation scores of popular scorers, including different variants of xCOMET~\citep{10.1162/tacl_a_00683} and MetricX~\citep{juraska-etal-2024-metricx}, across a sweep of the exposed balance parameter $\beta$.

In summary, we: (1) formulate the adequacy--fluency balance of MT meta-evaluation as a tunable parameter; (2) provide an exact algorithm that controls this parameter by importance-weighting the systems in the meta-evaluation, minimizing the distortion from uniform weighting so as to maximally preserve the meta-evaluation's other characteristics; and
(3) propose a controlled scorer-augmentation method to assess a meta-evaluation's adequacy-over-fluency preference, MQM adherence, and explainability by MQM.

\section{Background and Related Work}

\subsection{Meta-Evaluation}

A translation scorer is evaluated through meta-evaluation, which measures its agreement with human annotations across a fixed pool of translation systems \citep{callison-burch-etal-2007-meta}. Expert error annotation under Multidimensional Quality Metrics~\citep[MQM;][]{mariana2015multidimensional} has been established as the standard ground truth for the WMT metrics shared task~\citep{freitag-etal-2021-experts, freitag-etal-2023-results, kocmi-federmann-2023-large, kocmi-etal-2025-findings-wmt25}. Our work focuses on MQM-based meta-evaluation following recent standards.

\paragraph{Meta-metrics.}
While scorer--human agreement was traditionally measured via Pearson~\cite{machacek-bojar-2013-results, machacek-bojar-2014-results}, \citet{mathur-etal-2020-tangled} demonstrate that such a correlation is highly sensitive to outlier systems in the evaluation pool. To address this, \citet{kocmi-etal-2021-ship} introduce pairwise accuracy~(PA), which measures the fraction of system pairs whose human ordering is correctly reproduced. To overcome PA's coarse resolution and tie-handling limitations at the segment level \citep{deutsch-etal-2023-ties}, \citet{thompson-etal-2024-improving} develop soft pairwise accuracy~(SPA), which replaces PA's hard binary comparisons with a continuous measure based on statistical significance. SPA serves as the official system-level meta-metric of WMT24~\citep{freitag-etal-2024-llms}. We follow the line of research and focus on SPA as our main metric. We additionally report results for Pearson correlation in Appendix~\ref{app:pearson}.

\paragraph{Meta-evaluation analysis.}
Recent work critically examines meta-evaluation protocols, studying annotation stability \citep{knowles-2021-stability}, budget allocation \citep{riley-etal-2024-finding}, test-set item selection \citep{zouhar-etal-2025-select}, and baseline sensitivity via degenerate sentinel metrics \citep{perrella-etal-2024-guardians}. These studies treat pool-induced variation as noise or bias to be detected. Our work treats one special axis of that variation, the adequacy--fluency balance, as a tunable quantity that a practitioner should be able to explicitly configure.

\subsection{Adequacy and Fluency}

The distinction between translation adequacy and fluency originates in early evaluation campaigns~\citep{white-etal-1994-arpa} before being largely collapsed into error-weighted penalties \citep{lommel-etal-2014-multidimensional}. Information-theoretic analyses prove that adequacy and fluency genuinely trade off, making a single scalar summary insufficient \citep{flamich2025feedbirdsscoreaccuracynaturalness}. Moreover, fluency impacts downstream user trust independently of adequacy \citep{martindale-carpuat-2018-fluency}.
This trade-off extends to the meta-evaluation level: meta-evaluation inherits the pool's lean, favoring adequacy-oriented scorers in recent WMT MQM datasets due to skewed system selection~\citep{shayegh-etal-2025-feeding}.

We propose a principled parameterization for the adequacy–fluency balance in meta-evaluation, accompanied by a reweighting mechanism to achieve any target balance precisely.

\section{Approach}\label{sec:approach}

We control the adequacy--fluency balance by reweighting systems while explicitly minimizing the collision probability of the assigned weights. Practically, this keeps the meta-evaluation close to the natural dataset distribution and prevents over-indexing a handful of systems.

To control the adequacy--fluency balance, we need a reference for the two terms. MQM annotations provide fine-grained error categories. We follow \citet{flamich2025feedbirdsscoreaccuracynaturalness}  and group these into adequacy and fluency subsets, and compute Adequacy~MQM and Fluency~MQM at the system level accordingly. Their sum, Total~MQM, is the meta-evaluation ground truth.\footnote{Following \citet{shayegh-etal-2025-feeding}, we exclude rare errors that fall outside the two categories. The effect is negligible.}

 \subsection{The Balance Parameter}

A meta-evaluation dataset fixes $K$ translation systems. Write $a{=}(a_1,\cdots,a_K)$ and $f{=}(f_1,\cdots,f_K)$ for their system-level Adequacy~MQM and Fluency~MQM, so that system $k$ is the point $(a_k,f_k)$ and Total~MQM is $t{=}a{+}f$. We assign each system $k$ a weight $w_k$, with $w{=}(w_1,\dots,w_K)$ normalized so that $\sum_k w_k = \mathbf 1^\top w = 1$ and $w_k \ge 0$. Reweighting acts on these $K$ weights, so every system in the meta-evaluation remains real; the distribution of systems, which is likely biased, changes.

Let $\Sigma(w) = \mathrm{diag}(w) - ww^\top$. We have weighted mean and variance of scores $a$ as
\begin{align}
  \mu(a;w) &= w^\top a \\
  \sigma(a;w)^2 &= a^\top \Sigma(w)\, a \notag\\&= \textstyle\sum_k w_k a_k^2 - \mu(a;w)^2
\end{align}
respectively, and likewise for $f$.
We also have $a$--$f$ weighted covariance:
\begin{align}
  \Sigma(a,f;w) = a^\top \Sigma(w)\, f
\end{align}
Setting $w$ to the uniform weighting $\tfrac1K\mathbf 1$ recovers the unweighted sample mean and variance:
\begin{align}
  \mu(a) = \mu(a;\tfrac1K\mathbf 1) ;\;\;
  \sigma(a)^2 = \sigma(a;\tfrac1K\mathbf 1)^2
\end{align}

We assume throughout that no two systems share the same $(a_k,f_k)$; such pairs can be seen as one system in the weighting pool, with the assigned weight uniformly distributed among them.

We define the balance parameter\footnote{An equally valid choice for the balance parameter is $\beta_\text{std} = \sigma(a;w)/(\sigma(a;w)+\sigma(f;w))$, which is a monotonic reparametrization $\beta_\text{std} = (1 + \sqrt{\beta^{-1} - 1})^{-1}$.}
\begin{align}
  \beta(w) = \frac{\sigma(a;w)^2}{\sigma(a;w)^2 + \sigma(f;w)^2} \;\in\; [0,1]\tag{*}\label{eq:beta}
\end{align}
We specifically name the natural balance of the dataset $\beta_0 = \beta(\tfrac1K\mathbf 1)$.

Typically, $a$ and $f$ are partially correlated, thus Total~MQM's variance decomposes as $\sigma(t;w)^2 = \sigma(a;w)^2 + \sigma(f;w)^2 + 2\,\boldsymbol{\Sigma(a,f;w)}$. Our $\beta$ uses only the two marginal terms and leaves the cross term $\Sigma(a,f;w)$ out: $\beta$ answers how much adequacy's spread contributes relative to fluency's. In the next part, we show how meta-evaluation responds to this marginal quantity.

\subsection{Meta-Evaluation's Response to \texorpdfstring{$\boldsymbol\beta$}{Beta}}
\label{sec:pairwise-gaps}

A meta-metric typically judges scorers by aggregating agreement with the human ranking across system pairs. Therefore, meta-evaluation's lean towards adequacy or fluency is mainly driven by how much each component's gaps vary across the system pairs. We show next that this pairwise-gap variance is equal to the row-level variance, licensing its control through $\beta$.

\begin{proposition}
\label{lem:pairwise-gap}
For any vector $x{=}(x_1,\dots,x_K)$ and weights $w{=}(w_1,\dots,w_K)$ with $\mathbf 1^\top w = 1$
\begin{align}
  \sum_{k<k'} w_k w_{k'} (x_k - x_{k'})^2 \;=\; \sigma(x;w)^2
\end{align}
\end{proposition}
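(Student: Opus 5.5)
The plan is to expand the left-hand side directly and then use the normalization $\mathbf 1^\top w = 1$ to recognize the weighted variance $\sigma(x;w)^2 = \sum_k w_k x_k^2 - (w^\top x)^2$ as defined in the paper. First, the sum over unordered pairs $k<k'$ is rewritten as half the sum over all ordered pairs $(k,k')$. The diagonal terms $k=k'$ vanish because $(x_k-x_k)^2=0$, and the summand is symmetric in $k,k'$. This gives
\begin{align}
  \sum_{k<k'} w_k w_{k'} (x_k - x_{k'})^2 = \tfrac12 \sum_{k,k'} w_k w_{k'} (x_k - x_{k'})^2 .
\end{align}

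Next, expand $(x_k - x_{k'})^2 = x_k^2 - 2x_kx_{k'} + x_{k'}^2$ and split the double sum into three pieces. In the piece containing $x_k^2$, the sum over $k'$ factors out as $\sum_{k'} w_{k'} = 1$, leaving $\sum_k w_k x_k^2$. The piece containing $x_{k'}^2$ gives the same value by symmetry. The cross piece factors as $-2\bigl(\sum_k w_k x_k\bigr)\bigl(\sum_{k'} w_{k'}x_{k'}\bigr) = -2\,\mu(x;w)^2$. After multiplying by $\tfrac12$, the total is $\sum_k w_k x_k^2 - \mu(x;w)^2$, which is exactly $\sigma(x;w)^2$.

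As a cross-check in the paper's matrix notation, the sum should equal $x^\top \Sigma(w)\, x$ with $\Sigma(w) = \mathrm{diag}(w) - ww^\top$. The normalization enters through $\mathrm{diag}(w)\mathbf 1 = w$ and $w^\top \mathbf 1 = 1$, since $\Sigma(w)$ is the weighted graph Laplacian of the complete graph with edge weights $w_kw_{k'}$.

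There is no real obstacle. The only points needing care are the factor $\tfrac12$ when passing from unordered to ordered pairs, and invoking $\sum_{k'} w_{k'}=1$ in the square terms. The identity does not require $w_k\ge 0$; only $\mathbf 1^\top w=1$ is used.
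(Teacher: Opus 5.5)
Your proposal is correct and takes the same approach as the paper: expand the full ordered double sum, use $\sum_k w_k = 1$ to obtain $2\,\sigma(x;w)^2$, and halve via symmetry and the vanishing diagonal. Your Laplacian cross-check, $\Sigma(w) = \mathrm{diag}(w) - ww^\top$ as the weighted Laplacian, is a valid additional viewpoint, and you are right that nonnegativity of $w$ is not needed.
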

\begin{proof}
\begin{align}
  \sum_{k,k'} &w_kw_{k'} (x_k-x_{k'})^2 \notag\\&= \sum_{k,{k'}} w_kw_{k'}\big(x_k^2 - 2x_kx_{k'} + x_{k'}^2\big) \nonumber\\
  &= 2\sum_k w_kx_k^2 - 2\Big(\sum_k w_kx_k\Big)^2 \nonumber\\
  &= 2\,\sigma(x;w)^2 \label{eq:8}
\end{align}
The summand $w_kw_{k'}(x_k{-}x_{k'})^2$ is symmetric under $k{\leftrightarrow}{k'}$ and vanishes when $k{=}{k'}$, so splitting the sum by $k{<}{k'}$ gives $2\sum_{k<{k'}} w_kw_{k'}(x_k-x_{k'})^2$. Dividing \eqref{eq:8} by 2 gives the claim.
\end{proof}

Proposition~\ref{lem:pairwise-gap} lets us read the row-level variance $\sigma(x;w)^2$ as a pairwise quantity. Now let $q$ be the mean-squared pairwise gap
\begin{align}
  q(x;w) = \frac{\sum_{k<{k'}} w_kw_{k'} (x_k-x_{k'})^2}{\sum_{k<{k'}} w_kw_{k'}}\label{eq:9}
\end{align}
By Proposition~\ref{lem:pairwise-gap}, the numerator in \eqref{eq:9} is $\sigma(x;w)^2$ and the denominator is $x$-independent. Therefore
\begin{align}
  \frac{q(a;w)}{q(a;w){+}q(f;w)} = \frac{\sigma(a;w)^2}{\sigma(a;w)^2{+}\sigma(f;w)^2} = \beta(w)
\end{align}
In other words, $\beta$ is the balance of mean-squared pairwise gaps, and controlling $\beta$ effectively drives the meta-evaluation's lean.

\subsection{Weighted Soft Pairwise Accuracy}
\label{sec:weighted-spa}

Because SPA is our primary meta-metric, we explicitly define its weighted form. For every system pair~$(k,k')$, SPA compares the confidence that $k$ outranks $k'$ under human judgments against the confidence under a given scorer. This confidence is estimated via paired-permutation $p$-values, $p^{H}_{kk'}$ and $p^{m}_{kk'}$, the latter for scorer $m$; neither depends on $w$. Classical SPA averages the absolute gap between these $p$-values uniformly over all $\binom{K}{2}$ pairs. Weighting systems by $w$ assigns a weight of $w_kw_{k'}$ to each pair. Normalizing these weights yields
\begin{align}
  \pi_{kk'} = \frac{w_kw_{k'}}{\sum_{l<l'}w_lw_{l'}}\quad;
  \quad \sum_{k<k'}\pi_{kk'}=1
\end{align}
which defines the weighted meta-metric
\begin{align}
  \mathrm{SPA}(m;w) = 1 - \sum_{k<k'}
     \pi_{kk'}\,\bigl|\,p^{H}_{kk'} - p^{m}_{kk'}\,\bigr|
  \label{eq:wspa}
\end{align}
Uniform weighting yields $\pi_{kk'} = \binom{K}{2}^{-1}$, recovering classical SPA.

\subsection{Reweighting as an Optimization Problem}

Among all weight vectors $w$ that can realize a given target~$\hat\beta$, we find the one closest to uniform: it disturbs the natural view least beyond what~$\hat\beta$ requires, preserving most about the meta-evaluation.

We measure distance from uniform by collision probability $\mathrm{P}_\mathrm{collis}(w) = \sum_k w_k^2= w^\top w$. Minimizing this distance directly maximizes {effective sample size} $\mathrm{ESS}(w) = \mathrm{P}_\mathrm{collis}(w)^{-1}$, a standard diagnostic for weight concentration. In our work, we report $\mathrm{ESS}(w)/K$ to quantify how broadly the weights are distributed, with $K$ being the size of the original system set.

Writing the balance target as a constraint
\begin{align}
  b(w,\hat\beta) = (1-\hat\beta)\sigma(a;w)^2 - \hat\beta\sigma(f;w)^2\label{eq:b}
\end{align}
so that $b(w,\hat\beta) = 0 \iff \beta(w)=\hat\beta$, we formalize the reweighting as an optimization problem:
\begin{align}
  w^\star = \operatorname*{argmin}_w&\ \tfrac12\ \mathrm{P}_\mathrm{collis}(w) \tag{$\mathcal P$}\label{problem}\\
  \text{s.t.}&\ \mathbf 1^\top w = 1,\ \ b(w,\hat\beta) = 0,\ \ w_k \ge 0 \nonumber
\end{align}

\paragraph{Reachable range.}
For two systems $k,k'$, write
\begin{align}
  \beta_{kk'} = \frac{(a_k-a_{k'})^2}{(a_k-a_{k'})^2+(f_k-f_{k'})^2}\label{eq:bkk}
\end{align}
and let $\beta_{\min}$ and $\beta_{\max}$ be the smallest and largest such pairwise balance over all pairs of systems in the dataset.

\begin{restatable}[Reachable range]{theorem}{theoremrange}
\label{thm:range}
Problem~\eqref{problem} is feasible for target $\hat\beta$ iff $\hat\beta \in [\beta_{\min},\beta_{\max}]$. \qualification{ (Proof in Appendix~\ref{app:proof-range}.)}
\end{restatable}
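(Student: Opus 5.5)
The plan is to use Proposition~\ref{lem:pairwise-gap} to write both marginal variances as nonnegative combinations of pairwise squared gaps, and then read $\beta(w)$ as a weighted average of the pairwise balances $\beta_{kk'}$. Write $\Delta^a_{kk'}=(a_k-a_{k'})^2$, $\Delta^f_{kk'}=(f_k-f_{k'})^2$ and $D_{kk'}=\Delta^a_{kk'}+\Delta^f_{kk'}$. Because distinct systems have distinct points $(a_k,f_k)$, every $D_{kk'}>0$, so each $\beta_{kk'}$ is well defined. By Proposition~\ref{lem:pairwise-gap},
\begin{align}
  \sigma(a;w)^2+\sigma(f;w)^2 &= \textstyle\sum_{k<k'} w_kw_{k'} D_{kk'},\notag\\
  \sigma(a;w)^2 &= \textstyle\sum_{k<k'} w_kw_{k'} D_{kk'}\,\beta_{kk'}.\notag
\end{align}
The constraint \eqref{eq:b} then becomes
\[
  b(w,\hat\beta)=\textstyle\sum_{k<k'} w_kw_{k'}D_{kk'}\,(\beta_{kk'}-\hat\beta).
\]
The feasible set is nonempty exactly when some $w$ in the simplex makes this expression vanish.

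For necessity, take a feasible $w$. If at least two weights are positive, the coefficients $\lambda_{kk'}=w_kw_{k'}D_{kk'}\ge 0$ are not all zero. Hence $b=0$ gives $\hat\beta=\sum\lambda_{kk'}\beta_{kk'}/\sum\lambda_{kk'}$, a convex combination of the pairwise balances, which lies in $[\beta_{\min},\beta_{\max}]$. The degenerate case is a vertex $w=e_k$. There both variances are zero, so $b(e_k,\hat\beta)=0$ for every $\hat\beta$, and the vertex is spuriously ``feasible'' while $\beta(w)$ is the undefined $0/0$. I would handle this by making explicit that problem~\eqref{problem} is meant to encode $\beta(w)=\hat\beta$, which requires $\sigma(a;w)^2+\sigma(f;w)^2>0$. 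Equivalently, point masses are excluded, since the equivalence $b=0\iff\beta(w)=\hat\beta$ stated after \eqref{eq:b} already presupposes a nonzero denominator. I expect this technicality to be the main obstacle: it is the only place where the statement as written could fail, and the argument must state the convention cleanly rather than fight it.

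For sufficiency, let $(i,j)$ attain $\beta_{\min}$ and $(k,l)$ attain $\beta_{\max}$.
\begin{itemize}
  \item If $\hat\beta$ equals one of the extremes, put weight $\tfrac12$ on each system of the attaining pair. Only one pair then contributes, so $\beta(w)=\beta_{ij}$ or $\beta_{kl}$.
  \item Otherwise, consider the path $w(s)=(1-s)\,u_{ij}+s\,u_{kl}$ for $s\in[0,1]$, where $u_{ij}$ is uniform on $\{i,j\}$. Along this path at least one pair always carries positive weight, so the denominator $\sigma(a;w)^2+\sigma(f;w)^2$ stays strictly positive. Hence $\beta(w(s))$ is continuous, and it equals $\beta_{\min}$ at $s=0$ and $\beta_{\max}$ at $s=1$. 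The intermediate value theorem then gives some $s$ with $\beta(w(s))=\hat\beta$, i.e.\ $b=0$. The pairs may share a system; this does not matter, because only the positivity of the denominator is used.
\end{itemize}

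Finally, the feasible set is a closed subset of the simplex once point masses are excluded, which I would argue by noting they cannot be optimal or limits of interest. Since the objective is continuous, a minimizer exists, so feasibility of the constraints is the whole content of the theorem.
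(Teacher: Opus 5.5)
Your proof is correct and follows the paper's route: necessity uses the same convex-combination-of-$\beta_{kk'}$ argument as Theorem~\ref{thm:admissible}; your segment between two extremal two-point weightings, with a positive denominator and the intermediate value theorem, is a more careful version of the paper's connectedness argument; and excluding point masses (where $b(e_k,\hat\beta)=0$ for every $\hat\beta$) makes explicit a convention the paper uses only implicitly, without which the ``only if'' direction fails. One correction to your optional last paragraph: the feasible set minus point masses need not be closed, since feasible points can converge to a vertex, but a minimizer still exists because vertices attain the largest possible objective value $\tfrac12$, so minimizing sequences stay bounded away from them.
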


\subsection{Our Algorithm}\label{sec:alg1}

In this section, we develop and theoretically justify our solver algorithm for Problem~\eqref{problem}. We provide a high-level narrative for the algorithm, followed by auxiliary theorems to improve its efficiency.

Define $S(w) = \{k : w_k > 0\}$ as the support system set of $w$.
Call a support $S$ \emph{admissible} for target $\hat\beta$ if there exists a feasible weighting $w_S$ with support $S$ for target $\hat\beta$, i.e., it is non-negative and $b(w_S, \hat\beta) = 0$.

\begin{restatable}[Bounded candidate count]{theorem}{theoremcandidates}
For an admissible support $S$, let
\label{thm:candidates}
\begin{align}
  \min_w&\ \tfrac12\ \mathrm{P}_\mathrm{collis}(w) \tag{$\mathcal{P}_S$}\label{eq:syspoolw}\\
  \text{s.t.}&\ \mathbf 1^\top w=1,\ \ b(w,\hat\beta)=0\ \ \ \text{\small(no non-negativity)} \nonumber
\end{align}
 be solved on $S$'s coordinates, with $w_k{=}0$ for $k{\notin}S$. Then (a) Problem~\eqref{eq:syspoolw} has at most five stationary points $w_S$, each obtainable in closed form by solving a polynomial equation of degree $\leq5$; and (b)~if $w^\star$ solves~\eqref{problem} and $S=S(w^\star)$, then $w^\star$ is one of these $w_S$. \qualification{ (Proof in Appendix~\ref{app:proof-candidates}.)}
\end{restatable}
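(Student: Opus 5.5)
The plan is to write out the Lagrange conditions of~\eqref{eq:syspoolw} on the coordinates of $S$ and collapse them to a single scalar unknown. Put $c_k=(1-\hat\beta)a_k^2-\hat\beta f_k^2$ and $Q=(1-\hat\beta)aa^\top-\hat\beta ff^\top$, both restricted to $S$. Then
\begin{align*}
  b(w,\hat\beta)=w^\top c-w^\top Qw,\qquad \nabla b = c-2Qw = c-2(1-\hat\beta)\mu_a a+2\hat\beta\mu_f f,
\end{align*}
where $\mu_a=a^\top w$ and $\mu_f=f^\top w$. Stationarity of $\tfrac12 w^\top w$ under the two equality constraints reads $w=\lambda\mathbf 1+\nu\nabla b$. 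Introducing $u=\nu\mu_a$ and $v=\nu\mu_f$ makes this representation linear:
\begin{align*}
  w=\lambda\mathbf 1+\nu c-2(1-\hat\beta)u\,a+2\hat\beta v\,f .
\end{align*}
So every stationary point lies in the span of the four fixed vectors $\mathbf 1,c,a,f$ and is described by four scalars $(\lambda,\nu,u,v)$.

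Next I impose the consistency conditions $\mathbf 1^\top w=1$, $\nu\,a^\top w=u$ and $\nu\,f^\top w=v$. For fixed $\nu$ these form a $3\times3$ linear system in $(\lambda,u,v)$ whose coefficients are affine in $\nu$ and involve only inner products of $\mathbf 1,c,a,f$ over $S$. By Cramer's rule, $\lambda,u,v$, and hence $w$, are rational functions of $\nu$ with a common denominator $D(\nu)$ of low degree. Substituting into the remaining constraint $b(w,\hat\beta)=0$ and multiplying by the appropriate power of $D(\nu)$ gives a polynomial equation $P(\nu)=0$. To keep its degree at most five I will not substitute naively. Instead I use the identity $w^\top\nabla b=2b-w^\top c$, which at $b=0$ gives $w^\top\nabla b=-w^\top c$, to replace the quadratic form $w^\top Qw$ by expressions linear in $w$. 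I will also exploit the fact that the adjugate entries share factors with $D$. The explicit degree count $\deg P\le5$ is a mechanical but unpleasant step: I expect numerator degree at most $3$ after cancelling one factor of $D$, combined with $D$ of degree at most $2$. Each real root $\nu$ with $D(\nu)\ne0$ then yields exactly one $w_S$ in closed form. I treat separately the branch $\nu=0$ (where $w$ is uniform on $S$) and the roots of $D$, and check that together these still give at most five points. This also requires showing $P\not\equiv0$ for distinct points $(a_k,f_k)$, or else treating that degenerate case directly. Since all stationary points come from roots of a degree-$\le5$ polynomial, part~(a) follows.

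For part~(b), let $w^\star$ solve~\eqref{problem} with $S=S(w^\star)$. Every coordinate in $S$ is strictly positive, so a neighbourhood of $w^\star$ within the face $\{w_k=0,\ k\notin S\}$ satisfies non-negativity automatically. Hence $w^\star$ is a local minimizer of~\eqref{eq:syspoolw} on $S$. If the constraint gradients $\mathbf 1$ and $\nabla b(w^\star)$, restricted to $S$, are linearly independent, LICQ holds and $w^\star$ satisfies the Lagrange conditions above, so it is one of the enumerated $w_S$. The main obstacle is the degenerate case where $\nabla b(w^\star)|_S\parallel\mathbf 1$. There I will use the Fritz John conditions: they still give $\lambda_0 w^\star=\lambda\mathbf 1+\nu\nabla b$, and if $\lambda_0=0$ one gets $c-2Qw^\star\in\mathrm{span}(\mathbf 1)$. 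I will try to show this forces $w^\star$ into the same four-dimensional span and, by the same reduction, makes it a root of the same polynomial. Alternatively, I would argue that such points have $b$ with vanishing tangential derivative and hence cannot be strict optima unless they already belong to the candidate set.
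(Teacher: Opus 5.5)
Your treatment of (b) has a genuine gap exactly where you locate the obstacle. If $\nabla b(w^\star)|_S=\mu\mathbf 1$, the Fritz John system already holds with $\lambda_0=0$ by that hypothesis alone, so it says nothing. Landing in $\mathrm{span}\{\mathbf 1,c,a,f\}$ is also not the target: with $\lambda_0=1$ and $\nabla b\parallel\mathbf 1$, the Lagrange condition reads $w^\star\in\mathrm{span}\{\mathbf 1\}$, so you must show that $w^\star$ is \emph{uniform} on $S$. The ``cannot be a strict optimum'' fallback does not give this either, since the feasible set is non-convex and (b) must cover every minimizer.

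The missing idea is the paper's null-direction argument, which uses that $b$ is exactly quadratic. For zero-sum $\delta$ supported on $S$, the linear term $t\mu\mathbf 1^\top\delta$ drops out, so exactly
$b(w^\star+t\delta)=-t^2\bigl[(1-\hat\beta)(a^\top\delta)^2-\hat\beta(f^\top\delta)^2\bigr]$.
Along every isotropic $\delta$ the whole line satisfies both equality constraints. For small $|t|$ it also stays nonnegative, because $w^\star_k>0$ on $S$. Hence $t=0$ minimizes $\|w^\star+t\delta\|^2$, which forces $w^{\star\top}\delta=0$. For admissible $S$, this form on the zero-sum subspace is either identically zero (all $\beta_{kk'}=\hat\beta$) or indefinite of rank two. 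Either way the isotropic directions span the zero-sum subspace, so $w^\star$ is uniform. It is then your root $\nu=0$ (since $D(0)=|S|\neq0$ and $P(0)=D(0)^2b(w^\star)=0$), or it falls under the identically-zero case.

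For (a), your Cramer parametrization in $\mathrm{span}\{\mathbf 1,c,a,f\}$ is a legitimate alternative to the paper's route. The paper instead eliminates the budget via $w=u_0+Nu$, diagonalizes the reduced matrix $M$ (zero, or rank two with $\theta_1>0>\theta_2$), and clears denominators in the secular equation. The two are in fact the same computation: in the non-degenerate case $D(\nu)=|S|(1-2\nu\theta_1)(1-2\nu\theta_2)$ and $P(\nu)=|S|^2\,\Phi_{S,\hat\beta}(-\nu)$, with $\nu=-\eta$.

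The degree bound does hold, but not because of the identity $w^\top\nabla b=2b-w^\top c$. The quadratic part of $b$ is $(1-\hat\beta)\mu_a^2-\hat\beta\mu_f^2$. Since $u(0)=v(0)=0$ while $D(0)\neq0$, the numerators of $\mu_a=u/\nu$ and $\mu_f=v/\nu$ have degree at most $2$. So $D^2\,w^\top Qw$ has degree at most $4$, while $D^2\,w^\top c$ has degree at most $5$.

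What you defer is the actual count, and there are two missing pieces.
\begin{itemize}
\item At a zero of $D$ the linear system is singular. It can yield a one-parameter family of Lagrange points meeting $\{b=0\}$ in two points, and these are invisible to your ``roots with $D\neq0$'' count. A priori you therefore get only $5+2+2$. The paper shows that a pole yields candidates only when the corresponding component $r_j$ of the reduced linear term vanishes, which is exactly when the pole is a root. It then shows the pole is a \emph{double} root, so its two candidates use up two of the five roots. You need this fact for $P$.
\item Your option of proving $P\not\equiv0$ is false. Every admissible two-element support, and more generally every $S$ whose pairwise balances all equal $\hat\beta$, has $b\equiv0$ on the budget hyperplane, hence $P\equiv0$. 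That case must be handled directly, by showing that it forces $w(\nu)$ to be the uniform point for every $\nu$.
\end{itemize}
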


This gives the abstract shape of our algorithm: for every support $S$, compute its (at most five) stationary points $w_S$, discard any with a negative entry, and among everything that survives across every support, return the one of smallest $\mathrm{P}_\mathrm{collis}(w_S)$. Part~\textit{(b)} guarantees $w^\star$ is never missed by this process, thus our algorithm is complete.

Next, we show the enumeration can be largely pruned without losing this guarantee.

For a support $S$ with $|S|{\ge}2$, let $\beta_{\min}(S)$ and $\beta_{\max}(S)$ be the smallest and largest pairwise balance $\beta_{kk'}$ among pairs $k,k'\in S$.

\begin{restatable}[Admissible supports]{theorem}{theoremadmissible}
\label{thm:admissible}
If $w_S$ with support $S$ is a feasible weighting for target $\hat\beta$, then $\beta_{\min}(S) \le \hat\beta \le \beta_{\max}(S)$. \qualification{ (Proof in Appendix~\ref{app:proof-admissible}.)}
\end{restatable}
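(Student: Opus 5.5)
The plan is to rewrite both weighted variances as sums over system pairs in $S$ using Proposition~\ref{lem:pairwise-gap}, and then read $\beta(w_S)$ as a weighted average of the pairwise balances $\beta_{kk'}$. Let $w_S$ be feasible with support $S$. Since $w_k=0$ outside $S$, every term of the pairwise sum in Proposition~\ref{lem:pairwise-gap} involving a system outside $S$ vanishes. This gives
\begin{align}
\sigma(a;w_S)^2 &= \textstyle\sum_{k<k',\,k,k'\in S} w_kw_{k'}\,\Delta^a_{kk'},\notag\\
\sigma(f;w_S)^2 &= \textstyle\sum_{k<k',\,k,k'\in S} w_kw_{k'}\,\Delta^f_{kk'},\notag
\end{align}
where $\Delta^a_{kk'}=(a_k-a_{k'})^2$ and $\Delta^f_{kk'}=(f_k-f_{k'})^2$. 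Set $D_{kk'}=\Delta^a_{kk'}+\Delta^f_{kk'}$. By the standing assumption that no two systems share the same $(a_k,f_k)$, we have $D_{kk'}>0$ for every pair, so $\beta_{kk'}$ is well defined and $\Delta^a_{kk'}=\beta_{kk'}D_{kk'}$.

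Next, plug these expressions into the constraint $b(w_S,\hat\beta)=0$ from \eqref{eq:b}. It becomes
\begin{align}
\textstyle\sum_{k<k',\,k,k'\in S} w_kw_{k'}D_{kk'}\,(\beta_{kk'}-\hat\beta)=0,\notag
\end{align}
because $(1-\hat\beta)\Delta^a-\hat\beta\Delta^f = \Delta^a-\hat\beta D = D(\beta_{kk'}-\hat\beta)$.

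The coefficients $\lambda_{kk'}=w_kw_{k'}D_{kk'}$ are nonnegative. They are strictly positive for every pair in $S$, since all weights on $S$ are positive and $D_{kk'}>0$. Feasibility needs $|S|\ge2$: with a single system both variances vanish and $\beta$ is undefined. So at least one pair exists and the $\lambda_{kk'}$ have positive total. Dividing by that total shows that $\hat\beta$ is a convex combination of $\{\beta_{kk'}:k,k'\in S\}$, and therefore lies between their minimum $\beta_{\min}(S)$ and maximum $\beta_{\max}(S)$.

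The only delicate point is degeneracy. We must make sure the normalizing total is nonzero, which the distinctness assumption together with $|S|\ge2$ guarantees. We must also make sure feasibility is read as nonnegative weights, so that no $\lambda_{kk'}$ can be negative and the convex-combination argument goes through. I expect no further obstacle; the result is essentially a support-restricted version of the argument behind Theorem~\ref{thm:range}.
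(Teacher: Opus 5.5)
Your proposal is correct and matches the paper's proof. Both use Proposition~\ref{lem:pairwise-gap} to express $\hat\beta$ as a convex combination of the pairwise balances $\beta_{kk'}$, $k,k'\in S$, with weights proportional to $w_kw_{k'}\big[(a_k-a_{k'})^2+(f_k-f_{k'})^2\big]$. The only difference is cosmetic: you start from the constraint $b(w_S,\hat\beta)=0$ rather than from the ratio $\beta(w_S)$, which spares you dividing by $\sigma(a;w_S)^2+\sigma(f;w_S)^2$.
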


Equivalently, a support whose pairwise balances do not bracket $\hat\beta$ can be safely skipped during the search.

\begin{restatable}[Support size bound]{theorem}{theoremsizebound}
\label{thm:size-pruning}
For any weighting $w$ supported on $S$, $\mathrm{P}_\mathrm{collis}(w) \ge 1/|S|$, with equality iff $w$ is uniform on $S$. \qualification{ (Proof in Appendix~\ref{app:proof-size-bound}.)}
\end{restatable}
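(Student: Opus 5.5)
The plan is to apply the Cauchy--Schwarz inequality to the weight vector restricted to its support. Let $w$ be any weighting supported on $S$, so $w_k = 0$ for $k \notin S$ and $\sum_{k\in S} w_k = \mathbf 1^\top w = 1$. Since entries outside $S$ contribute nothing to either sum, we can write $\mathrm{P}_\mathrm{collis}(w) = \sum_{k\in S} w_k^2$. We then pair the vector $(w_k)_{k\in S}$ with the all-ones vector on $S$. Cauchy--Schwarz gives
\begin{align}
  1 = \Big(\sum_{k\in S} w_k\cdot 1\Big)^2 \le \Big(\sum_{k\in S} w_k^2\Big)\Big(\sum_{k\in S} 1\Big) = |S|\,\mathrm{P}_\mathrm{collis}(w),
\end{align}
which yields the bound $\mathrm{P}_\mathrm{collis}(w)\ge 1/|S|$. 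Nonnegativity of $w$ is not needed here, so the bound also covers the unconstrained stationary points of Problem~\eqref{eq:syspoolw}.

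For the equality case, Cauchy--Schwarz is tight iff $(w_k)_{k\in S}$ is proportional to the all-ones vector on $S$. The normalization $\sum_{k\in S} w_k = 1$ then forces $w_k = 1/|S|$ for every $k\in S$, i.e.\ $w$ is uniform on $S$. Conversely, the uniform weighting on $S$ gives $\sum_{k\in S} |S|^{-2} = 1/|S|$, so equality is attained.

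A variance identity gives the same result and makes the equality case transparent:
\begin{align}
  \sum_{k\in S}\big(w_k - \tfrac{1}{|S|}\big)^2 = \mathrm{P}_\mathrm{collis}(w) - \tfrac{1}{|S|} \ge 0.
\end{align}
Nothing here is a real obstacle. The only point needing care is to read ``supported on $S$'' as $w_k=0$ off $S$, without requiring $w_k>0$ on all of $S$. The inequality holds either way, and equality forces every $w_k$ on $S$ to be positive, so $S(w)=S$ exactly in the equality case.
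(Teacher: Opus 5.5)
Your proposal is correct and follows the paper's proof essentially verbatim: Cauchy--Schwarz applied to $(w_k)_{k\in S}$ and the all-ones vector on $S$, with equality iff proportional, and the normalization $\sum_{k\in S} w_k = 1$ forcing $w_k = 1/|S|$. Your additional remarks (the variance identity $\sum_{k\in S}(w_k - 1/|S|)^2 = \mathrm{P}_\mathrm{collis}(w) - 1/|S|$, nonnegativity being unnecessary, and $S(w)=S$ in the equality case) are all valid but not needed for the claim.
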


Theorem~\ref{thm:size-pruning} suggests that once a feasible weighting with collision probability $p$ has been found, no support of size at most $1/p$ can carry a feasible weighting of strictly smaller collision probability, and such supports may be skipped by the search. Particularly, our  search process visits supports from largest to smallest, and the search stops the first time Theorem~\ref{thm:size-pruning}'s pruning applies.

Theorems~\ref{thm:admissible} and~\ref{thm:size-pruning} offer complementary pruning mechanisms: each is strongest where the other is weakest. When $\hat\beta$ is close to the dataset's natural balance $\beta_0$, the optimum stays close to uniform and Theorem~\ref{thm:size-pruning} prunes heavily. When $\hat\beta$ is close to $\beta_{\min}$ or $\beta_{\max}$, it falls out from many supports' admissible range, thus Theorem~\ref{thm:admissible} prunes heavily.

\begin{restatable}[Exit certificate]{theorem}{theoremcertificate}
\setlength{\arraycolsep}{3pt}
\label{thm:certificate}
Let $w$ be feasible for Problem~\eqref{problem}. Let $\tilde{a}=a-\mu(a)\mathbf 1,\ \tilde{f}=f-\mu(f)\mathbf 1$ be both nonzero, and $\rho = \frac{\tilde{a}^\top\tilde{f}}{\|\tilde{a}\|\,\|\tilde{f}\|}$ the adequacy--fluency Pearson correlation across systems. If there exist $\eta,\lambda \in \mathbb R$ such that
\begin{align}
&k \in S(w): \quad w_k + \eta\,\dfrac{\partial b(w,\hat\beta)}{\partial w_k} + \lambda = 0 \tag{i}\\
&k \notin S(w): \quad \lambda + \eta\,\dfrac{\partial b(w,\hat\beta)}{\partial w_k} \ge 0 \tag{ii}\\
&{\scriptscriptstyle
I_2{+}2\eta
\begin{pmatrix}
\scriptscriptstyle\hat\beta\|\tilde f\|^2{-}(1{-}\hat\beta)\rho^2\|\tilde a\|^2 ,& \scriptscriptstyle-(1{-}\hat\beta) \rho\sqrt{1{-}\rho^2} \|\tilde a\|^2 \\[5pt]
\scriptscriptstyle-(1{-}\hat\beta)\rho\sqrt{1{-}\rho^2}\|\tilde a\|^2         ,& \scriptscriptstyle-(1{-}\hat\beta)(1{-}\rho^2)\|\tilde a\|^2 \end{pmatrix}{\succeq}0 \tag{iii}}
\end{align}
where $\succeq0$ means positive semidefinite, then $w$ is a global optimum of Problem~\eqref{problem}. \qualification{ (Proof in Appendix~\ref{app:proof-certificate}.)}
\end{restatable}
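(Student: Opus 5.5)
The plan is the standard Lagrangian sufficiency argument. Conditions (i)--(ii) are a stationarity and complementary-slackness statement for a Lagrangian in which the nonnegativity multipliers are eliminated. Condition (iii) will be shown to be exactly convexity of that Lagrangian along the affine hyperplane $\mathbf 1^\top w=1$. Define
\begin{align}
  L(v) = \tfrac12 v^\top v + \eta\, b(v,\hat\beta) + \lambda(\mathbf 1^\top v - 1).
\end{align}
For any $v$ feasible for Problem~\eqref{problem} we have $b(v,\hat\beta)=0$ and $\mathbf 1^\top v=1$, so $L(v)=\tfrac12 v^\top v$; in particular $L(w)=\tfrac12 w^\top w$. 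It therefore suffices to show $L(v)\ge L(w)$ for every feasible $v$.

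\textbf{Step 1: $L$ is a quadratic and its Hessian.} Expanding \eqref{eq:b} with $\sigma(a;v)^2=\sum_k v_k a_k^2-(a^\top v)^2$ gives
\begin{align}
  b(v,\hat\beta)=c^\top v-(1-\hat\beta)(a^\top v)^2+\hat\beta(f^\top v)^2,
\end{align}
where $c_k=(1-\hat\beta)a_k^2-\hat\beta f_k^2$. Hence $L$ is quadratic with constant Hessian
\begin{align}
  H = I + 2\eta\bigl(\hat\beta ff^\top-(1-\hat\beta)aa^\top\bigr).
\end{align}
Its gradient components are $\nabla_k L(w)=w_k+\eta\,\partial b/\partial w_k+\lambda$. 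By (i) these vanish for $k\in S(w)$, and by (ii) they are nonnegative for $k\notin S(w)$.

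\textbf{Step 2: first-order term.} Let $v$ be feasible and $d=v-w$, so $\mathbf 1^\top d=0$. Then
\begin{align}
  \nabla L(w)^\top d=\sum_{k\notin S(w)}\nabla_k L(w)\,v_k\ge 0,
\end{align}
because $w_k=0$ and $v_k\ge0$ off the support. Since $L$ is quadratic, $L(v)=L(w)+\nabla L(w)^\top d+\tfrac12 d^\top H d$. It remains only to show $d^\top H d\ge 0$ for all $d$ with $\mathbf 1^\top d=0$.

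\textbf{Step 3: reducing the restricted Hessian to (iii).} This is the step where the computation must be done carefully. For $\mathbf 1^\top d=0$ we have $a^\top d=\tilde a^\top d$ and $f^\top d=\tilde f^\top d$, so on this subspace
\begin{align}
  d^\top H d = \|d\|^2 + 2\eta\bigl(\hat\beta(\tilde f^\top d)^2-(1-\hat\beta)(\tilde a^\top d)^2\bigr).
\end{align}
Note that $\tilde a,\tilde f$ themselves lie in the subspace. Choose an orthonormal pair $e_1=\tilde f/\|\tilde f\|$ and $e_2$ in $\mathrm{span}(\tilde a,\tilde f)\cap\mathbf 1^\perp$ with $\tilde a=\|\tilde a\|(\rho e_1+\sqrt{1-\rho^2}\,e_2)$. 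If $\rho=\pm1$, take $e_2$ to be any unit vector of $\mathbf 1^\perp$ orthogonal to $e_1$; the $\sqrt{1-\rho^2}$ entries then vanish and nothing changes.

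Decompose $d=\alpha_1e_1+\alpha_2e_2+d_\perp$ with $d_\perp$ orthogonal to both. Then $d^\top Hd=\|d_\perp\|^2+\alpha^\top M\alpha$, where $M$ is $I_2$ plus $2\eta$ times
\begin{align}
  \hat\beta\|\tilde f\|^2 E_{11}-(1-\hat\beta)\|\tilde a\|^2\,u u^\top,
\end{align}
with $u=(\rho,\sqrt{1-\rho^2})^\top$ and $E_{11}$ the matrix unit with a single $1$ in position $(1,1)$. Expanding $uu^\top$ reproduces exactly the entries $\hat\beta\|\tilde f\|^2-(1-\hat\beta)\rho^2\|\tilde a\|^2$, $-(1-\hat\beta)\rho\sqrt{1-\rho^2}\|\tilde a\|^2$ and $-(1-\hat\beta)(1-\rho^2)\|\tilde a\|^2$ of (iii). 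Thus (iii) gives $\alpha^\top M\alpha\ge0$, and hence $d^\top Hd\ge0$.

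\textbf{Conclusion.} Combining the steps, for every feasible $v$,
\begin{align}
  \tfrac12 v^\top v = L(v) \ge L(w) = \tfrac12 w^\top w,
\end{align}
so $w$ is a global optimum of Problem~\eqref{problem}.

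I expect the main obstacle to be the bookkeeping in Step 3. The argument needs the restriction to $\mathbf 1^\perp$ to turn $a,f$ into the centered vectors $\tilde a,\tilde f$, and it needs the orthonormal change of basis to match (iii) exactly, including the sign conventions and the degenerate case $|\rho|=1$. The remaining steps are routine Lagrangian algebra.
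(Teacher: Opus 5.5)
Your proof is correct. It uses the same ingredients as the paper: the Lagrangian Hessian $I+2\eta H$ with $H=\hat\beta ff^\top-(1-\hat\beta)aa^\top$, its positive semidefiniteness on the zero-sum subspace $\mathbf 1^\perp$, and a reduction of that test to $\mathrm{span}(\tilde a,\tilde f)$.

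Where you differ is in how global optimality is obtained. The paper reads (i)--(ii) as KKT conditions with explicit multipliers $\gamma_k$. It then invokes second-order sufficiency for nonconvex QCQPs by citation. It gets the $2\times2$ test by borrowing the rank-two, indefinite structure of $N^\top HN$ from Case~2 of the proof of Theorem~\ref{thm:candidates}.

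You instead note that your $L$ equals the objective on the feasible set and is exactly quadratic, so $L(v)=L(w)+\nabla L(w)^\top d+\tfrac12 d^\top(I+2\eta H)d$. The linear term is nonnegative by (i), (ii) and $v\ge0$. The quadratic term is nonnegative on all of $\mathbf 1^\perp$ by (iii). This makes explicit what the citation leaves implicit: second-order sufficiency on the tangent space alone only certifies a local minimum. It is convexity of the Lagrangian on the whole hyperplane $\mathbf 1^\top v=1$ that yields a global one.

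Your explicit basis $e_1=\tilde f/\|\tilde f\|$, $e_2$ also shows directly that the matrix in (iii) is the compression of $I+2\eta H$ to $\mathrm{span}(\tilde a,\tilde f)$. It covers $|\rho|=1$, which the appeal to Case~2 (where $\tilde a,\tilde f$ are linearly independent) does not.

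The only loose end is $K=2$. There $\mathbf 1^\perp$ is one-dimensional and no $e_2$ exists, but $\rho=\pm1$, so you can set $\alpha_2=0$ and the $(1,1)$ entry of (iii) suffices.
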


Theorem~\ref{thm:certificate} grants our algorithm a short exit: the moment one stationary point on a visited support passes the three conditions, the search stops.

\begin{algorithm}[t]
\caption{Exact reweighting for target $\hat\beta$}
\label{alg:reweight}
\begin{algorithmic}[1]
\REQUIRE $\hat\beta \in [\beta_{\min},\beta_{\max}]$ \COMMENT{Theorem~\ref{thm:range}}
\STATE\textbf{register} $w^\star$
\FOR{$S \subseteq \{1,\dots,K\}$, $|S| \ge 2$, in decreasing order of $|S|$}
  \IF{$|S| \le 1/\mathrm{P}_\mathrm{collis}(w^\star)$}
    \STATE \textbf{break} \COMMENT{Theorem~\ref{thm:size-pruning}}
  \ENDIF
  \IF{$S$ not admissible at $\hat\beta$}
    \STATE \textbf{continue} \COMMENT{Theorem~\ref{thm:admissible}}
  \ENDIF
  \FOR{each stationary candidate $w$ on $S$ \COMMENT{Theorem~\ref{thm:candidates}a}}
    \IF{$w$ passes the exit certificate}
      \STATE \textbf{return} $w$ \COMMENT{Theorem~\ref{thm:certificate}}
    \ENDIF
    \IF{$w$ feasible and $\mathrm{P}_\mathrm{collis}(w) < \mathrm{P}_\mathrm{collis}(w^\star)$}
      \STATE $w^\star \gets w$ \COMMENT{Theorem~\ref{thm:candidates}b}
    \ENDIF
  \ENDFOR
\ENDFOR
\STATE \textbf{return} $w^\star$
\end{algorithmic}
\end{algorithm}

Algorithm~\ref{alg:reweight} concludes our exact search algorithm.
Theorems~\ref{thm:range}--\ref{thm:certificate} guarantee the soundness and completeness of our algorithm unconditionally: pruned branches are guaranteed to not include better solutions, short exit guarantees returning the global optimum, and the algorithm guarantees to search all branches (finite) carrying a possible solution. We also cross-validated its output against brute-force exhaustive search over all supports on small system sets, where full enumeration remains tractable, and found exact agreement throughout.\footnote{Theorem~\ref{thm:candidates} bounds the stationary points by the roots of a degree-five polynomial, but our implementation solves the equivalent rational secular equation directly rather than rooting that polynomial, whose coefficients become badly scaled as $\theta_2 \to 0$. This and all numerical thresholds are fixed in the released code.}
In Appendix~\ref{sec:comp}, we report how much computation Theorems~\ref{thm:admissible}--\ref{thm:certificate} save us in practice.

\section{Experiments}\label{sec:experiments}

In this section, we analyze the response of meta-evaluation to different
leanings that scorers may have. Specifically, we measure
each meta-evaluation's adequacy-over-fluency preference, MQM adherence, and explainability by MQM. To this end, we develop a controlled testbed that reveals signals for these aspects. Then, we benchmark our reweighting
mechanism on this testbed, comparing it against the unweighted
meta-evaluation and the system-synthesis-based meta-evaluation of
\citet{shayegh-etal-2025-feeding}.

\subsection{Scorer Augmentation}

Evaluation and analysis of scorers needs a ground truth for their true
underlying character (e.g., how much of a scorer's assessment is driven by
the fluency of the translation), which is not available. To obtain a replacement, we propose to augment scorers in a
controlled way, so that the augmented scorers' relative identities are known by construction, even though no scorer's absolute
identity is. We use this relative identity as pseudo ground truth for evaluation
and analysis. Importantly, we augment real scorers, staying close to their natural
biases and characteristics, except for those we control in particular.

We augment a base scorer $s_0$ in three directions, giving three
families of augmented scorers, each to test a different way a
meta-evaluation's preference among scorers could change. Given $a$ and $f$ being gold Adequacy~MQM and Fluency~MQM, the three directions are:
\begin{compactitem}
  \item MQM-adherence family: $s_0 \leftrightarrow a+f$
  \item Adequacy--fluency family: $s_0 \leftrightarrow a-f$
  \item Explainability-by-MQM family: $s_0 \leftrightarrow$ part of~$s_0$ only explained by $(a,f)$
\end{compactitem}
An ideal meta-evaluation should typically prefer an augmented scorer that better adheres to MQM or prefers explainability by MQM.

We realize each family with a dial $d$: a base scorer always has $d=0$,
and as $d$ increases, the augmented scorer approaches its family's
target. Each value of $d$ yields a new augmented scorer whose relative
identity to other members of its family is known through their relative
$d$ values. Note that we allow negative $d$, i.e., going backwards in each
direction.

\paragraph{MQM-adherence family.} This is constructed by moving linearly towards $t=a+f$ at the system level. For
segment~$i$ of system~$k$, and base scorer~$s_0$, the augmented scorer is
\begin{align}
  s_{d}(k, i) = s_0(k, i) + \frac{d \cdot \sigma(s_0)}{\sigma(z)} z_k
\end{align}
where $z$ is the system-level $z$-score of $t$, and $\sigma$ is standard deviation across systems. In simple words, we move towards $z$ in
steps of $d \times s_0$'s scale.\footnote{The same approach can produce Adequacy~MQM-adherence or Fluency~MQM-adherence families of synthetic scorers by considering $z$ as the system-level $z$-score of $a$ or $f$, respectively. We exclude these families from our analysis as the partial correlation between the two aspects~\citep{shayegh-etal-2025-feeding} confounds their interpretation.}

The above construction conserves characteristics of real scorers: the two
ends of the direction ($d=0$ and $d\rightarrow\infty$) are the base scorer
itself and the MQM ground truth, both considered valid
scorers. Moreover, for every finite $d$, the augmented scorer preserves
the base scorer's own system- and segment-specific ($k,i$-related) biases
and patterns. Note that the above construction does keep the
within-system variance $\sigma_i^2(s_d(k,\cdot))$ of each system $k$
intact. This is the signal SPA reads as a scorer's confidence. Keeping it
unchanged avoids confounding the MQM-adherence signal with a change in SPA's confidence signal.

\paragraph{Adequacy--fluency family.}
Similar to MQM-adherence family, we write
\begin{align}
    s_d(k, i){=}s_0(k,i){+}\frac{d\cdot\sigma(s_0)}{\sigma(z^{(a)})} z^{(a)}_k{-}\frac{d\cdot\sigma(s_0)}{\sigma(z^{(f)})} z^{(f)}_k
\end{align}
where $z^{(a)}$ and $z^{(f)}$ are the system-level $z$-scores of $a$ and $f$, respectively, and $\sigma$ is the standard deviation across systems. This construction is particularly clean: with correlation $\rho(a, f){<}1$, it establishes a known adequacy-over-fluency preference along the increasing dial $d$. If $\rho = 1$, there is no trade-off, and the construction has no effect.

\begin{figure}[t]
\begin{center}
\includegraphics[width=1\linewidth]{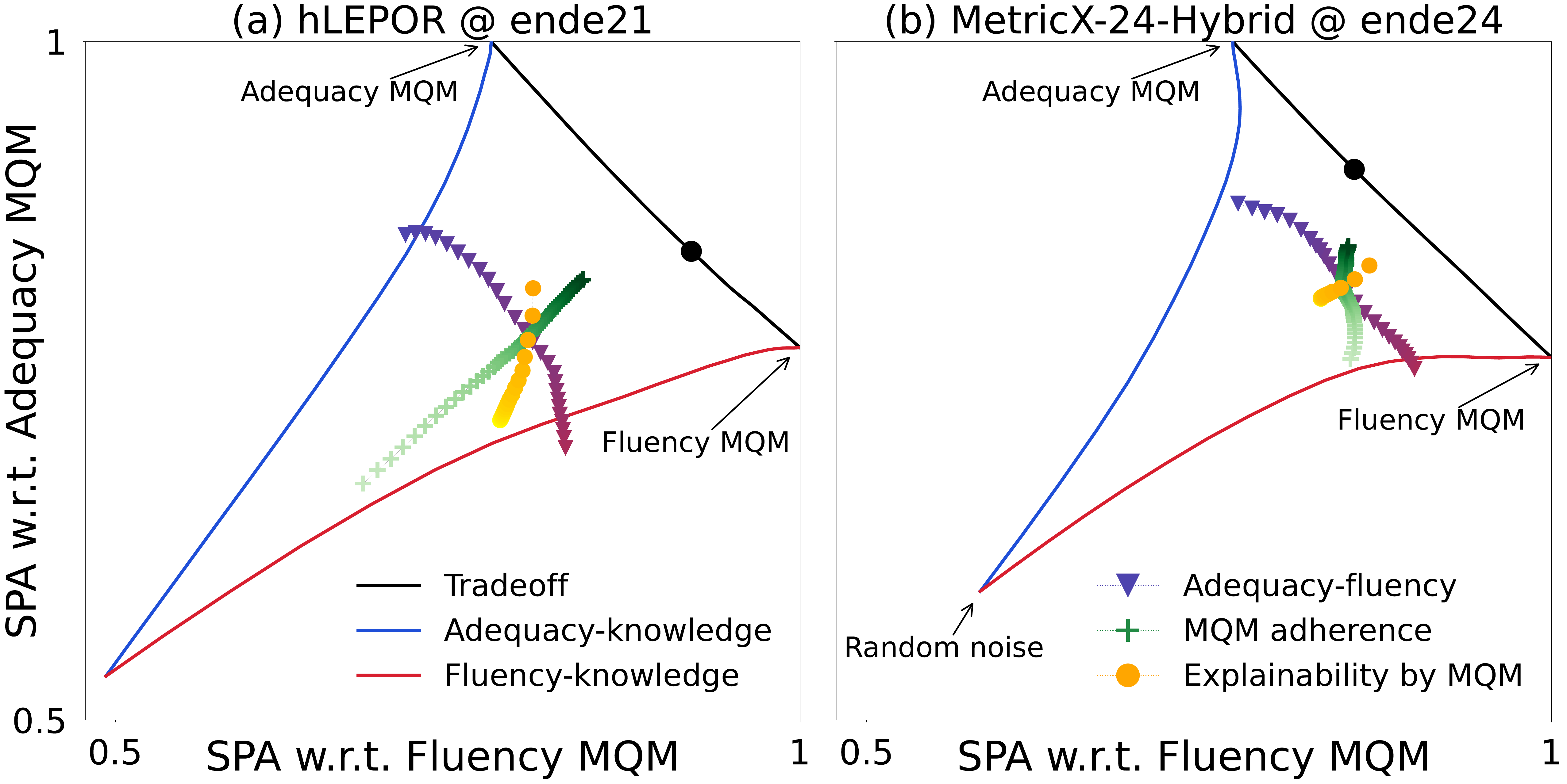}
\end{center}
\vspace{-10pt}
\caption{The SPA plane of \citet{shayegh-etal-2025-feeding}, with
the augmented scorers of all three families plotted as points, for
(a)~hLEPOR~\citep{han-etal-2013-language} in ende21 dataset and (b)~MetricX-24-Hybrid~\citep{juraska-etal-2024-metricx} in ende24 dataset as the base scorer.}
\label{fig:2}
\vspace{-5pt}
\end{figure}

\paragraph{Explainability-by-MQM family.}
This family's direction is given by the part of the base scorer not
explained by the joint $(a,f)$ pair: moving one way along it reduces this
unexplained part, approaching what $(a,f)$ alone explains, while moving
the other way amplifies it.
It is different from the other two as instead of injecting/removing
specific signals, it magnifies/reduces any $(a,f)$-unrelated signal the base scorer may have.
We find the direction as follows. Write $a_{ki}$ and $f_{ki}$ for segment
$i$ of system $k$'s gold Adequacy~MQM and Fluency~MQM scores, and let
$\bar{s}_0(a, f)$ be the $s_0$'s mean score among segments sharing the exact
$(a, f)$:
\begin{align}
  \bar{s}_0(a,f) = \mathbb{E}_{i,k:\;a_{ki}=a,\;f_{ki}=f} \;\; [s_0(k, i)]
\end{align}
Let
\begin{align}
  \bar e_k = \mathbb{E}_i \big[ s_0(k, i) - \bar{s}_0(a_{ki}, f_{ki}) \big]
\end{align}
be system $k$'s \emph{offset}: the part of its score that the joint pair
does not explain. We construct the augmented scorer
\begin{align}
  s_{d}(k, i) = s_0(k, i) - d \cdot \bar e_k.
\end{align}
In simple words, we partially remove each system's offset beyond what its
$(a, f)$ explains. For
negative $d$, we add more of it.

To visually illustrate the three families,
Figure~\ref{fig:2} plots them, applied to two different
base scorers, on the SPA plane of \citet{shayegh-etal-2025-feeding}.

\subsection{Evaluation Setup}
We analyze the performance of a meta-evaluation separately in each of
three directions. Specifically, we compare the meta-evaluation's assessment of
different augmented scorers within each family, and count for how many
adjacent-in-$d$ scorer pairs the
meta-evaluation correctly identifies the dial direction. We report this
success rate separately for each family as three measures.

\paragraph{Datasets.} We use MQM data published by~\citet{freitag-etal-2021-experts}.\footnote{\href{https://github.com/google/wmt-mqm-human-evaluation}{github.com/google/wmt-mqm-human-evaluation}; MQM data for 2025~\citep{lavie-etal-2025-findings} has not been published by the time of this study.} We restrict our analysis to datasets from 2021 onward~\citep{freitag-etal-2021-results,freitag-etal-2022-results,freitag-etal-2023-results,freitag-etal-2024-llms} due to limited standardization and support for older datasets. We specifically exclude ende23 because many of its MQM annotations are categorized as ``other'' rather than being properly assigned to adequacy or fluency categories. This leaves 11 datasets: ende24, enes24, jazh24, heen23, zhen23, ende22, zhen22, ende21, zhen21, ted\_ende, and ted\_zhen. Across all datasets, we evaluate all available systems except for M2M100\_1.2B-B4 and MSLC, which are officially flagged for exclusion. Detailed statistics are in Appendix~\ref{app:datasetstats}.

\begin{figure*}[t]
\begin{center}
\includegraphics[height=5.5cm]{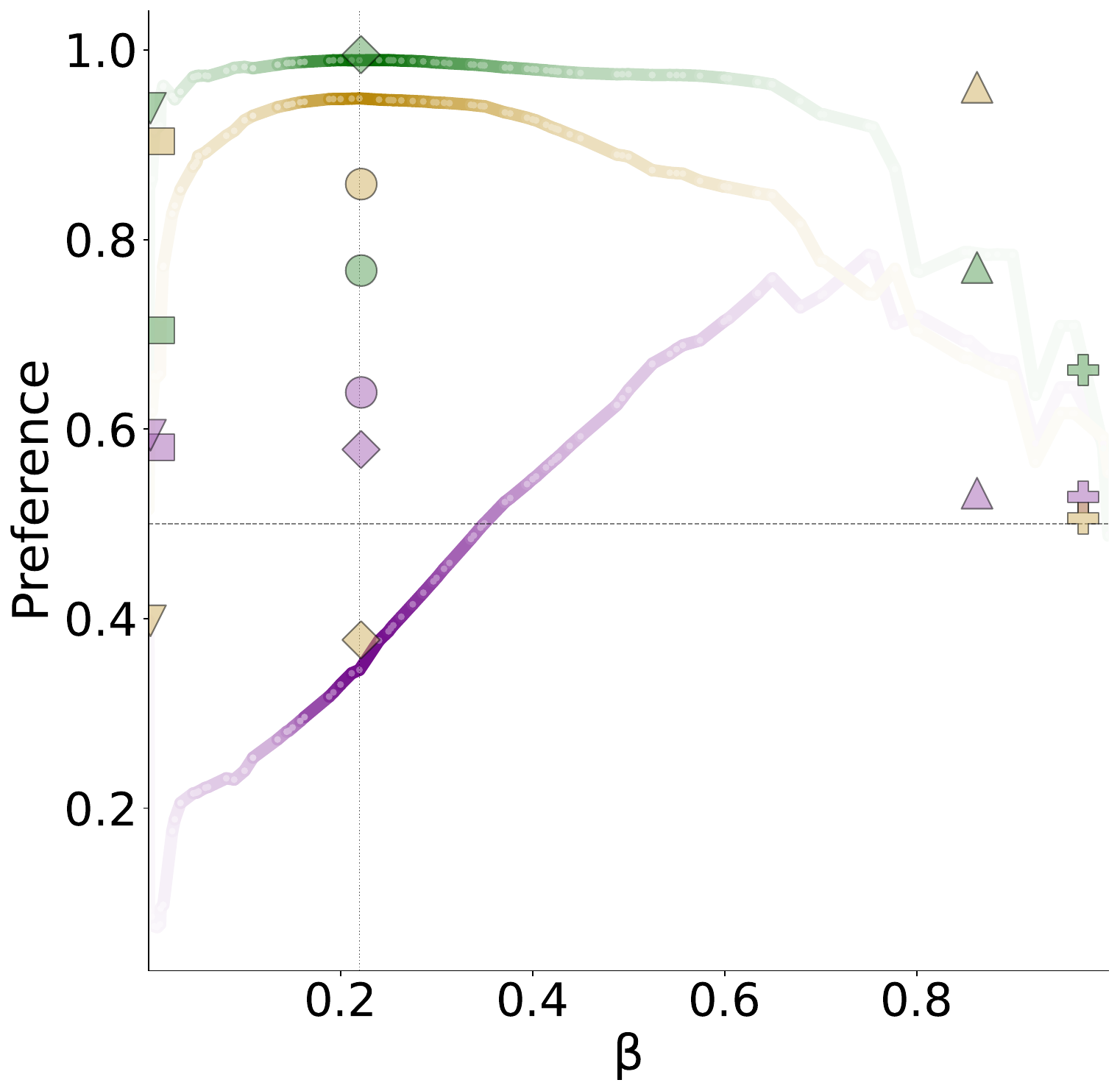}
\hfill
\includegraphics[height=5.5cm]{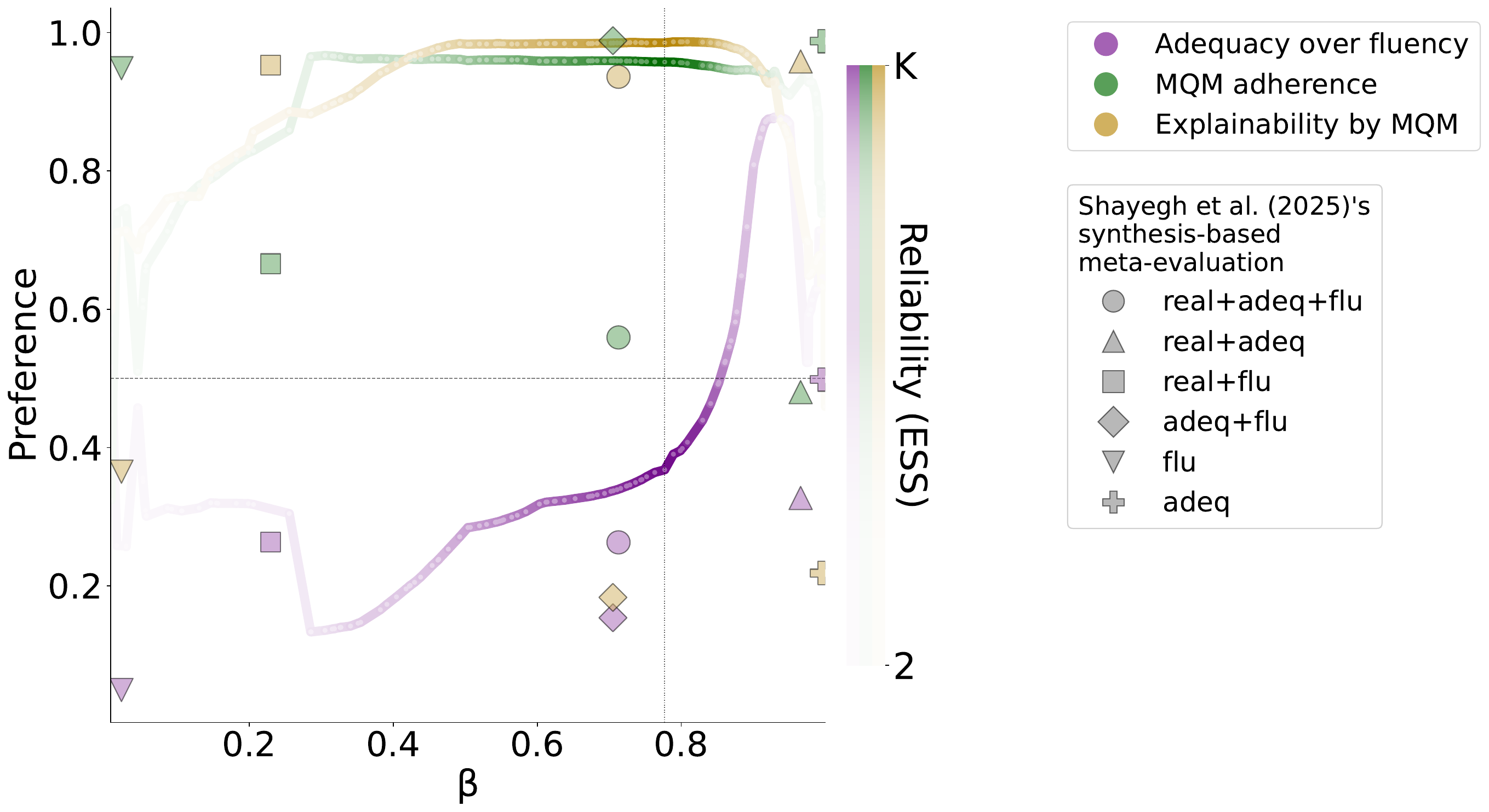}
\end{center}
\vspace{-12pt}
\caption{Main results on ende21 (left) and ende24 (right). Curve color intensity represents the effective sample size~($\mathrm{ESS}$); a lower $\mathrm{ESS}$ indicates fewer actively involved systems, reducing representativeness. The strictly increasing adequacy-over-fluency curve demonstrates that $\beta$ is an effective driver of this balance. The natural balance $\beta_0$ for each dataset is marked by a vertical dotted line, with its intersections indicating the original meta-evaluation's performance. Higher MQM adherence and explainability by MQM indicate better meta-evaluation internal consistency. Markers show the performance of the six synthesis-based meta-evaluation setups from~\citet{shayegh-etal-2025-feeding}, placed at their synthesized datasets' respective $\beta_0$.}
\label{fig:3}
\vspace{-5pt}
\end{figure*}

\paragraph{Base scorers.}
We consider automatic scorers (omitting human references) in the MQM datasets as the base scorers. This includes the competitive scorers as well as the diagnostic ones included by WMT. We exclude those without segment-level scores (e.g., corpus-level chrF). We also deduplicate perfectly duplicated scorer pairs. We provide the complete list of the base scorers in Appendix~\ref{app:datasetstats}. For each meta-evaluation and each dataset, we report the
average scores across all base scorers.

\paragraph{Dial grid.} The closer two $d$ values are, the harder it is for a meta-evaluation to recognize their relative identity. We empirically choose the $d$ grid for the best readability of results. For MQM adherence, $d$ takes $-1.5+3^{n/200}$, and for explainability by~MQM, $-3.5+4^{n/200}$, where $n\in\mathbb{N}: 0~\leq~n~\leq~200$, resulting in $200$ adjacent pairs per base scorer. For adequacy-over-fluency preference, $d$ ranges from $-0.5$ to $0.5$ with a step size of $0.02$, resulting in $50$ adjacent pairs per base scorer.

\subsection{Main Results}
\label{sec:results}

Figures\footnote{Results for some datasets are deferred to the Appendix~\ref{app:all_datasets} due to space constraints. All the claims are written with respect to all datasets, including those deferred to the appendix.}~\ref{fig:3} and~\ref{fig:results_all} show our method's performance across datasets over a $\beta$ sweep spanning $[\beta_{\min}, \beta_{\max}]$, plus explicitly every $\beta_{kk'}$ in the dataset. The monotonically increasing adequacy-over-fluency curves confirm that our reweighting mechanism effectively controls this balance. Furthermore, within a range around $\beta_0$ where $\mathrm{ESS}$ remains large, our method successfully preserves the original meta-evaluation's MQM adherence and explainability by MQM.

The strong correlation between $\mathrm{ESS}$ and both MQM adherence and explainability by MQM confirms $\mathrm{ESS}$ as a useful proxy for reliability, preventing severe evaluation distortion. Practically, $\mathrm{ESS}$ identifies the reachable range for controlling the adequacy--fluency balance without degrading meta-evaluation quality.

The permissible deviation from $\beta_0$ is fundamentally constrained by the dataset itself. In datasets like jazh24, most $\beta_{kk'}$ values concentrate within a narrow band, limiting reweighting capacity. Consequently, if a dataset lacks systems that intrinsically support a fluency-heavy evaluation, this imposes a strict limit on the achievable balance. 

Compared to \citet{shayegh-etal-2025-feeding}, our method consistently yields superior MQM adherence and/or explainability by MQM across all datasets, whether controlled by $\beta$ or by adequacy-over-fluency preference. The only exceptions occur at extreme adequacy-over-fluency values (e.g., in zhen23, ende22, and ende24) which their synthetic method reaches but our reweighting cannot. However, explainability by MQM heavily drops for both methods in these regimes, rendering those extremes highly distorted and practically infeasible anyway. Overall, our method strictly improves upon \citet{shayegh-etal-2025-feeding} in preserving the internal consistency of the meta-evaluation.

\begin{figure*}[t]
\begin{center}
\includegraphics[height=7.1cm]{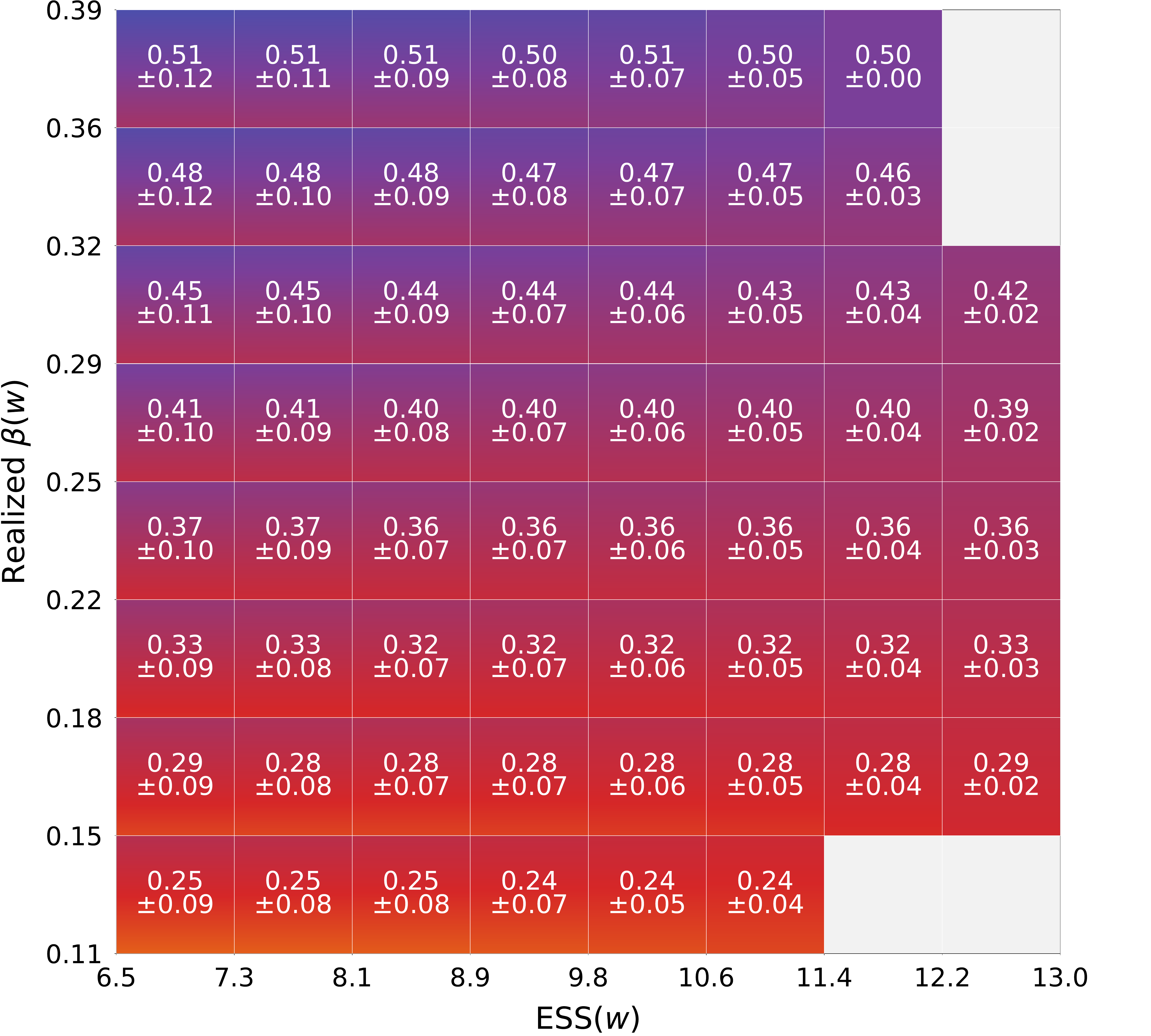}
\hfill
\includegraphics[height=7.1cm]{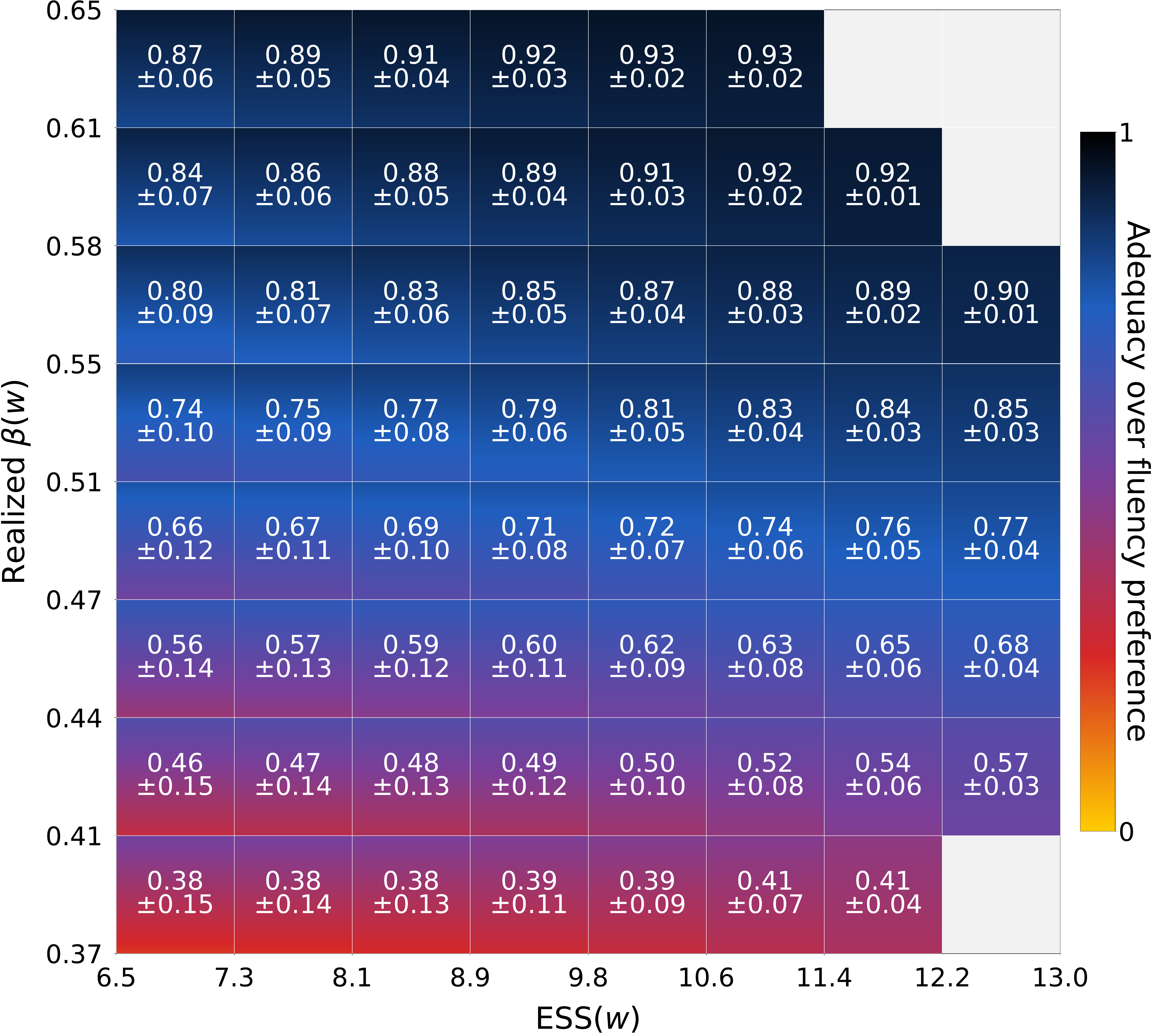}
\end{center}
\vspace{-10pt}
\caption{Adequacy-over-fluency preference across the $(\mathrm{ESS}, \beta)$ plane for ende21 (left) and ted\_zhen~(right). Each cell gives the mean and standard deviation over the weightings realizing that $(\mathrm{ESS}, \beta)$, color-ramped from $\mathrm{mean}{-}\mathrm{std}$ at its bottom edge to $\mathrm{mean}{+}\mathrm{std}$ at its top. Grey cells are unreachable.}
\label{fig:heatmap}
\vspace{-5pt}
\end{figure*}

Analyzing the markers in Figure~\ref{fig:3} reveals a critical limitation of synthetic systems. In \citet{shayegh-etal-2025-feeding}'s setups that retain real systems, explainability by MQM is preserved but MQM adherence degrades. Conversely, in their fully synthesized pools, MQM adherence is preserved while explainability by MQM collapses. Explainability by MQM measures the meta-evaluation's response to scorer noise while preserving natural biases; it is informed, but not strictly dictated, by MQM. This failure point of fully synthetic pools confirms our hypothesis that synthetic systems are unrepresentative. This stems from their construction: segments are regrouped to maximize between-system variance. By the law of total variance~\cite{soch2025statproofbook}, since total pool variance is fixed, altering the between-system component inevitably alters the within-system component. This yields systems with unrealistically low score variances, being unrepresentative and also distorting SPA's confidence estimations.

In Appendix~\ref{app:pearson}, we report similar experiments using Pearson correlation as the meta-metric. The observed trends remain consistent, further supporting the effectiveness of our reweighting method.

\subsection{Additional Analysis}

\paragraph{Variance Analysis.}
We investigate whether weight vectors with similar $\beta$ values can yield substantially different adequacy--fluency balances due to uncontrolled factors, which would invalidate $\beta$ as a reliable control parameter. The results in~\S\ref{sec:results} are limited to $\mathrm{P}_\mathrm{collis}$-minimized weight vectors for each $\beta$, hiding any same-$\beta$ variance. To address this, we sample numerous weight vectors on the simplex $w \ge 0:\mathbf 1^\top w = 1$, each realizing some $\beta(w)$, $\mathrm{ESS}(w)$, and adequacy-over-fluency preference. By grouping these into buckets of~$(\mathrm{ESS}, \beta)$ pairs, we compare within-bucket preference variances (indicating $\beta$-independent shifts) against across-bucket average trends (indicating the $\beta$-effect), demonstrating the reliability of $\beta$ as a control parameter.

We restrict the experiment to $w: \mathrm{ESS}(w)\in[\frac{K}{2}, K]$, given the unreliability of weighting with lower $\mathrm{ESS}$. We draw around $3.5$M weight vectors per dataset. These sets provide varying $\beta$ coverage across datasets. We partition $\beta$ and $\mathrm{ESS}$ ranges into $8$ equal-sized buckets each. We compute the adequacy-over-fluency preference for each weighting, presenting the results as a heatmap over this $8\times8$ grid. Because these weightings are unoptimized, they remain unconstrained beyond the $\beta$ and $\mathrm{ESS}$. Consequently, a cell's standard deviation strictly isolates the variation in SPA-based adequacy-over-fluency preference attributable to everything about $w$ {except}~$\beta$.

\begin{figure*}[t]
\begin{center}
\includegraphics[width=1\linewidth]{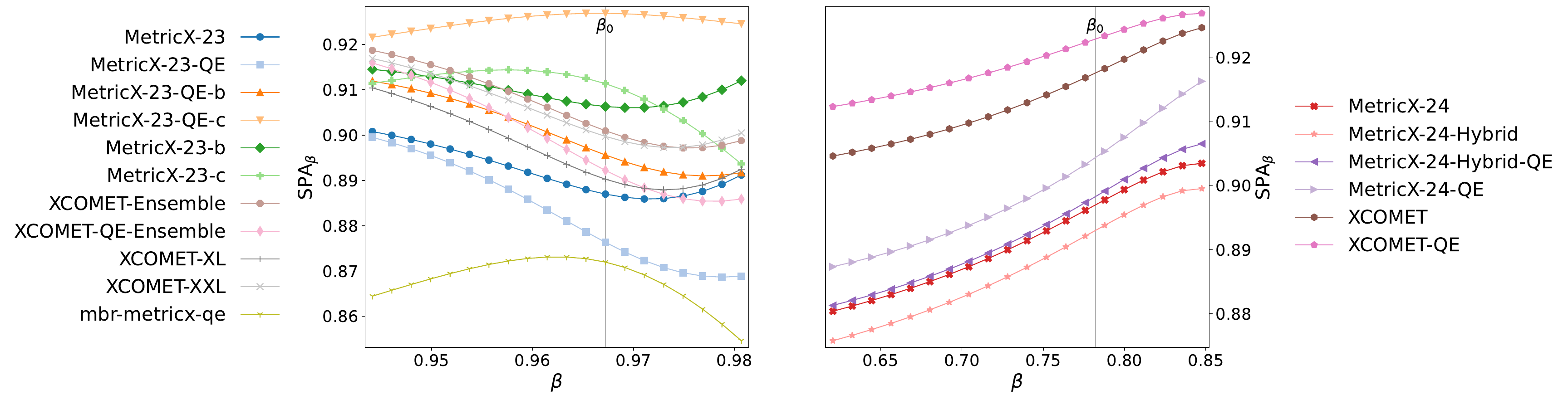}
\end{center}
\vspace{-12pt}
\caption{Weighted meta-evaluation performance (SPA$_\beta$) of MetricX and xCOMET variants as a function of the adequacy--fluency balance parameter $\beta$. In zhen23 (left), scorer rankings change depending on the target balance. In ende24 (right), relative performance remains stable across the reachable range. $\beta$ interval is bounded by $\mathrm{ESS} \ge K-1$, where $K$ is the total number of systems ($15$ for zhen23, $16$ for ende24).}
\label{fig:metrics}
\vspace{-5pt}
\end{figure*}

Figures\footnote{Results for some datasets are deferred to Appendix~\ref{app:all_heatmaps} due to space limits. Claims are made with respect to all datasets.}~\ref{fig:heatmap} and~\ref{fig:all_heatmaps} report the results. Each cell represents a pair of $\beta$ and $\mathrm{ESS}$ buckets. Reading down any column, preference increases with~$\beta$ at every $\mathrm{ESS}$. This trend holds for $99.1\%$ of vertically-adjacent cell pairs pooled across all datasets. Reading across rows, higher $\mathrm{ESS}$  sharply reduces variance: averaged over all $554$ horizontally-adjacent cell pairs, higher $\mathrm{ESS}$ exhibits $18.3\%$ lower same-$\beta$ standard deviation, and the standard deviation decreases in $99.8\%$ of those pairs. Lastly, comparing means to standard deviations confirms that same-$\beta$ variance does not obscure the $\beta$ signal. Despite partial overlap between cells, the expected differences remain substantial. A one-sided Welch's $t$-test~\citep{a967ba42-d0b9-3e65-976f-80cc6086b406} on every vertically-adjacent cell pair confirms this: $94.8\%$ of pairs show that the higher-$\beta$ cell has a significantly higher preference~($p{<}0.01$), and no pair is significant in the opposite direction.

The mapping from $\beta$ to the adequacy-over-fluency preference is dataset-dependent: not only does the preference measure preclude cross-dataset comparison by construction, identical $\beta$ values also yield empirically different preference scores across datasets. Despite this limitation, the underlying relationship between $\beta$ and the adequacy–fluency balance remains universal. Its direction, monotonicity, and robustness to same-$\beta$ variation hold across every dataset tested, licensing $\beta$ as a reliable control parameter.

\paragraph{Weighted Meta-Evaluation.}
Having validated our reweighting mechanism, we now apply it to compare variants of the popular scorers MetricX~\citep{juraska-etal-2024-metricx} and xCOMET~\citep{10.1162/tacl_a_00683} across a sweep of $\beta$. Figure~\ref{fig:metrics} reports the weighted soft pairwise accuracy (SPA) for the zhen23 and ende24 datasets.\footnote{We selected the two datasets empirically to best show two different possible scenarios: when the scorer ranking is or is not influenced by adequacy–fluency balance.}

The rankings in ende24 remain largely stable across the reachable $\beta$ range. In contrast, the performance trajectories in zhen23 intersect a lot, demonstrating that the relative ranking of scorers \textbf{can} be highly sensitive to the adequacy--fluency balance. Our proposed parameter and reweighting tool explicitly describe and control this behavior.

\paragraph{More results in the appendix.} In Appendix~\ref{sec:comp}, we empirically evaluate the computational efficiency of our search algorithm, ablate its pruning mechanisms, and compare it against a numerical optimization baseline. In Appendix~\ref{sec:loo}, we show that meta-evaluation is sensitive to random system removal, and that adequacy--fluency imbalance is not the main cause of this instability.

\section{Conclusion}

Machine translation meta-evaluation inherits the adequacy–fluency tradeoff, with its balance dictated by the specific combination of translation systems in the meta-evaluation data. Because this system set is a small, unrepresentative sample that fluctuates across datasets, the resulting adequacy–fluency balance is uncontrolled and subject to change.

We exposed this balance as a tunable control parameter, $\beta$. To achieve a target $\beta$, we reweighted existing systems while minimizing distortion from uniform weighting, ensuring the evaluation pool remains real and representative. We supported this with an exact optimization algorithm, equipped with theoretical guarantees and pruning mechanisms, to efficiently compute these weights.

To provide an internal diagnostic for meta-evaluation behavior, we introduced a scorer-augmentation framework that establishes a known relative identity for near-real scorers. Our experiments demonstrated that our reweighting method successfully controls the adequacy–fluency balance and preserves evaluation integrity, outperforming system-synthesis approaches. Finally, our analysis of popular scorers showed that scorer rankings can be sensitive to~$\beta$.

\paragraph{Calibration of $\boldsymbol\beta$ in practice.}
A specific numerical value of $\beta$ does not yield an identical adequacy-over-fluency preference across different datasets. This stems from inherent variations in language pairs, the correlation between Adequacy~MQM and Fluency~MQM, base scorer biases, and other unaccounted variables. Thus, $\beta$ functions as a directional control knob rather than an absolute metric. To determine the target $\beta$ for a specific real-world scenario (e.g., technical versus creative translation), we recommend practitioners craft a few translation pairs that exhibit a marginal adequacy–fluency tradeoff, where one translation is slightly more adequate and the other slightly more fluent. After establishing which direction their specific use case prefers, they can sweep $\beta$ until the meta-evaluation prioritizes scorers aligned with that preference.

\section*{Limitations}

Our reweighting mechanism is strictly bounded by the intrinsic diversity of the underlying dataset. If a dataset lacks systems that exhibit isolated adequacy or fluency errors (e.g., jazh24), the reachable range of $\beta$ is narrow. Forcing $\beta$ beyond this range requires sparse weights, which reduces the effective sample size ($\mathrm{ESS}$) and compromises the representativeness of the evaluation pool. Further expanding the admissible range for those datasets with limited flexibility remains a direction for future work.

Moreover, our framework depends on the provided MQM annotations and the chosen adequacy--fluency taxonomy, introducing two unaccounted sources of variance. First, WMT human labels contain noise and individual annotator bias. Our method does not model this uncertainty, meaning it propagates directly into system-level scores and~$\beta$. Second, categorizing errors as adequacy or fluency is subjective. A different taxonomy alters the score distributions, so the same numerical $\beta$ will behave differently under a different decomposition. Accounting for annotation uncertainty and taxonomy dependence requires further research.

Finally, controlling the adequacy--fluency balance does not resolve the broader instability of meta-evaluation. As demonstrated in Appendix~\ref{sec:loo}, scorer rankings remain sensitive to system selection even when $\beta$ is fixed. Future work must identify the root causes of system-selection instability. We would like to investigate other axes of variation across translation systems that disrupt scorer rankings.

\section*{Acknowledgements}
We thank Lili Mou for the insightful discussions that provided the initial motivation for this research, and Jan-Thorsten Peter for their valuable feedback on the final manuscript. We also extend our gratitude to the WMT reviewers, both for this manuscript and for our prior work last year, whose constructive feedback significantly shaped the direction of this study. This research was conducted independently and received no external funding.

\bibliography{custom}

\appendix

\section{Proofs}

\subsection{Proof of Theorem~\ref{thm:range}}
\label{app:proof-range}

\theoremrange*

\begin{proof}
The forward direction is given by Theorem~\ref{thm:admissible} applied to the full support. We now prove the reverse direction: Any $\hat\beta \in [\beta_{\min}, \beta_{\max}]$ is a feasible target for Problem~\ref{problem}.

$\beta(w)$ is continuous on the relative interior of $S$'s simplex face and that face is connected. Moreover, for two systems $k_1, k_2: \beta_{\min}(S){=}\beta_{k_1k_2}$, a weight vector $\hat w = \{k\neq k_1,k_2:  w_k{=}0\}$ drives $\beta(w){=}\beta_{\min}(S)$; symmetrically for $\beta_{\max}(S)$. So the image is a continuous interval with infimum $\beta_{\min}(S)$ and supremum $\beta_{\max}(S)$.
\end{proof}

\subsection{Proof of Theorem~\ref{thm:candidates}}
\label{app:proof-candidates}

\theoremcandidates*

\begin{proof}[]
\textbf{[Part (a)]} We prove this part by deriving the explicit polynomial to bound the stationary points.
By \eqref{eq:b} and Proposition~\ref{lem:pairwise-gap}:
\begin{align}
  b(w,\hat\beta) = &(1-\hat\beta)\sum_{k<k'} w_k w_{k'} (a_k - a_{k'})^2 \notag\\
  &- \hat\beta\sum_{k<k'} w_k w_{k'} (f_k - f_{k'})^2\label{eq:bpairwise}\\
  = &(1-\hat\beta)  (\sum_{k} w_k a_k^2 -w^\top aa^\top w) \notag\\
  &- \hat\beta (\sum_{k} w_k f_k^2 -w^\top ff^\top w)\\
  = &w^\top (\hat\beta ff^\top - (1-\hat\beta)aa^\top ) w \notag\\
  &+ \sum_k w_k\big( (1-\hat\beta) a_k^2 - \hat\beta f_k^2 \big)\\
  = &w^\top H w + w^\top g \label{eq:binhg}
\end{align}
where $H = \hat\beta ff^\top - (1-\hat\beta)aa^\top$ and $g_k = (1{-}\hat\beta) a_k^2 - \hat\beta f_k^2$, yielding a quadratic in $w$.

Eliminate the budget constraint $\mathbf 1^\top w=1$ using the affine transformation
\begin{align}
    w = u_0 + Nu\label{eq:affinetransform}
\end{align}
where $u_0$ is the uniform weighting on $S$ and $N$ has orthonormal columns spanning $\{v: \mathbf 1^\top v=0\}$.
Substituting~\eqref{eq:affinetransform} into~\eqref{eq:binhg} yields a single quadratic in the free variable $u$:
\begin{align}
    b(w, \hat\beta) = \tilde{b}(u, \hat\beta) = u^\top M u + u^\top r + c = 0 \label{eq:tildeb}
\end{align}
where $M = N^\top H N$, $r = N^\top (2Hu_0+g)$, and $c=b(u_0,\hat\beta)$.
We can restate the problem in the free variable~$u$:
\begin{align}
  \min_u&\ \tfrac12\ \|u\|^2 + \mathrm{const.} \;\; \text{s.t.}\;\;\tilde{b}(u, \hat\beta)=0\quad\label{eq:probinu}\tag{$\mathcal{P}_S^{(u)}$}
\end{align}

A point $u$ is a \emph{stationary point} of~\eqref{eq:probinu} if it is feasible and satisfies
\begin{align}
\nabla_u{\big(\tfrac12\|u\|^2 + \eta\cdot\tilde b(u, \hat\beta)\big)} = 0
\end{align}
for a multiplier $\eta \in \mathbb R$. Taking the derivative:
\begin{align}
u + \eta(2 M u + r) &= 0 \\
(I+2\eta M)\,u &= -\eta r \label{eq:stat-u}
\end{align}
We call the support $S$ \emph{degenerate} if all its pairwise balances $\beta_{kk'}$, $k,k' \in S$, coincide. We consider two scenarios:

\paragraph{[Case 1: $S$ is degenerate]} Let $\bar\beta$ be the common value of the pairwise balances in $S$. Because $S$ is admissible, Theorem~\ref{thm:admissible} forces $\hat\beta = \beta_{\min}(S)=\beta_{\max}(S)=\bar\beta$. For any pair $k,k' \in S$ we have $(a_k{-}a_{k'})^2 + (f_k{-}f_{k'})^2 > 0$, as no two systems share the same $(a_k,f_k)$; together with $\hat\beta = \beta_{kk'}$ and~\eqref{eq:bkk} this gives
\begin{align}
    (1-\hat\beta)(a_k{-}a_{k'})^2 - \hat\beta (f_k{-}f_{k'})^2 = 0
\end{align}
Substituting into~\eqref{eq:bpairwise}
\begin{align}
    &b(w,\hat\beta) = \notag\\ 
    &\sum_{k<k'} w_k w_{k'} \big( (1-\hat\beta)(a_k - a_{k'})^2 - \hat\beta(f_k - f_{k'})^2 \big) \notag\\
    &=\sum_{k<k'} w_k w_{k'} \cdot 0 = 0
\end{align}
for {every} $w$. So the quadratic $\tilde b(\cdot,\hat\beta)$ vanishes identically and $M = 0$, $r = 0$, $c = 0$ in~\eqref{eq:tildeb}. The stationarity condition~\eqref{eq:stat-u} reads $u = 0$, so $S$ carries exactly one stationary point: the uniform weighting $u_0$. It is feasible, minimizes the strictly convex objective $\tfrac12\|u\|^2$, and is the unique global optimum on $S$.

\paragraph{[Case 2: $S$ is non-degenerate]} The pairwise balances in $S$ are not all equal. Let $\tilde a$ and $\tilde f$ be the score vectors restricted to $S$ and centered by their uniform means over $S$:
\begin{align}
    \tilde a_k = a_k - \mu(a; u_0)\\
    \tilde f_k = f_k - \mu(f; u_0)
\end{align}
for $k \in S$. These are linearly independent: were $\tilde f = \kappa \tilde a$ for some $\kappa \in \mathbb R$, every pair would satisfy $(f_k{-}f_{k'}) = \kappa (a_k{-}a_{k'})$ and hence $\beta_{kk'} = (1+\kappa^2)^{-1}$, contradicting non-degeneracy; the same holds with the roles of $\tilde a$ and $\tilde f$ exchanged.

Within the zero-sum subspace defined by $N$, $M$ acts as the difference of two independent rank-1 outer products: $\hat\beta \tilde f \tilde f^\top$ and $(1-\hat\beta)\tilde a \tilde a^\top$. 
By definition of $\beta$, we have $0 < \hat\beta < 1$. Therefore, this difference provides one strictly positive and one strictly negative direction. Consequently, $M$ is indefinite with rank two, possessing exactly two nonzero eigenvalues:
\begin{align}
    \theta_1 > 0 > \theta_2 \label{eq:eigenvalues}
\end{align}

We use this structure to solve for the stationary candidates. Projecting~\eqref{eq:stat-u} onto the eigen basis $\{v_j\}_{j=1}^{|S|-1}$ of $M$, and defining the coordinate components $u_j = v_j^\top u$ and $r_j = v_j^\top r$, splits it into independent scalar equations of the form
\begin{align}
u_j (1+2\eta\theta_j) = -\eta r_j
\end{align}
The system is singular exactly at the two poles $\lambda_j = -1/(2\theta_j)$ for $j \in \{1,2\}$. We categorize the candidates by resolving across three subcases for~$\eta$:

\paragraph{[Case 2.1: $\eta = \lambda_j$ and $r_j \neq 0$]} The $j$-th equation reads $u_j \cdot 0 = -\lambda_j r_j \neq 0$. The system is inconsistent, yielding zero candidates.

\paragraph{[Case 2.2: $\eta = \lambda_j$ and $r_j = 0$]} The $j$-th equation is $0=0$, leaving the coordinate $u_j$ free.
For every other coordinate $m \neq j$, the coefficient $(1 + 2\lambda_j \theta_m) = 1 - \frac{\theta_m}{\theta_j}$ is strictly non-zero: $\theta_m{=}0$ for flat directions $m\ge3$ and $\frac{\theta_{3-j}}{\theta_j}<0$ by~\eqref{eq:eigenvalues}. Therefore, every $u_m:m \neq j$ is uniquely fixed to
\begin{align}
u_m = \frac{-\lambda_j r_m}{1 + 2\lambda_j \theta_m}
\end{align}
Defining $u^\circ$ as this baseline vector with $u_j^\circ = 0$, the full quadratic constraint separates into:
\begin{align}
    \tilde{b}(u, \hat\beta) = \theta_j u_j^2 + r_j u_j + \tilde{b}(u^\circ, \hat\beta) = 0
\end{align}
Because $r_j = 0$, the linear term vanishes, leaving $\theta_j u_j^2 = -\tilde{b}(u^\circ, \hat\beta)$. This yields at most two candidates:
\begin{align}
    u_j = \pm\sqrt{-\tilde{b}(u^\circ, \hat\beta)/\theta_j}\label{eq:case21}
\end{align}

\paragraph{[Case 2.3: $\eta \neq \lambda_j$]} In this case, $1+2\eta\theta_j \neq 0$, which gives
\begin{align}
    u_j(\eta) =
    \dfrac{-\eta {r}_j}{1 + 2\eta\theta_j}
\end{align}
which simplifies to $-\eta {r}_j$ for $j \ge 3$.
We substitute these components into \eqref{eq:tildeb}.

When $j \ge 3$, the quadratic terms vanish by $\theta_j = 0$, and the linear terms sum to:
\begin{align}
    \sum_{j \ge 3} {r}_j(-\eta {r}_j) = -\eta \sum_{j \ge 3} {r}_j^2\label{eq:flatpart}
\end{align}

For $j \in \{1,2\}$, substituting $u_j(\eta)$ into its quadratic and linear terms yields:
\begin{align}
    \theta_j &u_j(\eta)^2 + {r}_j u_j(\eta) \notag\\&= \theta_j \left(\frac{-\eta {r}_j}{1 + 2\eta\theta_j}\right)^2 + {r}_j \left(\frac{-\eta {r}_j}{1 + 2\eta\theta_j}\right) \\
    &= -\eta{r}_j^2\ \frac{ (1 + \eta\theta_j)}{(1 + 2\eta\theta_j)^2}\label{eq:quadpart}
\end{align}

Combining the constant $c$, \eqref{eq:flatpart}, and \eqref{eq:quadpart} for the two $j$ values, \eqref{eq:tildeb} becomes:
\begin{align}
    c - \eta \left[ \sum_{j=1,2} \frac{{r}_j^2 (1 + \theta_j \eta)}{(1 + 2\eta\theta_j)^2} + \sum_{j \ge 3} {r}_j^2 \right] = 0
\end{align}
Clearing denominators by multiplying through by the squared characteristic polynomial $(1+2\eta\theta_1)^2(1+2\eta\theta_2)^2$ yields a polynomial equation $\Phi_{S,\hat\beta}(\eta)=0$, where
\begin{align}
    \Phi_{S,\hat\beta}(\eta) = \;& c\,(1{+}2\eta\theta_1)^2(1{+}2\eta\theta_2)^2 \notag\\
    &- \eta\, {r}_1^2\,(1{+}\theta_1\eta)(1{+}2\eta\theta_2)^2 \notag\\
    &- \eta\, {r}_2^2\,(1{+}\theta_2\eta)(1{+}2\eta\theta_1)^2 \notag\\
    &- \eta \sum_{j \ge 3} {r}_j^2\,(1{+}2\eta\theta_1)^2(1{+}2\eta\theta_2)^2 \label{eq:phidef}
\end{align}
has degree at most five.

It remains to bound the number of candidates. We proceed in three steps. First, we evaluate $\Phi_{S,\hat\beta}$ at $\lambda_1$ and $\lambda_2$. We show that $\lambda_j$ is a root only when ${r}_j = 0$ (Case~2.2), and that it is then a root of multiplicity two. Second, we count the candidates when $\Phi_{S,\hat\beta}$ is not the zero polynomial. A root $\eta \notin \{\lambda_1,\lambda_2\}$ gives one candidate (Case~2.3) and costs one of the five roots, while a root at $\lambda_j$ gives two candidates and costs two of them by the first step. Hence the bound of five holds. Third, we treat the case where $\Phi_{S,\hat\beta}$ is the zero polynomial. In this case, counting roots does not apply and we solve the stationarity condition~\eqref{eq:stat-u} directly.

\paragraph{[Poles of $\Phi_{S,\hat\beta}$]} At $\eta = \lambda_1$ the factor $(1+2\eta\theta_1)$ vanishes, killing every term of~\eqref{eq:phidef} except the second one, so
\begin{align}
    \Phi_{S,\hat\beta}(\lambda_1) &= -\lambda_1 {r}_1^2 (1+\theta_1\lambda_1)(1+2\lambda_1\theta_2)^2 \notag\\
    &= -\frac{\lambda_1 {r}_1^2}{2}\Big(1 - \frac{\theta_2}{\theta_1}\Big)^2 \label{eq:polevalue}
\end{align}
using $\lambda_1 = -1/(2\theta_1)$, and symmetrically for $\lambda_2$ with the indices $1$ and $2$ exchanged. Because $\lambda_j{\neq}0$ and $\theta_1\theta_2 < 0$ by~\eqref{eq:eigenvalues}, $\lambda_j$ is a root of $\Phi_{S,\hat\beta}$ if and only if ${r}_j = 0$, which is exactly Case~2.2; and when ${r}_j = 0$, every remaining term of~\eqref{eq:phidef} carries the factor $(1+2\eta\theta_j)^2$, making $\lambda_j$ a double root.

\paragraph{[Counting when $\Phi_{S,\hat\beta} \not\equiv 0$]} The polynomial has at most five roots counted with multiplicity. Each root $\eta \notin \{\lambda_1,\lambda_2\}$ contributes exactly one candidate through Case~2.3. Each $\lambda_j$ that is a root uses two roots and contributes at most two candidates through~\eqref{eq:case21}. Summing over all configurations, the number of candidates is at most five.

\paragraph{[Counting when $\Phi_{S,\hat\beta}{\equiv}0$]} We have $c = \Phi_{S,\hat\beta}(0) = 0$ and ${r}_1={r}_2=0$ by~\eqref{eq:polevalue}. The remaining (last) term of~\eqref{eq:phidef} forces $\sum_{j\ge3}{r}_j^2=0$ and hence $r=0$. The stationarity condition~\eqref{eq:stat-u} becomes $(I+2\eta M)u=0$, which in the eigen coordinates reads $u_j(1+2\eta\theta_j) = 0$ for every $j$. If $\eta \notin \{\lambda_1,\lambda_2\}$, all the coefficients $1+2\eta\theta_j$ are nonzero and $u = 0$. If $\eta = \lambda_j$ for some $j \in \{1,2\}$, the $j$-th coefficient vanishes and leaves $u_j$ free, while every other coefficient is nonzero as shown in Case~2.2; the solution is thus $u = u_j v_j$, a multiple of the $j$-th eigenvector of $M$. Feasibility~\eqref{eq:tildeb} then gives
\begin{align}
    \tilde b(u_j v_j, \hat\beta) = \theta_j u_j^2 + u_j\, {r}_j + c = \theta_j u_j^2 = 0
\end{align}
using $r = 0$ and $c = 0$. $\theta_j \neq 0$ forces $u_j = 0$. So $u = 0$ in every case, and $S$ carries the single stationary point $u_0$.

In all scenarios $S$ carries at most five stationary points, each obtained in closed form from the roots of a polynomial of degree at most five.

\paragraph{[Part (b)]}
By assumption, $w^\star$ is a global optimum of~\eqref{problem} with support $S=S(w^\star)$, and $S$ is admissible by Theorem~\ref{thm:admissible}. Because $w^\star_k > 0$ for all $k \in S$, the non-negativity constraints are inactive: $w^\star$ locally minimizes $\tfrac12\|w\|^2$ subject to $\mathbf 1^\top w = 1$ and $b(w,\hat\beta)=0$ on~$S$. Write $u^\star$ for its reduced coordinate, $w^\star = u_0 + Nu^\star$. We consider two scenarios:

\paragraph{[The constraint gradients are linearly independent]} The linear independence constraint qualification holds at $w^\star$, so $u^\star$ is a stationary point of~\eqref{eq:probinu} and is enumerated by Part~(a).

\paragraph{[The constraint gradients are linearly dependent]} Dependence forces $\nabla_w b(w^\star, \hat\beta) = \mu \mathbf 1$ for some $\mu \in \mathbb R$. The chain rule gives the reduced gradient
\begin{align}
    \nabla_u \tilde b(u^\star, \hat\beta) &= \nabla_u\big[b(u_0 + Nu, \hat\beta)\big]\big|_{u = u^\star} \notag\\
    &= N^\top \nabla_w b(w^\star, \hat\beta) = \mu N^\top \mathbf 1\label{eq:redgradzero}
\end{align}
which is equal to zero since the columns of $N$ are orthogonal to~$\mathbf 1$.
Together with feasibility $\tilde b(u^\star,\hat\beta)=0$, expanding the quadratic~\eqref{eq:tildeb} around $u^\star$ along a direction $\delta$ with step size $t$ gives
\begin{align}
    &\tilde b(u^\star + t\delta, \hat\beta) \notag\\
    &= \underbrace{\tilde b(u^\star,\hat\beta)}_{=0} + \underbrace{t\,\nabla_u \tilde b(u^\star,\hat\beta)^\top \delta}_{=0} + \tfrac{t^2}{2}\delta^\top \underbrace{\nabla_u^2 \tilde b(u^\star,\hat\beta)}_{=2M}\delta \notag\\
    &= t^2 \delta^\top M \delta
\end{align}
Call $\delta$ a {null direction} if $\delta^\top M \delta = 0$. Along such a direction the line $u(t) = u^\star + t\delta$ satisfies the constraint $\tilde b(\cdot,\hat\beta)=0$ for every $t \in \mathbb R$. Because $w^\star_k > 0$ for all $k \in S$, the points of this line with $|t|$ small are feasible for~\eqref{problem}, so $t=0$ is a local minimizer of
\begin{align}
    \varphi(t) &= \|u^\star + t\delta\|^2 \\&= \|u^\star\|^2 + 2t\,(u^{\star\top}\delta) + t^2\|\delta\|^2
\end{align}
This function is a convex quadratic, so $\varphi'(0) = 0$, which asserts
\begin{align}
    u^{\star\top}\delta = 0 \label{eq:orthnull}
\end{align}
for every null direction $\delta$.

If $S$ is degenerate, then $M = 0$ and every direction is null, so~\eqref{eq:orthnull} forces $u^\star = 0$. Otherwise, $M$ is indefinite of rank two by~\eqref{eq:eigenvalues}, and its null directions span the reduced space: they contain the kernel directions $\{v_j\}_{j\ge3}$ together with $\sqrt{-\theta_2}\,v_1 \pm \sqrt{\theta_1}\,v_2$, whose span is all of $\mathbb R^{|S|-1}$. Orthogonality~\eqref{eq:orthnull} to a spanning set again forces $u^\star = 0$.
In both cases $w^\star$ is the uniform weighting on $S$. Moreover $r = -2Mu^\star = 0$ by~\eqref{eq:redgradzero} and $c = \tilde b(0,\hat\beta) = 0$ by feasibility, so every term of~\eqref{eq:phidef} vanishes and $\Phi_{S,\hat\beta} \equiv 0$. Part~(a) enumerates $u^\star = 0$ as the single stationary point on $S$ in this situation, both when $S$ is degenerate and when $\Phi_{S,\hat\beta} \equiv 0$.
The global optimum is therefore contained within the candidate set enumerated in Part~(a).
\end{proof}

\subsection{Proof of Theorem~\ref{thm:admissible}}
\label{app:proof-admissible}

\theoremadmissible*

\begin{proof}
For any $w_S$ with at least two positive values, \eqref{eq:beta} gives
\begin{align}
    \hat\beta = \beta(w_S) = \frac{\sigma(a; w_S)^2}{\sigma(a;w_s)^2+\sigma(f;w_s)^2}
\end{align}
Given $w_S = \{w_k\}_k$, by Proposition~\ref{lem:pairwise-gap} we have
\begin{align}
    \hat\beta = \frac{\sum_{k<k'}w_kw_{k'}(a_k{-}a_k')^2}{\sum_{k<k'} w_kw_{k'}\big[(a_k{-}a_k')^2{+}(f_k{-}f_k')^2 \big]}
\end{align}
Letting
\begin{align}
    p_{kk'}= \frac{w_kw_{k'} \big[ (a_k{-}a_k')^2 + (f_k{-}f_k')^2 \big]}{\sum_{l<l'} w_lw_{l'}\big[ (a_l{-}a_l')^2 + (f_l{-}f_l')^2 \big]}
\end{align}
with $p_{kk'}\ge0$ and $\sum_{k<k'} p_{kk'}=1$, we substitute~\eqref{eq:bkk}:
\begin{align}
    \hat\beta = \sum_{k<k'} p_{kk'} \beta_{kk'}
\end{align}
Therefore, $\hat\beta$ is a strict convex combination of pairwise balances over non-zero weights $\{\beta_{kk'}\}_{w_kw_{k'}>0}$, hence strictly between their smallest and their largest, unless they all coincide, in which case it equals their common value. We have
\begin{align}
    \{\beta_{kk'}\}_{w_kw_{k'}>0} \subseteq \{\beta_{kk'}\}_{kk'} \in [\beta_{\min}(S), \beta_{\max}(S)]
\end{align}
thus $\beta_{\min}(S) \le \hat\beta \le \beta_{\max}(S)$.
\end{proof}

\subsection{Proof of Theorem~\ref{thm:size-pruning}}
\label{app:proof-size-bound}

\theoremsizebound*

\begin{proof}
$S$ is the support of $w$, meaning $w_k = 0$ for $k \notin S$. The collision probability is defined as:
\begin{align}
\mathrm{P}_\mathrm{collis}(w) = \sum_{k} w_k^2 = \sum_{k \in S} w_k^2
\end{align}
By Cauchy–Schwarz inequality:
\begin{align}
\big(\underbrace{\sum_{k\in S} w_k}_{=1} \cdot 1\big)^2 \le \big(\underbrace{\sum_{k\in S} w_k^2}_{\mathrm{P}_\mathrm{collis}(w)}\big) \big(\underbrace{\sum_{k\in S} 1^2}_{|S|}\big)\label{eq:Cauchy–Schwarz}
\end{align}
Equality holds in the Cauchy–Schwarz inequality iff the vectors $(w_k)_{k\in S}$ and $(1)_{k\in S}$ are proportional, meaning $w_k$ is a constant across all $k \in S$. Given $\sum_{k\in S} w_k = 1$, this forces $w_k = 1/|S|$ for all $k \in S$. Therefore, equality holds iff $w$ is uniform on $S$. 
Dividing both sides of~\eqref{eq:Cauchy–Schwarz} by $|S|$ yields the claim.
\end{proof}

\begin{table*}[t]
\centering
\resizebox{\textwidth}{!}{%
\begin{tabular}{llll rrrr rrrr}
\toprule
\multicolumn{4}{l}{$\beta$-sweep range:} & \multicolumn{4}{c}{$\hat\beta \in [\beta_0 - 0.05,\ \beta_0 + 0.05]$} & \multicolumn{4}{c}{Full range $\hat\beta \in [\beta_{\min}, \beta_{\max}]$} \\
\cmidrule(lr){5-8} \cmidrule(lr){9-12}
& \multicolumn{3}{c}{Config} & \# Supports & \# Candidates & Success \% & \# FLOPs & \# Supports & \# Candidates & Success \% & \# FLOPs \\
\midrule
& \multicolumn{3}{l}{Theorem}\\
& \ref{thm:admissible} & \ref{thm:size-pruning} & \ref{thm:certificate}\\
\multirow{8}{*}{\rotatebox[origin=c]{90}{Our algorithm}}
& \checkmark & \checkmark & \checkmark & 50 & 50 & 100\% & 1.3M & 951.4K & 2.1M & 100\% & 26{,}304.8M \\
& \checkmark & \checkmark & -- & 50 & 50 & 100\% & 1.3M & 951.7K & 2.1M & 100\% & 26{,}313.2M \\
& -- & \checkmark & \checkmark & 50 & 50 & 100\% & 1.3M & 1.0M & 2.3M & 100\% & 28{,}146.9M \\
& \checkmark & -- & \checkmark & 50 & 50 & 100\% & 1.3M & 1.8M & 3.9M & 100\% & 48{,}911.1M \\
& -- & \checkmark & -- & 50 & 50 & 100\% & 1.3M & 1.0M & 2.3M & 100\% & 28{,}155.4M \\
& -- & -- & \checkmark & 50 & 50 & 100\% & 1.3M & 1.9M & 4.3M & 100\% & 52{,}520.6M \\
& \checkmark & -- & -- & 3.2M & 4.8M & 100\% & 87.4B & 3.0M & 5.8M & 100\% & 82{,}239.1M \\
& -- & -- & -- & 3.3M & 4.9M & 100\% & 88.8B & 3.1M & 6.2M & 100\% & 86{,}307.4M \\
\midrule
& N & \multicolumn{2}{l}{Tolerance}\\
\multirow{4}{*}{\rotatebox[origin=c]{90}{Numerical}}
& $2$ & \multicolumn{2}{c}{$10^{-3}$} & --- & --- & 18\% & 1.6M & --- & --- & 16\% & 1.6M \\
& $3$ & \multicolumn{2}{c}{$10^{-3}$} & --- & --- & 80\% & 9.3M & --- & --- & 62\% & 9.3M \\
& $4$ & \multicolumn{2}{c}{$10^{-3}$} & --- & --- & 100\% & 44.4M & --- & --- & 88\% & 44.4M \\
& $5$ & \multicolumn{2}{c}{$10^{-3}$} & --- & --- & 100\% & 177.5M & --- & --- & 100\% & 177.5M \\
\bottomrule
\end{tabular}%
}
\caption{Ablation study of our algorithm on the ende24 dataset ($K=16$). We sweep $\hat\beta$ across 50 values in each of two distinct regimes. \# Supports (\# Candidates) is the number of supports (candidates) visited prior to termination.}
\label{tab:solve-exact-operation-counts}
\end{table*}

\begin{table}[t]
\centering
\begin{tabular}{l rrr}
\toprule
& \multicolumn{3}{c}{Theorem} \\
\cmidrule(lr){2-4}
$\beta$-sweep range & \ref{thm:admissible} & \ref{thm:size-pruning} & \ref{thm:certificate} \\
\midrule
$[\beta_0{-}0.05,\ \beta_0{+}0.05]$ & 0 & 0 & 50 \\
$[\beta_{\min}, \beta_{\max}]$ & 65.5K & 29 & 19 \\
\bottomrule
\end{tabular}
\caption{Number of times each of Theorems~\ref{thm:admissible}--\ref{thm:certificate} fires over a 50-point $\beta$-sweep.}
\label{tab:solve-exact-theorem-invocations}
\end{table}

\subsection{Proof of Theorem~\ref{thm:certificate}}
\label{app:proof-certificate}

\theoremcertificate*

\begin{proof}
The balance constraint $b(w, \hat\beta) = 0$ is governed by an indefinite quadratic matrix $H = \hat\beta ff^\top - (1-\hat\beta)aa^\top$, according to~\eqref{eq:binhg} in the proof of Theorem~\ref{thm:candidates}. This makes the constraint domain non-convex.
Therefore, we verify the Karush–Kuhn–Tucker~\citep[KKT;][]{WU200746} conditions, as well as the second-order sufficiency criteria adapted for non-convex quadratically constrained quadratic program~\citep[QCQPs;][]{park2017general}. 

\paragraph{[KKT Conditions]}
Consider the Lagrangian for Problem~\eqref{problem}:
\begin{align}
\mathcal L&(w, \eta, \lambda, \gamma) =\tag{$\mathcal L$}\label{eq:Lagrangian}\\&\tfrac12 \Vert{}w\Vert{}^2 + \eta \, b(w, \hat\beta) + \lambda \mathbf 1^\top w=1 - \sum_{k=1}^K \gamma_k w_k\notag
\end{align}
where $\eta,\lambda \in \mathbb R$, and $\gamma_k \ge 0$ are the dual variables for the non-negativity constraints $w_k \ge 0$ so that $\gamma_k w_k=0$ for all $k$ (complementary slackness).
Differentiating~\eqref{eq:Lagrangian} with respect to each weight $w_k$ yields the general stationarity condition
\begin{align}
    \frac{\partial \mathcal L}{\partial w_k} = w_k + \eta \frac{\partial b(w, \hat\beta)}{\partial w_k} + \lambda - \gamma_k = 0\label{eq:stationarity}
\end{align}
for all $k$. To validate this, we split the analysis into two parts based on whether $k \in S(w)$:

\paragraph{[Case $k \in S(w)$]}
By definition, $w_k > 0$. Complementary slackness forces the corresponding dual variable $\gamma_k = 0$. Substituting this into \eqref{eq:stationarity}:
\begin{align}
w_k + \eta \frac{\partial b(w, \hat\beta)}{\partial w_k} + \lambda = 0 \quad \text{for }\ k \in S(w)\label{eq:cond1out}
\end{align}
which is given by condition~(i).

\paragraph{[Case $k \not\in S(w)$]}
By substituting $w_k = 0$ into \eqref{eq:stationarity} and keeping only $\gamma_k$ on one side:
\begin{align}
    \gamma_k =\lambda + \eta \frac{\partial b(w, \hat\beta)}{\partial w_k} \ge 0 \quad \text{for } k \not\in S(w)\label{eq:cond2out}
\end{align}
which is given by condition~(ii). Note that the inequality is given by $\gamma_k$'s definition. 

Together, \eqref{eq:cond1out} and \eqref{eq:cond2out} satisfy all KKT conditions for Problem~\eqref{problem}, which are necessary but not sufficient. Next, we verify second-order sufficiency criteria.

\paragraph{[Second-order sufficiency criteria]}
We have the Hessian of~\eqref{eq:Lagrangian}
\begin{align}
    \nabla_w^2 \mathcal L = I + 2\eta H \label{eq:laghessian}
\end{align}
where $H$ is the indefinite matrix $H = \hat\beta ff^\top - (1-\hat\beta)aa^\top$, following~\eqref{eq:binhg}.
For $w$ to be a global minimizer, the Lagrangian Hessian must be positive semi-definite along all directions $v$ that preserve the equality constraints $\mathbf 1^\top v = 0$ and $\nabla b(w, \hat\beta)^\top v = 0$~\citep{more1983computing}. Through standard null-space elimination~\citep{nocedal2006numerical}, this requirement projects onto the zero-sum subspace, demanding
\begin{align}
    v^\top \big(I + 2\eta H\big) v \ge 0 \label{eq:subspacecurv}
\end{align}
for all vector directions $v$ such that $\mathbf 1^\top v = 0$.

To validate this condition without explicitly constructing high-dimensional sweeps, we invoke the structural reduction detailed in~(\ref{eq:affinetransform}--\ref{eq:tildeb}) in the proof of Theorem~\ref{thm:candidates}. As shown in that proof, eliminating the budget constraint via the affine transformation $w = u_0 + Nu$ restricts the quadratic form to the zero-sum subspace via the reduced matrix $M = N^\top H N$.
According to Theorem~\ref{thm:candidates}'s proof, Case 2, $M$ acts as the difference of two rank-1 outer products and has rank two on the relevant subspace with two non-zero eigenvalues $\theta_1 > 0 > \theta_2$. Projecting the quadratic form onto the orthonormal basis of this two-dimensional span $\{\tilde{a}, \tilde{f}\}$ transforms the test of~\eqref{eq:subspacecurv} into the explicit $2\times2$ symmetric matrix test defined in condition~(iii). That is, condition (iii) requires that $H(\eta, \hat\beta) \succeq 0$, and requiring $H(\eta, \hat\beta) \succeq 0$ is strictly equivalent to $v^\top(I + 2\eta H)v \ge 0$ for all $v$ with $\mathbf 1^\top v = 0$, which satisfies the second-order sufficiency criteria.

Since $w$ satisfies KKT conditions across both active and inactive supports via (i) and (ii), and the second-order subspace sufficiency condition via (iii), $w$ is certified to be a global optimum of Problem~\eqref{problem}.
\end{proof}

\section{Efficiency and Ablation Analysis}\label{sec:comp}

We evaluate the computational efficiency of Algorithm~\ref{alg:reweight} and ablate its three pruning mechanisms using the ende24 dataset\footnote{The choice of this dataset is due to its large $K$. The trends and conclusions are similar across the datasets.} ($K=16$). We sweep $\hat\beta$ across 50 values in two regimes: a narrow band around the natural balance $\beta_0$, and the full reachable range $[\beta_{\min}, \beta_{\max}]$. The former represents the practical use case, as performance is more reliable here (\S\ref{sec:results}) and minor adjustments reflect realistic demands. The latter serves as an extreme regime to stress-test the algorithm. We compare against a numerical optimization baseline using varying grid resolutions $N$ and a fixed tolerance of $10^{-3}$. Tables~\ref{tab:solve-exact-operation-counts} and~\ref{tab:solve-exact-theorem-invocations} summarize the results.

\paragraph{Near natural balance.}
When the target $\hat\beta$ is close to $\beta_0$, the optimal weighting remains close to uniform. Table~\ref{tab:solve-exact-theorem-invocations} shows that Theorem~\ref{thm:certificate} (exit certificate) triggers immediately on the full support for all 50 targets. Consequently, the search evaluates exactly one support per target. Table~\ref{tab:solve-exact-operation-counts} demonstrates the resulting efficiency: solving the narrow sweep requires only 1.3M FLOPs, substantially faster than the numerical baseline required to guarantee 100\% success, while strictly guaranteeing global optimality. Without early exit, the algorithm exhaustively evaluates over 3.2M supports, increasing FLOPs by nearly five orders of magnitude.

\paragraph{Full range.}
Reaching extreme $\hat\beta$ targets requires sparse weight vectors. In this regime, Theorem~\ref{thm:admissible} (admissible supports) drives the pruning, firing over 65.5K times (Table~\ref{tab:solve-exact-theorem-invocations}) to safely skip infeasible branches. The exact algorithm demands more computation than the numerical baseline across the full range, which is the cost of guaranteeing the global minimum. The ablation confirms that combining all pruning theorems reduces the full-range computational cost by more than $3\times$ compared to the unpruned exact search.

\begin{figure}[t]
\begin{center}
\includegraphics[width=1\linewidth]{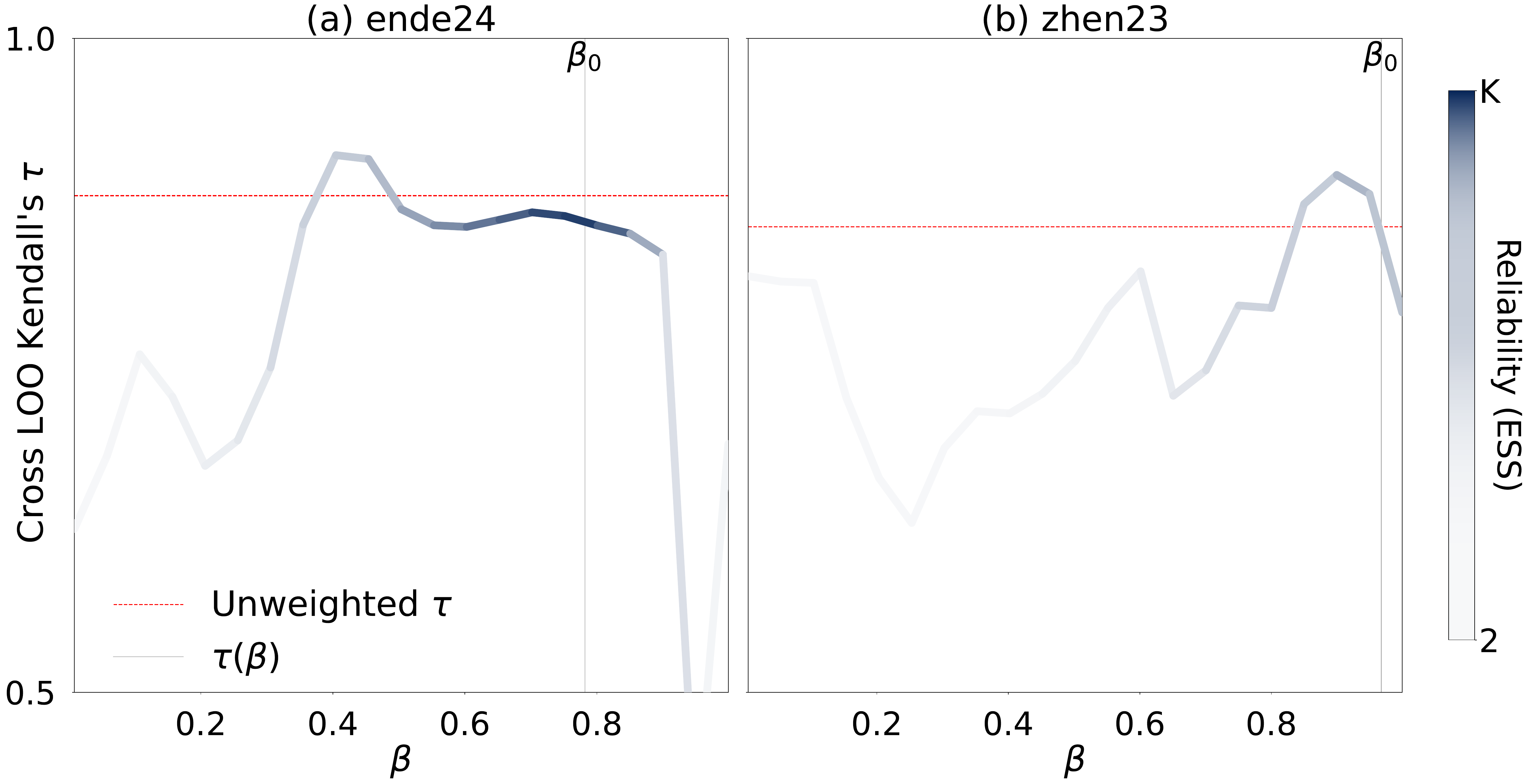}
\end{center}
\caption{Leave-one-out stability with and without fixing $\beta$. The horizontal red line corresponds to the baseline values in Table~\ref{tab:dataset_tau}. The blue curve shows the pooled pairwise Kendall's $\tau$ correlation of scorer rankings after reweighting each subset to the fixed $\beta$. Curve intensity represents $\mathrm{ESS}$ (reliability).}
\label{fig:instability_placeholder}
\end{figure}

\begin{table}[t]
\centering
\resizebox{0.85\linewidth}{!}{%
\begin{tabular}{lcc}
\toprule
Dataset & \# Systems & Cross-LOO Kendall's $\tau$ \\
\midrule
ende21 & 13 & 0.8241 \\
zhen21 & 13 & 0.6659 \\
ted\_ende & 13 & 0.8634 \\
ted\_zhen & 13 & 0.8774 \\
ende22 & 13 & 0.7497 \\
zhen22 & 14 & 0.8864 \\
zhen23 & 15 & 0.8560 \\
heen23 & 12 & 0.8470 \\
ende24 & 16 & 0.8797 \\
enes24 & 13 & 0.8815 \\
jazh24 & 13 & 0.7931 \\
\bottomrule
\end{tabular}
}
\caption{Pooled pairwise Kendall's $\tau$ correlation of scorer rankings across leave-one-out (LOO) system subsets for each dataset. The sub-optimal correlation values indicate that meta-evaluation rankings are sensitive to system selection.}
\label{tab:dataset_tau}
\end{table}

\section{Does Controlling Adequacy--Fluency Balance Stabilize Meta-Evaluation?}\label{sec:loo}

Meta-evaluation rankings vary across datasets. While intrinsic differences partially explain this, instability also stems directly from system selection within a single dataset. We quantify this by computing the pooled pairwise Kendall's $\tau$~\citep{kendall1938new} of scorer rankings across all leave-one-out (LOO) system subsets for each dataset. Table~\ref{tab:dataset_tau} reports these results. These values confirm that meta-evaluation outcomes are sensitive to the specific MT systems included in the pool.

A reasonable hypothesis is that this within-dataset instability is partly driven by varying adequacy--fluency balances across subsets. To test this, we evaluate whether fixing $\beta$ stabilizes the scorer rankings across the LOO subsets.\footnote{In preliminary studies, we also controlled the adequacy-over-fluency preference directly (see \S\ref{sec:experiments}) to account for different subsets possessing different balance--$\beta$ relations. This alternative control also failed to resolve the instability.}
Our results reject this hypothesis. As illustrated for two representative datasets in Figure~\ref{fig:instability_placeholder}, precisely controlling $\beta$ does not resolve the ranking variance. While the adequacy--fluency balance may contribute to the instability of specific datasets, it is not the primary driver of the system-selection sensitivity observed in recent WMT evaluations. Identifying the root cause of this instability remains an open problem beyond the scope of this work.

\begin{table*}[t]
\centering
\begin{tabular}{lrrrrrrr}
\toprule
Dataset & \#Systems & \#Scorers & \#Segments & $\beta_0$ & $\beta_{\min}$ & $\beta_{\max}$ & $\beta_{\max}-\beta_{\min}$ \\
\midrule
ende24 & 16 & 30 & 487 & 0.7821 & 0.0074 & 1.0000 & 0.9926 \\
enes24 & 13 & 30 & 623 & 0.9657 & 0.0005 & 1.0000 & 0.9994 \\
jazh24 & 13 & 31 & 560 & 0.9954 & 0.4059 & 1.0000 & 0.5941 \\
heen23 & 12 & 43 & 820 & 0.9868 & 0.0022 & 1.0000 & 0.9978 \\
zhen23 & 15 & 43 & 1177 & 0.9672 & 0.0034 & 1.0000 & 0.9966 \\
ende22 & 13 & 31 & 1315 & 0.9338 & 0.0010 & 1.0000 & 0.9990 \\
zhen22 & 14 & 31 & 1875 & 0.9767 & 0.0071 & 1.0000 & 0.9928 \\
ende21 & 13 & 27 & 527 & 0.2273 & 0.0000 & 0.9996 & 0.9996 \\
zhen21 & 13 & 28 & 650 & 0.7054 & 0.0008 & 1.0000 & 0.9991 \\
ted\_ende & 13 & 27 & 529 & 0.3802 & 0.0034 & 1.0000 & 0.9966 \\
ted\_zhen & 13 & 28 & 529 & 0.4921 & 0.0063 & 0.9995 & 0.9932 \\
\bottomrule
\end{tabular}%
\caption{Dataset size and reachable balance range. \#Scorers is the number of base scorers used for augmentation, listed by name in Table~\ref{tab:dataset-scorers}.}
\label{tab:dataset-core}
\end{table*}

\begin{table*}[t]
\centering
\begin{tabular}{lrrrrr}
\toprule
Dataset & \#Pairs & Mean & Median & Std & IQR \\
\midrule
ende24 & 120 & 0.6223 & 0.6883 & 0.3083 & 0.5019 \\
enes24 & 78 & 0.8444 & 0.9705 & 0.2642 & 0.1335 \\
jazh24 & 78 & 0.9317 & 0.9922 & 0.1538 & 0.0292 \\
heen23 & 66 & 0.8971 & 0.9919 & 0.2065 & 0.0917 \\
zhen23 & 105 & 0.8392 & 0.9508 & 0.2416 & 0.1835 \\
ende22 & 78 & 0.8049 & 0.9115 & 0.2523 & 0.2376 \\
zhen22 & 91 & 0.8576 & 0.9499 & 0.2160 & 0.1371 \\
ende21 & 78 & 0.3543 & 0.3105 & 0.2916 & 0.4552 \\
zhen21 & 78 & 0.5984 & 0.7165 & 0.3347 & 0.4885 \\
ted\_ende & 78 & 0.4672 & 0.3872 & 0.3667 & 0.7545 \\
ted\_zhen & 78 & 0.5278 & 0.5340 & 0.3273 & 0.5987 \\
\bottomrule
\end{tabular}%
\caption{Statistics describing the distribution of the pairwise balance $\beta_{kk'}$ across system pairs. IQR: their interquartile range, i.e. the $75$th percentile minus the $25$th.}
\label{tab:dataset-betaij}
\end{table*}

\begin{table*}[t]
\centering
\begin{tabular}{lrrrrrr}
\toprule
Dataset & $\mu(a)$ & $\sigma(a)^2$ & $\mu(f)$ & $\sigma(f)^2$ & $\rho$ & $\Sigma(a,f)$ \\
\midrule
ende24 & -1.3745 & 0.4475 & -1.1275 & 0.1247 & 0.7210 & 0.1703 \\
enes24 & -0.4547 & 0.1167 & -0.1262 & 0.0042 & 0.5531 & 0.0122 \\
jazh24 & -2.5512 & 2.3894 & -0.3477 & 0.0110 & 0.5749 & 0.0933 \\
heen23 & -2.0684 & 0.7275 & -0.3771 & 0.0097 & 0.4214 & 0.0355 \\
zhen23 & -2.7212 & 1.5735 & -1.4967 & 0.0534 & 0.2707 & 0.0784 \\
ende22 & -0.8175 & 0.0494 & -0.4051 & 0.0035 & 0.6182 & 0.0081 \\
zhen22 & -2.6217 & 0.8798 & -0.6556 & 0.0210 & 0.8144 & 0.1108 \\
ende21 & -0.5341 & 0.0364 & -1.1930 & 0.1238 & 0.6377 & 0.0428 \\
zhen21 & -3.3916 & 0.1510 & -1.7004 & 0.0631 & 0.0837 & 0.0082 \\
ted\_ende & -0.7215 & 0.0262 & -0.8389 & 0.0426 & 0.2913 & 0.0097 \\
ted\_zhen & -1.1248 & 0.0461 & -1.0927 & 0.0476 & 0.7116 & 0.0333 \\
\bottomrule
\end{tabular}%
\caption{System-level Adequacy and Fluency MQM. Every statistic is taken over the $K$ systems of a single dataset, at the uniform weighting $w=\tfrac1K\mathbf 1$, on the system-level score vectors $a$ (Adequacy~MQM) and $f$ (Fluency~MQM). $\rho$ is the Pearson correlation between $a$ and $f$ across systems. $\Sigma(a,f)$: the covariance between $a$ and $f$ across those same systems. Means are negative because MQM is defined as a negated error weight.}
\label{tab:dataset-mqm}
\end{table*}

\section{Dataset Statistics}
\label{app:datasetstats}

Tables~\ref{tab:dataset-core}--\ref{tab:dataset-scorers} report descriptive
statistics for the eleven datasets used throughout this work.
Table~\ref{tab:dataset-core} covers pool size and the balance parameter. Table~\ref{tab:dataset-betaij} summarizes the
distribution of pairwise balances~$\beta_{kk'}$.
Table~\ref{tab:dataset-mqm} reports system-level Adequacy~MQM and Fluency~MQM statistics, together with their correlation and
covariance across systems. Table~\ref{tab:dataset-scorers} lists by name the
base scorers augmented for each dataset.

\begin{table*}[t]
\centering
\footnotesize
\begin{tabular}{l p{0.84\textwidth}}
\toprule
Dataset & Base scorers \\
\midrule
ende24 & \texttt{BERTScore}, \texttt{BLCOM}, \texttt{BLCOM\_1}, \texttt{BLEU}, \texttt{BLEURT-20}, \texttt{COMET-22}, \texttt{CometKiwi}, \texttt{CometKiwi-XXL}, \texttt{MEE4}, \texttt{MetricX-24}, \texttt{MetricX-24-Hybrid}, \texttt{MetricX-24-Hybrid-QE}, \texttt{MetricX-24-QE}, \texttt{PrismRefMedium}, \texttt{PrismRefSmall}, \texttt{XCOMET}, \texttt{XCOMET-QE}, \texttt{XLsimDA}, \texttt{YiSi-1}, \texttt{bright-qe}, \texttt{chrF}, \texttt{chrfS}, \texttt{damonmonli}, \texttt{gemba\_esa}, \texttt{metametrics\_mt\_mqm\_hybrid\_kendall}, \texttt{metametrics\_mt\_mqm\_kendall}, \texttt{metametrics\_mt\_mqm\_qe\_kendall.seg.s}, \texttt{monmonli}, \texttt{sentinel-cand-mqm}, \texttt{spBLEU} \\
\addlinespace
enes24 & \texttt{BERTScore}, \texttt{BLCOM}, \texttt{BLCOM\_1}, \texttt{BLEU}, \texttt{BLEURT-20}, \texttt{COMET-22}, \texttt{CometKiwi}, \texttt{CometKiwi-XXL}, \texttt{MEE4}, \texttt{MetricX-24}, \texttt{MetricX-24-Hybrid}, \texttt{MetricX-24-Hybrid-QE}, \texttt{MetricX-24-QE}, \texttt{PrismRefMedium}, \texttt{PrismRefSmall}, \texttt{XCOMET}, \texttt{XCOMET-QE}, \texttt{XLsimDA}, \texttt{YiSi-1}, \texttt{bright-qe}, \texttt{chrF}, \texttt{chrfS}, \texttt{damonmonli}, \texttt{gemba\_esa}, \texttt{metametrics\_mt\_mqm\_hybrid\_kendall}, \texttt{metametrics\_mt\_mqm\_kendall}, \texttt{metametrics\_mt\_mqm\_qe\_kendall.seg.s}, \texttt{monmonli}, \texttt{sentinel-cand-mqm}, \texttt{spBLEU} \\
\addlinespace
jazh24 & \texttt{BERTScore}, \texttt{BLCOM}, \texttt{BLCOM\_1}, \texttt{BLEU}, \texttt{BLEURT-20}, \texttt{COMET-22}, \texttt{CometKiwi}, \texttt{CometKiwi-XXL}, \texttt{MEE4}, \texttt{MetricX-24}, \texttt{MetricX-24-Hybrid}, \texttt{MetricX-24-Hybrid-QE}, \texttt{MetricX-24-QE}, \texttt{PrismRefMedium}, \texttt{PrismRefSmall}, \texttt{XCOMET}, \texttt{XCOMET-QE}, \texttt{XLsimDA}, \texttt{YiSi-1}, \texttt{bright-qe}, \texttt{chrF}, \texttt{chrfS}, \texttt{damonmonli}, \texttt{gemba\_esa}, \texttt{metametrics\_mt\_mqm\_hybrid\_kendall}, \texttt{metametrics\_mt\_mqm\_qe\_kendall.seg.s}, \texttt{metametrics\_mt\_mqm\_qe\_same\_source\_t}, \texttt{metametrics\_mt\_mqm\_same\_source\_targ}, \texttt{monmonli}, \texttt{sentinel-cand-mqm}, \texttt{spBLEU} \\
\addlinespace
heen23 & \texttt{BERTscore}, \texttt{BLEU}, \texttt{BLEURT-20}, \texttt{COMET}, \texttt{Calibri-COMET22}, \texttt{Calibri-COMET22-QE}, \texttt{CometKiwi}, \texttt{CometKiwi-XL}, \texttt{CometKiwi-XXL}, \texttt{GEMBA-MQM}, \texttt{KG-BERTScore}, \texttt{MEE4}, \texttt{MS-COMET-QE-22}, \texttt{MaTESe}, \texttt{MetricX-23}, \texttt{MetricX-23-QE}, \texttt{MetricX-23-QE-b}, \texttt{MetricX-23-QE-c}, \texttt{MetricX-23-b}, \texttt{MetricX-23-c}, \texttt{Random-sysname}, \texttt{XCOMET-Ensemble}, \texttt{XCOMET-QE-Ensemble}, \texttt{XCOMET-XL}, \texttt{XCOMET-XXL}, \texttt{XLsim}, \texttt{YiSi-1}, \texttt{chrF}, \texttt{cometoid22-wmt21}, \texttt{cometoid22-wmt22}, \texttt{cometoid22-wmt23}, \texttt{docWMT22CometDA}, \texttt{docWMT22CometKiwiDA}, \texttt{eBLEU}, \texttt{embed\_llama}, \texttt{f200spBLEU}, \texttt{instructscore}, \texttt{mbr-metricx-qe}, \texttt{mre-score-labse-regular}, \texttt{prismRef}, \texttt{prismSrc}, \texttt{sescoreX}, \texttt{tokengram\_F} \\
\addlinespace
zhen23 & \texttt{BERTscore}, \texttt{BLEU}, \texttt{BLEURT-20}, \texttt{COMET}, \texttt{Calibri-COMET22}, \texttt{Calibri-COMET22-QE}, \texttt{CometKiwi}, \texttt{CometKiwi-XL}, \texttt{CometKiwi-XXL}, \texttt{GEMBA-MQM}, \texttt{KG-BERTScore}, \texttt{MEE4}, \texttt{MS-COMET-QE-22}, \texttt{MaTESe}, \texttt{MetricX-23}, \texttt{MetricX-23-QE}, \texttt{MetricX-23-QE-b}, \texttt{MetricX-23-QE-c}, \texttt{MetricX-23-b}, \texttt{MetricX-23-c}, \texttt{Random-sysname}, \texttt{XCOMET-Ensemble}, \texttt{XCOMET-QE-Ensemble}, \texttt{XCOMET-XL}, \texttt{XCOMET-XXL}, \texttt{XLsim}, \texttt{YiSi-1}, \texttt{chrF}, \texttt{cometoid22-wmt21}, \texttt{cometoid22-wmt22}, \texttt{cometoid22-wmt23}, \texttt{docWMT22CometDA}, \texttt{docWMT22CometKiwiDA}, \texttt{eBLEU}, \texttt{embed\_llama}, \texttt{f200spBLEU}, \texttt{instructscore}, \texttt{mbr-metricx-qe}, \texttt{mre-score-labse-regular}, \texttt{prismRef}, \texttt{prismSrc}, \texttt{sescoreX}, \texttt{tokengram\_F} \\
\addlinespace
ende22 & \texttt{BERTScore}, \texttt{BLEU}, \texttt{BLEURT-20}, \texttt{COMET-20}, \texttt{COMET-22}, \texttt{COMET-QE}, \texttt{COMETKiwi}, \texttt{Cross-QE}, \texttt{HWTSC-TLM}, \texttt{HWTSC-Teacher-Sim}, \texttt{KG-BERTScore}, \texttt{MATESE}, \texttt{MATESE-QE}, \texttt{MEE}, \texttt{MEE2}, \texttt{MEE4}, \texttt{MS-COMET-22}, \texttt{MS-COMET-QE-22}, \texttt{REUSE}, \texttt{SEScore}, \texttt{UniTE}, \texttt{UniTE-ref}, \texttt{UniTE-src}, \texttt{YiSi-1}, \texttt{chrF}, \texttt{f101spBLEU}, \texttt{f200spBLEU}, \texttt{metricx\_xl\_DA\_2019}, \texttt{metricx\_xl\_MQM\_2020}, \texttt{metricx\_xxl\_DA\_2019}, \texttt{metricx\_xxl\_MQM\_2020} \\
\addlinespace
zhen22 & \texttt{BERTScore}, \texttt{BLEU}, \texttt{BLEURT-20}, \texttt{COMET-20}, \texttt{COMET-22}, \texttt{COMET-QE}, \texttt{COMETKiwi}, \texttt{Cross-QE}, \texttt{HWTSC-TLM}, \texttt{HWTSC-Teacher-Sim}, \texttt{KG-BERTScore}, \texttt{MATESE}, \texttt{MATESE-QE}, \texttt{MEE}, \texttt{MEE2}, \texttt{MEE4}, \texttt{MS-COMET-22}, \texttt{MS-COMET-QE-22}, \texttt{REUSE}, \texttt{SEScore}, \texttt{UniTE}, \texttt{UniTE-ref}, \texttt{UniTE-src}, \texttt{YiSi-1}, \texttt{chrF}, \texttt{f101spBLEU}, \texttt{f200spBLEU}, \texttt{metricx\_xl\_DA\_2019}, \texttt{metricx\_xl\_MQM\_2020}, \texttt{metricx\_xxl\_DA\_2019}, \texttt{metricx\_xxl\_MQM\_2020} \\
\addlinespace
ende21 & \texttt{BERTScore}, \texttt{C-SPEC}, \texttt{C-SPECpn}, \texttt{COMET-DA\_2020}, \texttt{COMET-DA\_2021}, \texttt{COMET-MQM\_2021}, \texttt{COMET-QE-DA\_2021}, \texttt{COMET-QE-MQM\_2021}, \texttt{COMETinho-DA}, \texttt{COMETinho-MQM}, \texttt{MEE}, \texttt{MEE2}, \texttt{MTEQA}, \texttt{OpenKiwi-MQM}, \texttt{Prism}, \texttt{TER}, \texttt{YiSi-1}, \texttt{YiSi-2}, \texttt{bleurt-20}, \texttt{bleurt-21-beta}, \texttt{chrF}, \texttt{cushLEPOR(LM)}, \texttt{cushLEPOR(pSQM)}, \texttt{hLEPOR}, \texttt{regEMT}, \texttt{regEMT-baseline}, \texttt{sentBLEU} \\
\addlinespace
zhen21 & \texttt{BERTScore}, \texttt{C-SPEC}, \texttt{C-SPECpn}, \texttt{COMET-DA\_2020}, \texttt{COMET-DA\_2021}, \texttt{COMET-MQM\_2021}, \texttt{COMET-QE-DA\_2021}, \texttt{COMET-QE-MQM\_2021}, \texttt{COMETinho-DA}, \texttt{COMETinho-MQM}, \texttt{MEE}, \texttt{MEE2}, \texttt{MTEQA}, \texttt{OpenKiwi-MQM}, \texttt{Prism}, \texttt{RoBLEURT}, \texttt{TER}, \texttt{YiSi-1}, \texttt{YiSi-2}, \texttt{bleurt-20}, \texttt{bleurt-21-beta}, \texttt{chrF}, \texttt{cushLEPOR(LM)}, \texttt{cushLEPOR(pSQM)}, \texttt{hLEPOR}, \texttt{regEMT}, \texttt{regEMT-baseline}, \texttt{sentBLEU} \\
\addlinespace
ted\_ende & \texttt{BERTScore}, \texttt{C-SPEC}, \texttt{C-SPECpn}, \texttt{COMET-DA\_2020}, \texttt{COMET-DA\_2021}, \texttt{COMET-MQM\_2021}, \texttt{COMET-QE-DA\_2021}, \texttt{COMET-QE-MQM\_2021}, \texttt{COMETinho-DA}, \texttt{COMETinho-MQM}, \texttt{MEE}, \texttt{MEE2}, \texttt{MTEQA}, \texttt{OpenKiwi-MQM}, \texttt{Prism}, \texttt{TER}, \texttt{YiSi-1}, \texttt{YiSi-2}, \texttt{bleurt-20}, \texttt{bleurt-21-beta}, \texttt{chrF}, \texttt{cushLEPOR(LM)}, \texttt{cushLEPOR(pSQM)}, \texttt{hLEPOR}, \texttt{regEMT}, \texttt{regEMT-baseline}, \texttt{sentBLEU} \\
\addlinespace
ted\_zhen & \texttt{BERTScore}, \texttt{C-SPEC}, \texttt{C-SPECpn}, \texttt{COMET-DA\_2020}, \texttt{COMET-DA\_2021}, \texttt{COMET-MQM\_2021}, \texttt{COMET-QE-DA\_2021}, \texttt{COMET-QE-MQM\_2021}, \texttt{COMETinho-DA}, \texttt{COMETinho-MQM}, \texttt{MEE}, \texttt{MEE2}, \texttt{MTEQA}, \texttt{OpenKiwi-MQM}, \texttt{Prism}, \texttt{RoBLEURT}, \texttt{TER}, \texttt{YiSi-1}, \texttt{YiSi-2}, \texttt{bleurt-20}, \texttt{bleurt-21-beta}, \texttt{chrF}, \texttt{cushLEPOR(LM)}, \texttt{cushLEPOR(pSQM)}, \texttt{hLEPOR}, \texttt{regEMT}, \texttt{regEMT-baseline}, \texttt{sentBLEU} \\
\bottomrule
\end{tabular}
\caption{The base scorers augmented for each dataset, by name. This follows the selection described in~\S\ref{sec:experiments}. Names are reproduced exactly as released by WMT.}
\label{tab:dataset-scorers}
\end{table*}

\section{Main Results for All Datasets}
\label{app:all_datasets}

Figure~\ref{fig:results_all} presents the main results across all datasets, reproducing the figures from the main text for completeness. The observed trends remain consistent throughout, demonstrating our method's generalizability.

\begin{figure*}[t]
\centering 

\begin{tabular}{@{}c@{}}
    \quad ende22 \\
    \includegraphics[height=4.9cm]{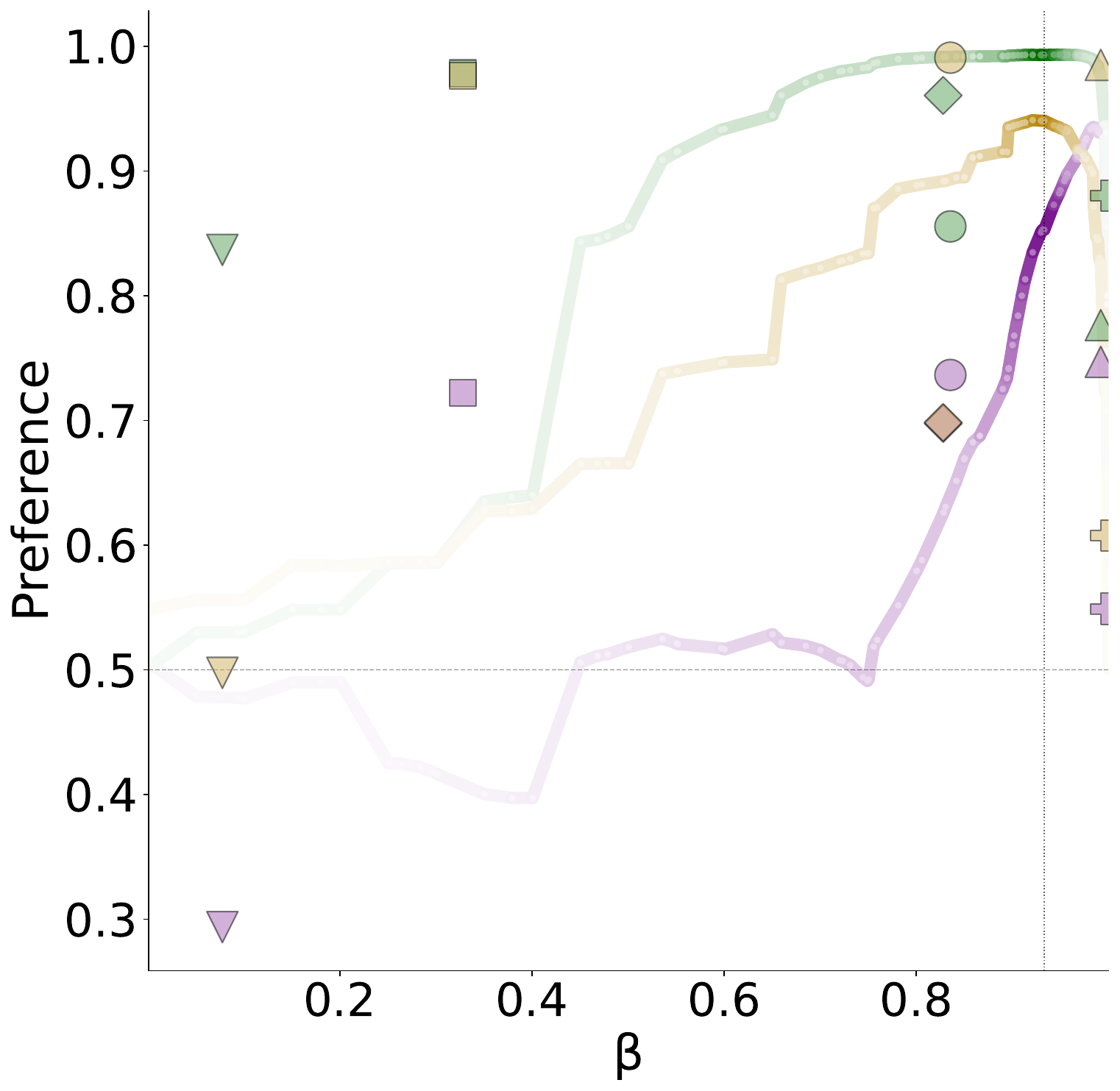}
\end{tabular}\hfill
\begin{tabular}{@{}c@{}}
    \quad zhen21 \\
    \includegraphics[height=4.9cm]{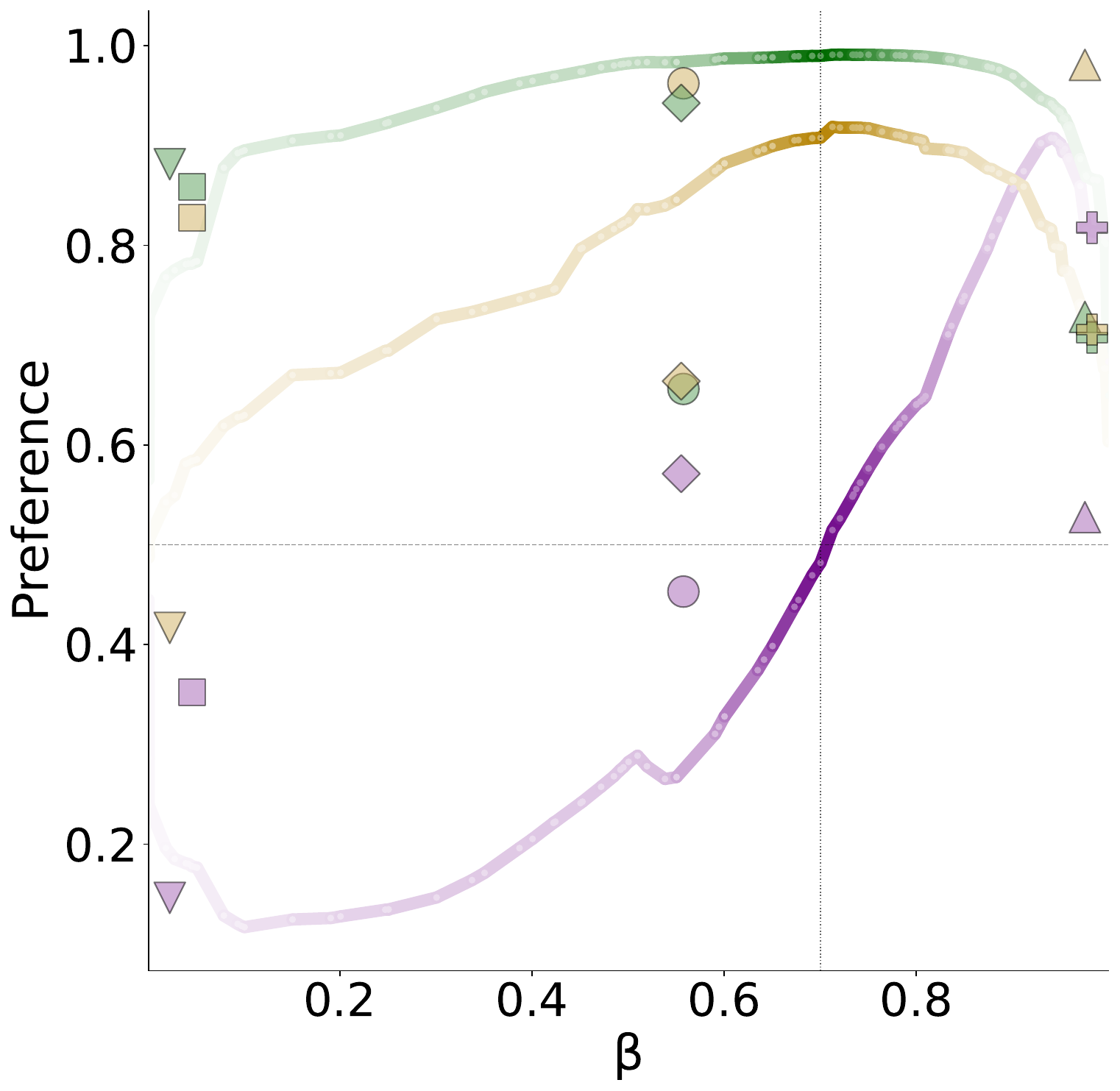}
\end{tabular}\hfill
\begin{tabular}{@{}c@{}}
    \quad enes24 \\
    \includegraphics[height=4.9cm]{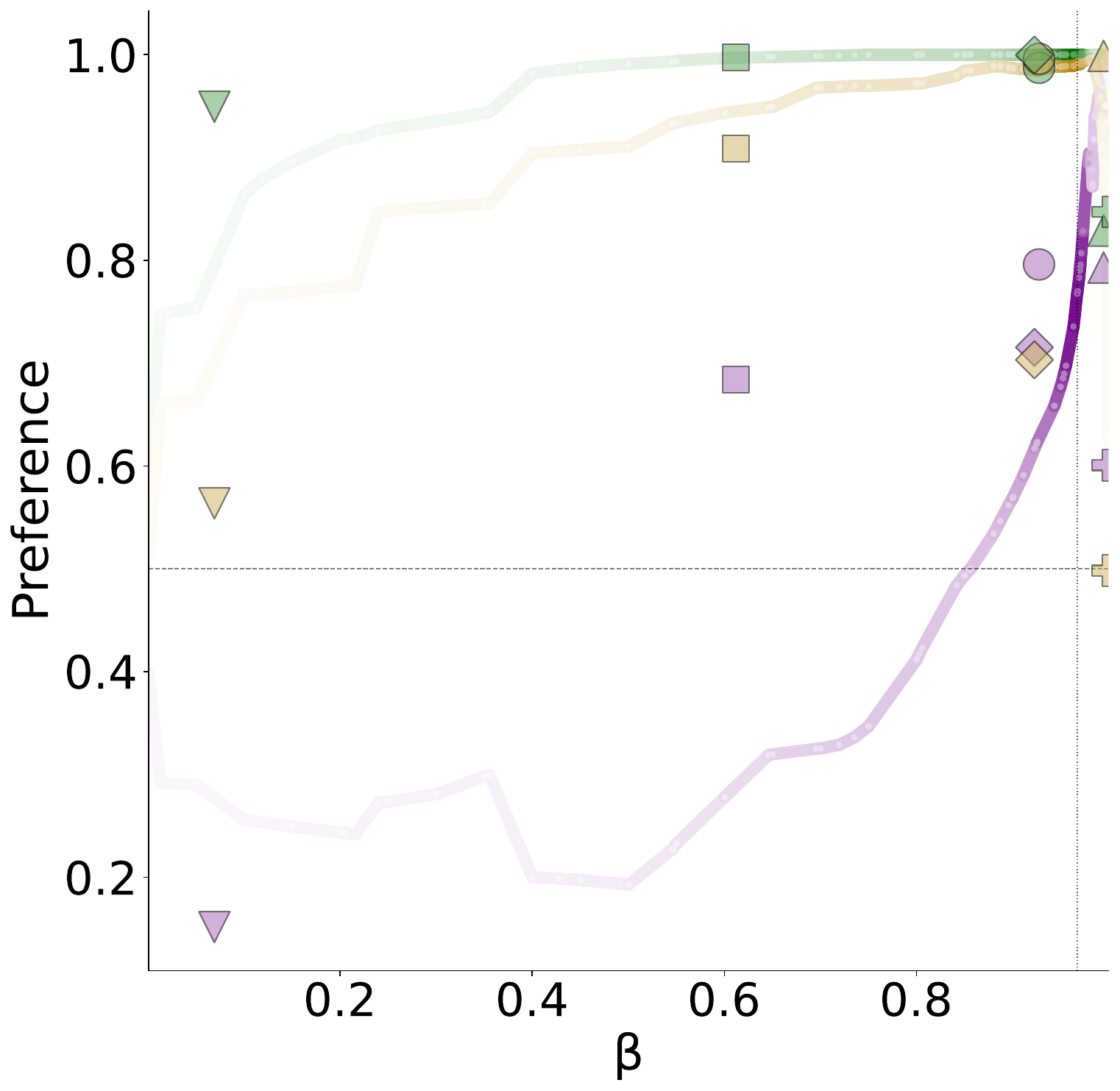}
\end{tabular}

\vspace{0.5cm} 

\begin{tabular}{@{}c@{}}
    \quad heen23 \\
    \includegraphics[height=4.9cm]{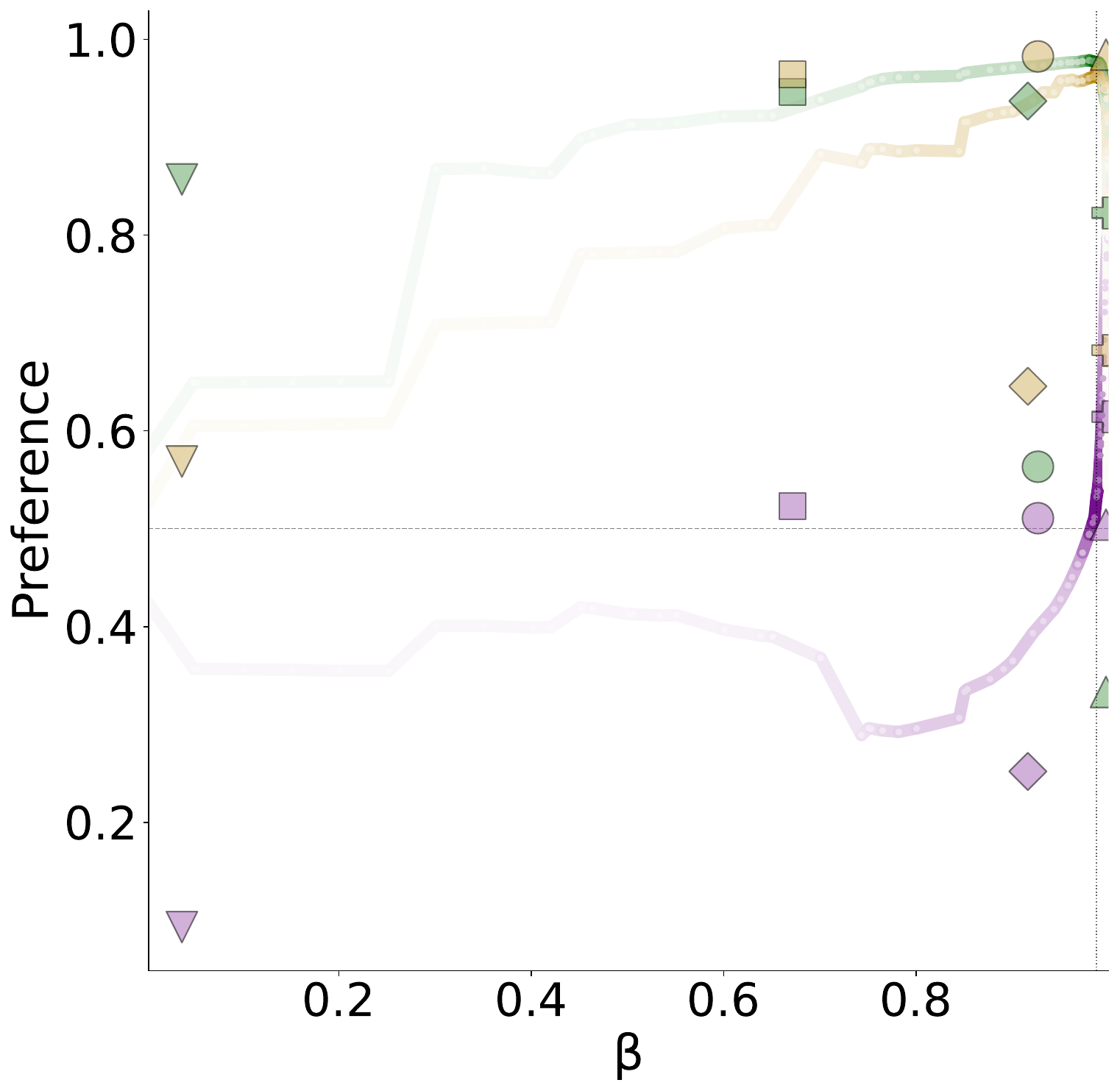}
\end{tabular}\hfill
\begin{tabular}{@{}c@{}}
    \quad jazh24 \\
    \includegraphics[height=4.9cm]{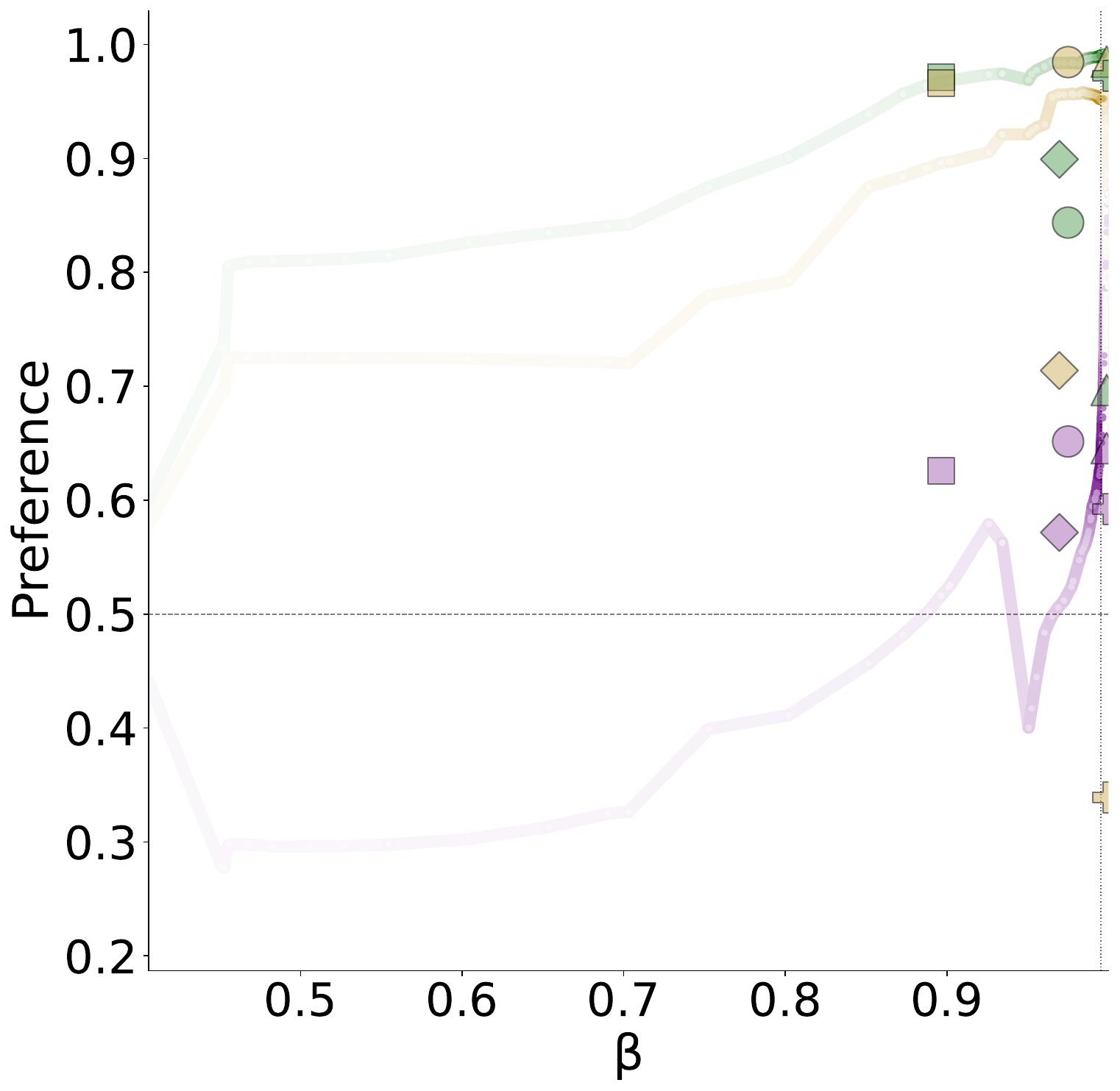}
\end{tabular}\hfill
\begin{tabular}{@{}c@{}}
    \quad ted\_ende \\
    \includegraphics[height=4.9cm]{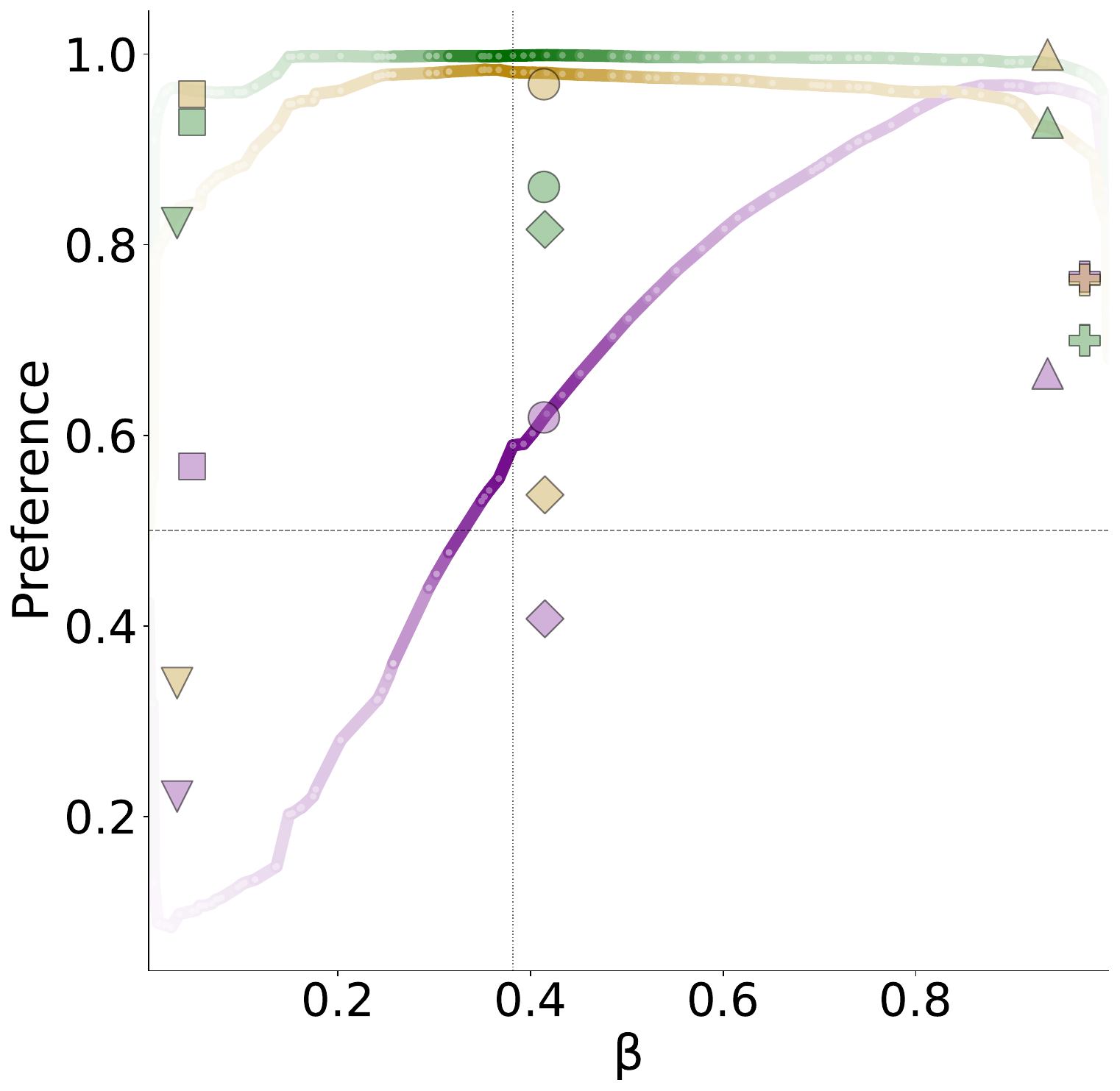}
\end{tabular}

\vspace{0.5cm} 

\begin{tabular}{@{}c@{}}
    \quad ted\_zhen \\
    \includegraphics[height=4.9cm]{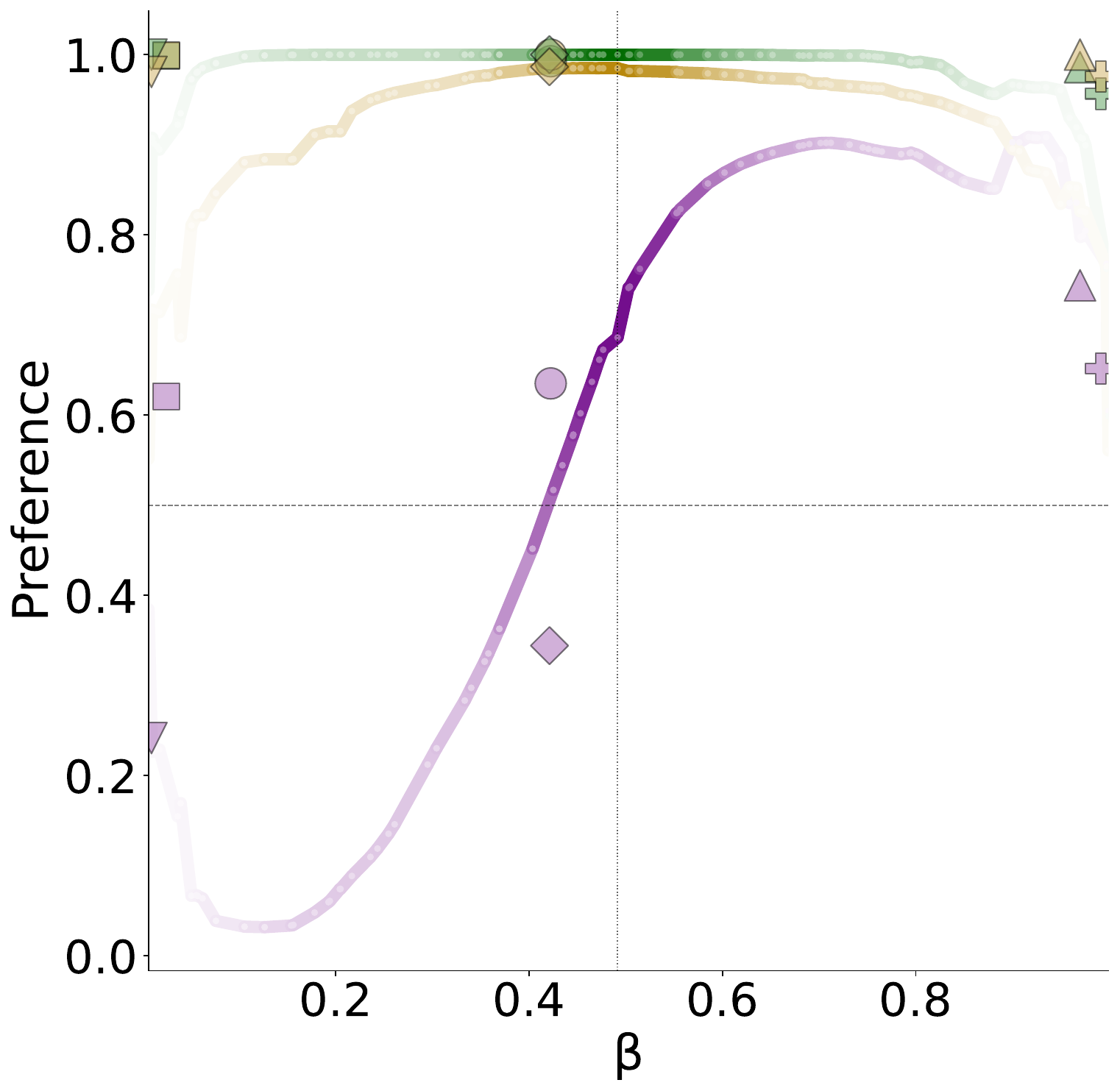}
\end{tabular}\hfill
\begin{tabular}{@{}c@{}}
    \quad zhen22 \\
    \includegraphics[height=4.9cm]{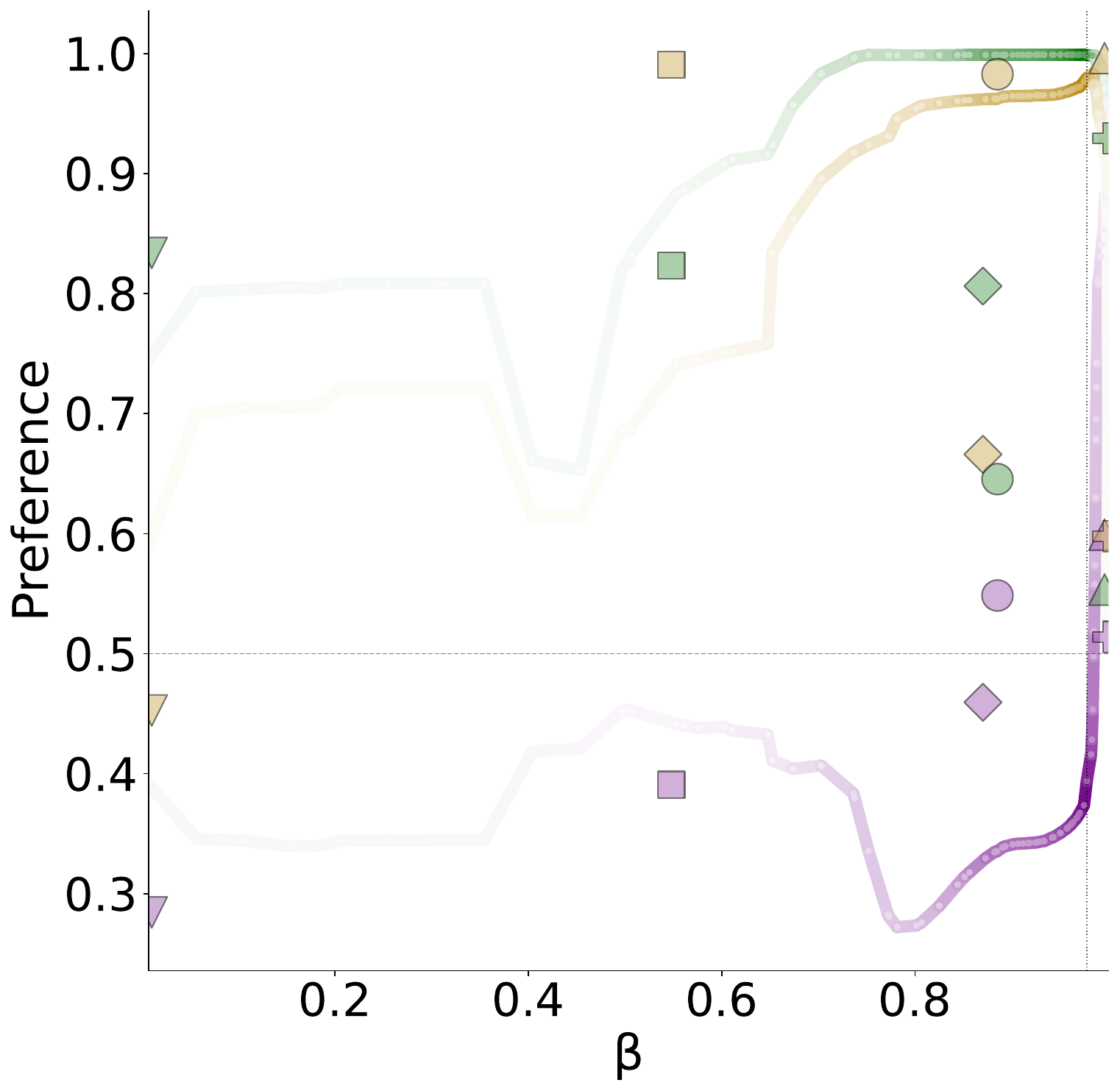}
\end{tabular}\hfill
\begin{tabular}{@{}c@{}}
    \quad zhen23 \\
    \includegraphics[height=4.9cm]{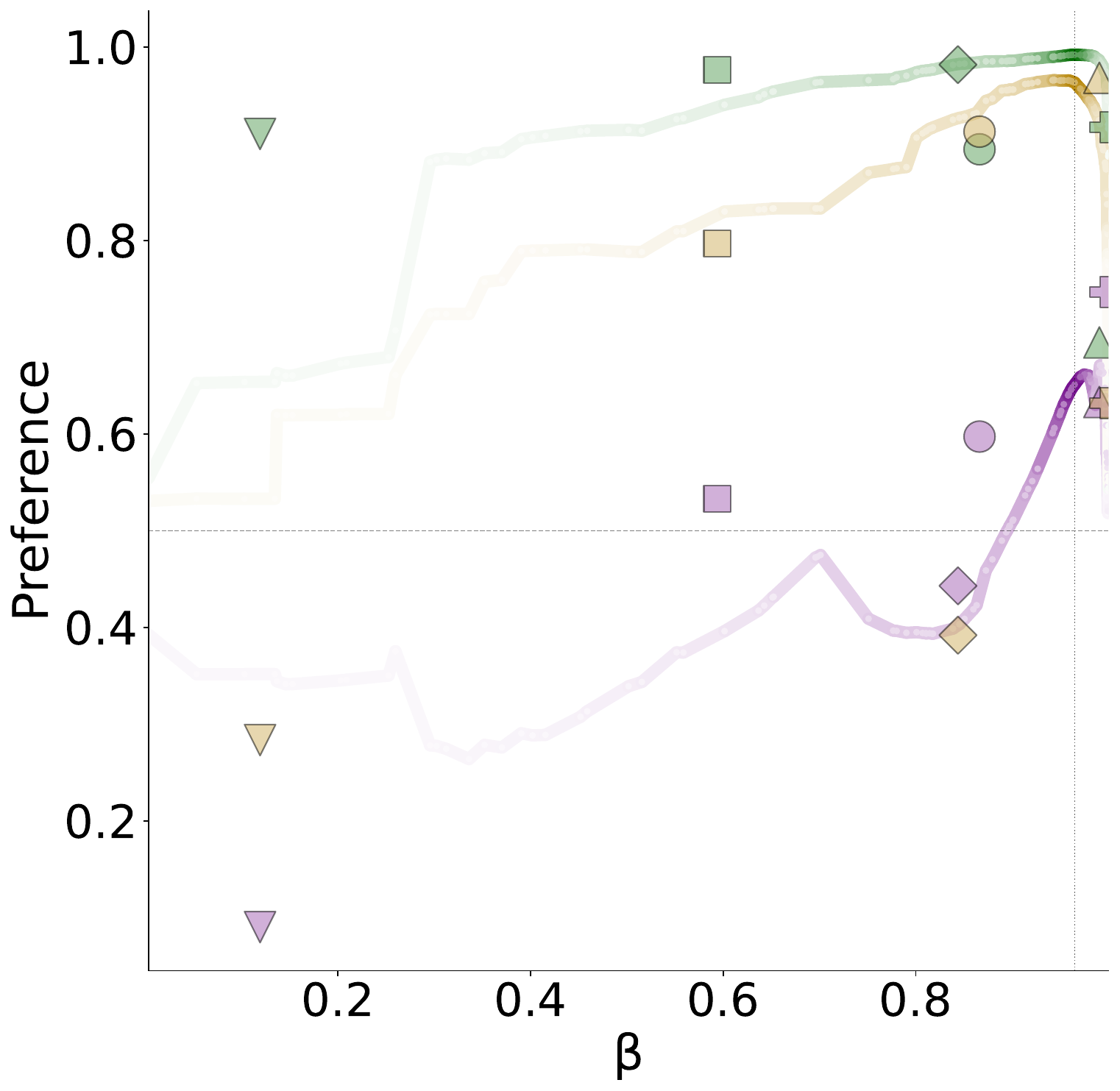}
\end{tabular}

\vspace{0.5cm} 

\begin{tabular}{@{}c@{}}
    \quad ende21 \\
    \includegraphics[height=4.9cm]{figures/preference_spa/orientation__ende21_DiagTJ_synth25_nolegend.pdf}
\end{tabular}\hfill
\begin{tabular}{@{}c@{}}
    \quad ende24 \\
    \includegraphics[height=4.9cm]{figures/preference_spa/orientation__ende24_DiagTJ_synth25.pdf}
\end{tabular}

\caption{Main results on all datasets. Layout follows Figure~\ref{fig:3}.}
\label{fig:results_all}
\end{figure*}

\section{Main Results for Pearson Correlation}
\label{app:pearson}

Figure~\ref{fig:results_pearson} mirrors the experiments from \S\ref{sec:results}, using Pearson correlation as the meta-metric rather than SPA. Markers representing \citet{shayegh-etal-2025-feeding} are omitted, as Pearson correlation falls outside the scope of their work. These findings align closely with our SPA evaluation, confirming the robustness of our approach.

\begin{figure*}[t]
\centering 

\begin{tabular}{@{}c@{}}
    \quad ende22 \\
    \includegraphics[height=4.9cm]{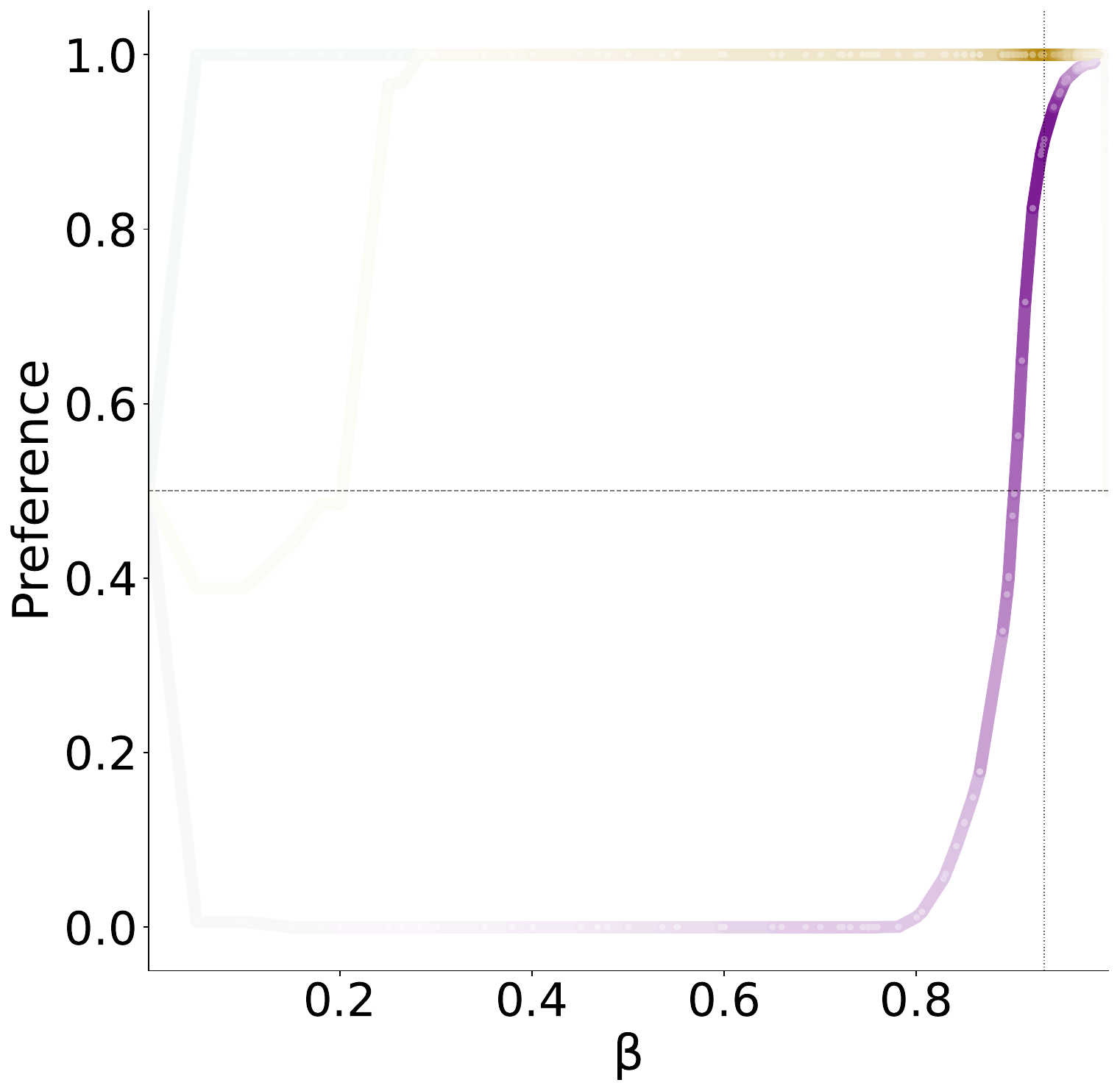}
\end{tabular}\hfill
\begin{tabular}{@{}c@{}}
    \quad zhen21 \\
    \includegraphics[height=4.9cm]{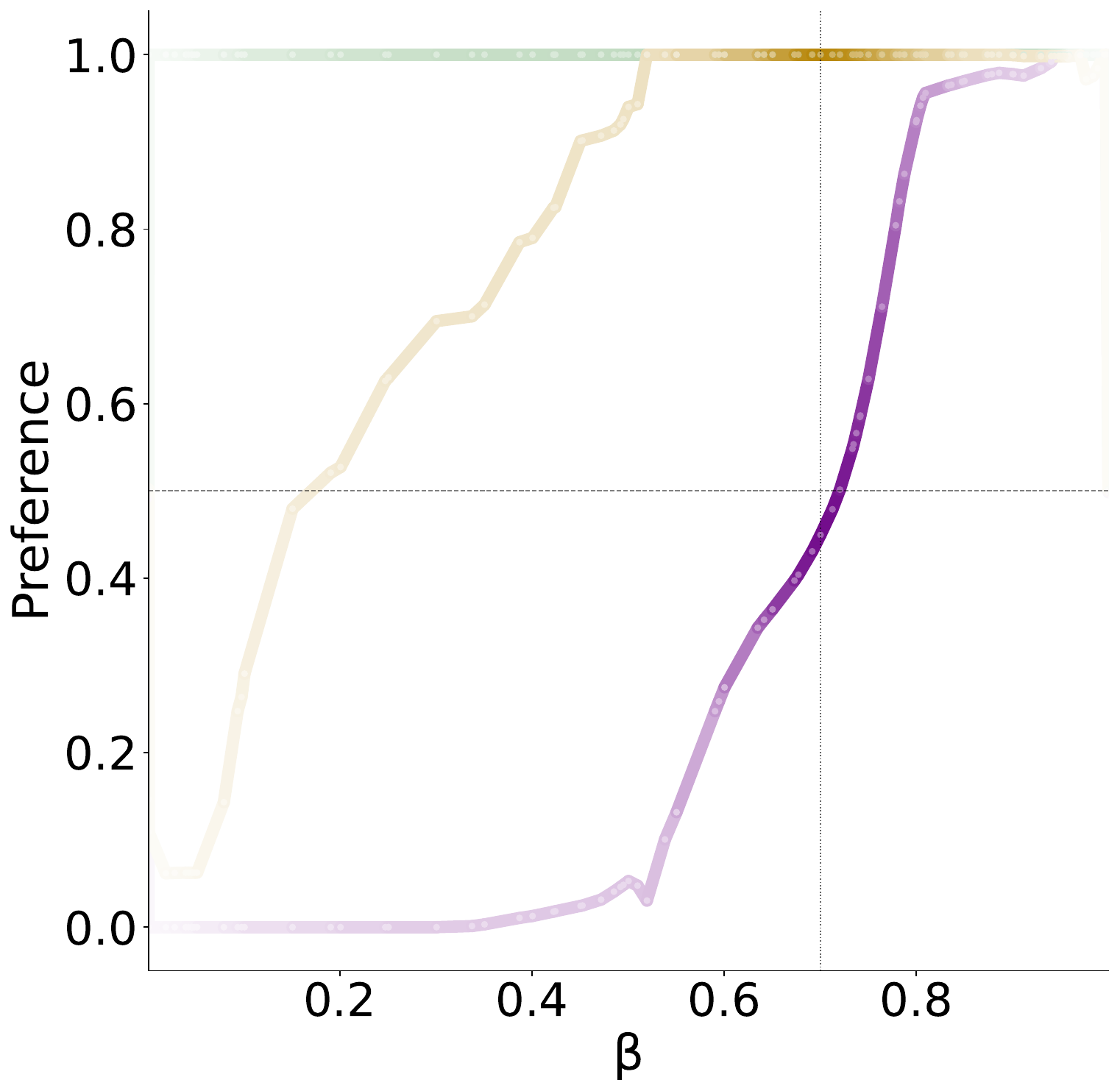}
\end{tabular}\hfill
\begin{tabular}{@{}c@{}}
    \quad enes24 \\
    \includegraphics[height=4.9cm]{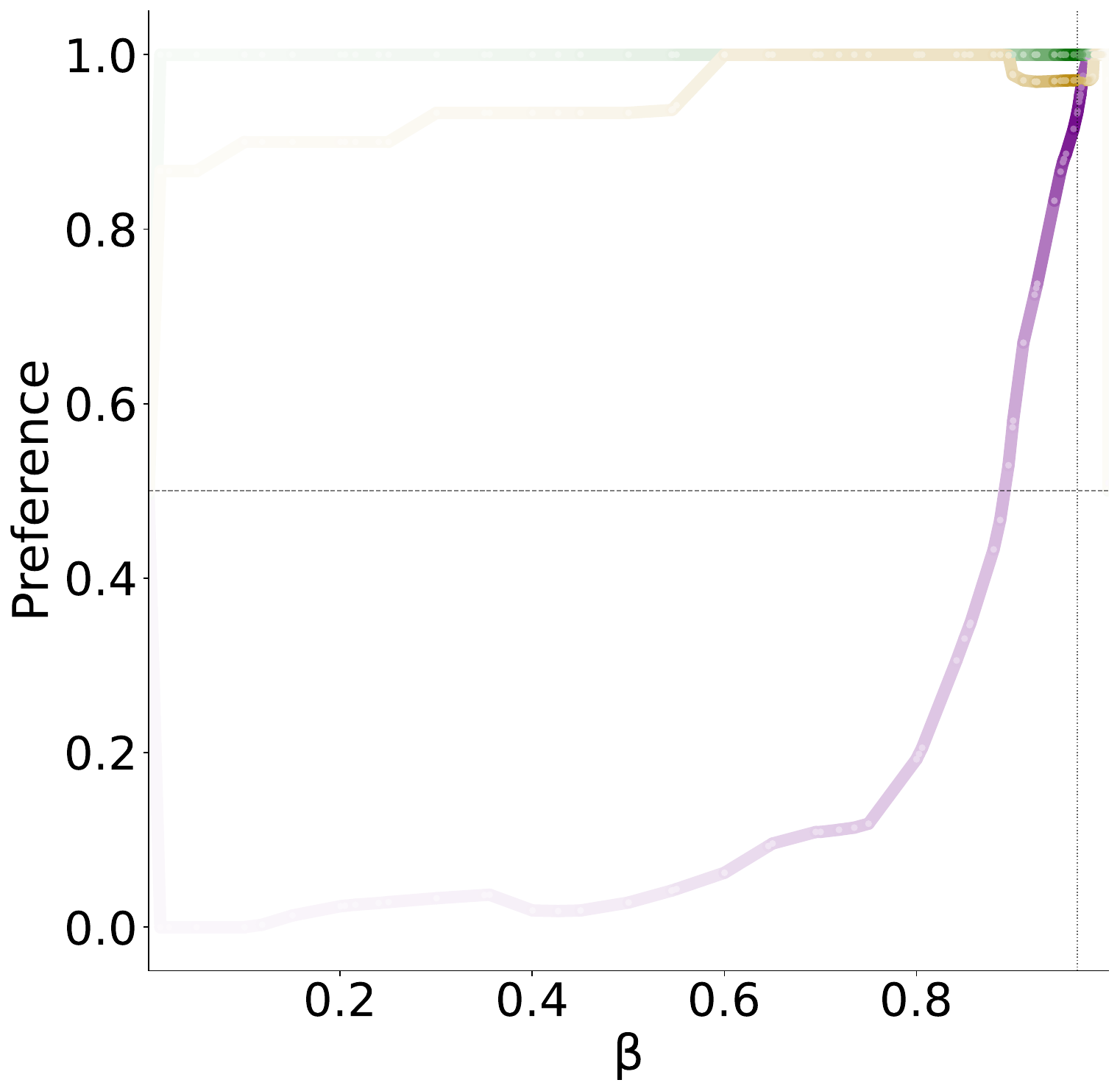}
\end{tabular}

\vspace{0.5cm} 

\begin{tabular}{@{}c@{}}
    \quad heen23 \\
    \includegraphics[height=4.9cm]{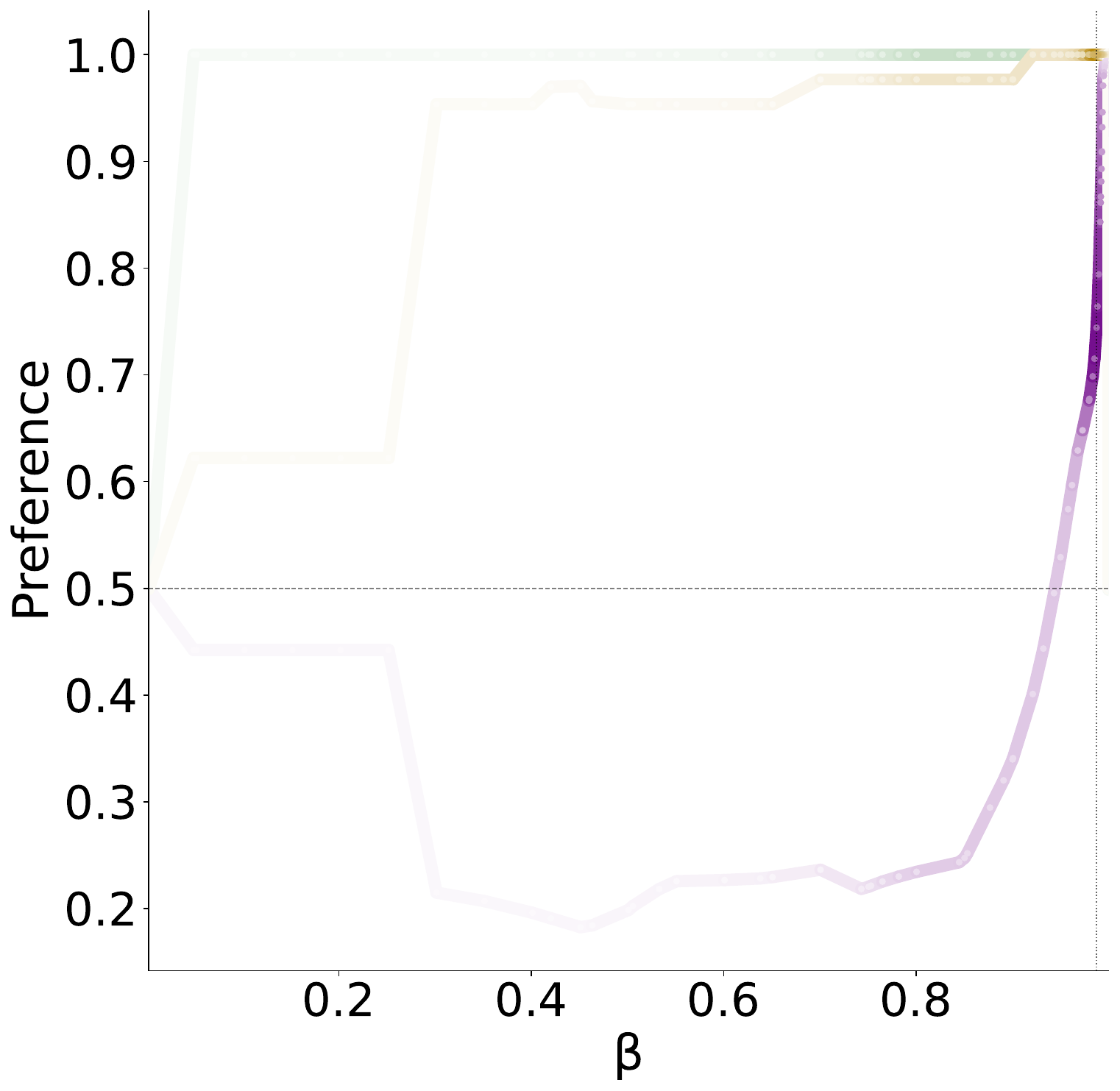}
\end{tabular}\hfill
\begin{tabular}{@{}c@{}}
    \quad jazh24 \\
    \includegraphics[height=4.9cm]{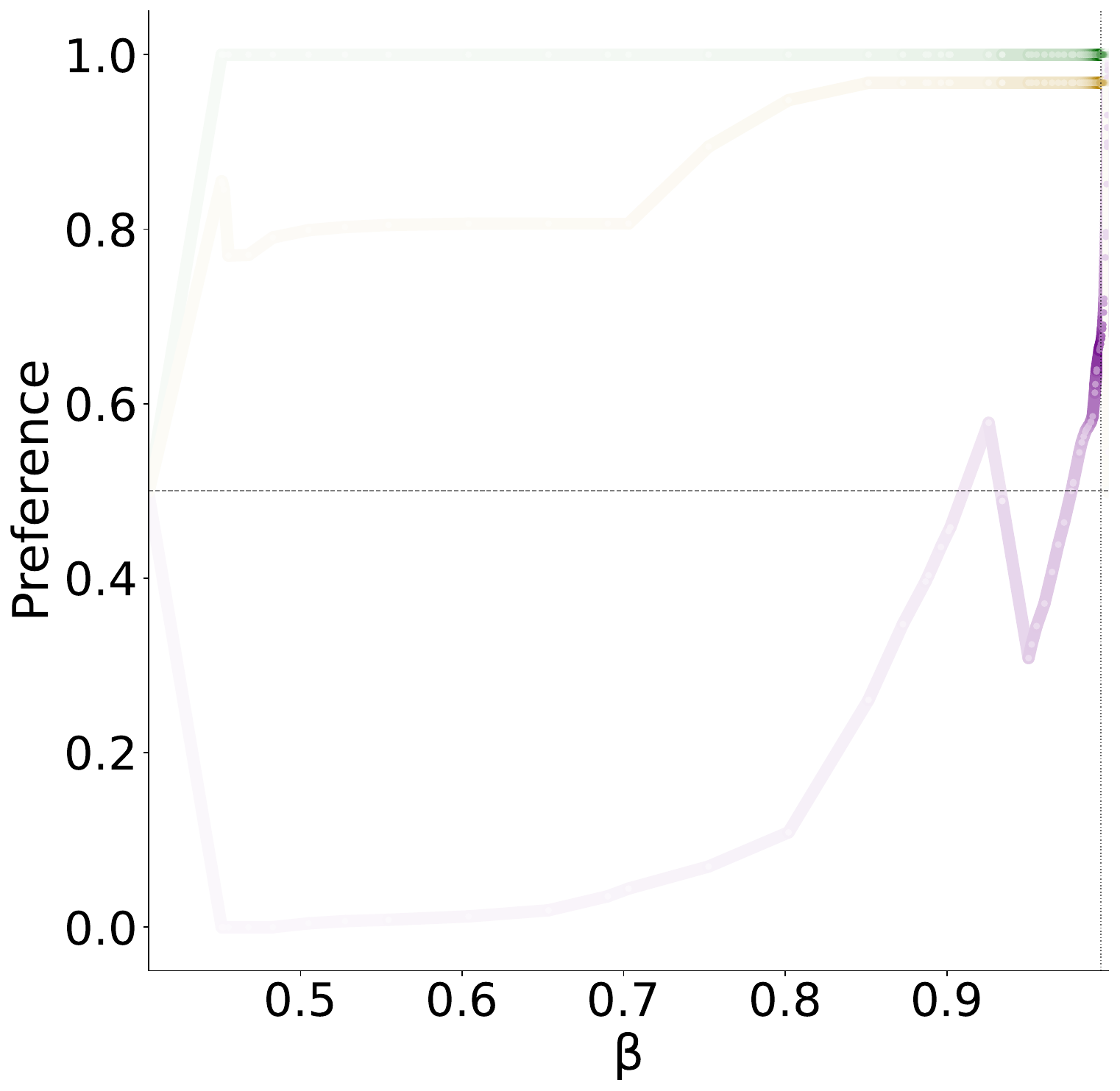}
\end{tabular}\hfill
\begin{tabular}{@{}c@{}}
    \quad ted\_ende \\
    \includegraphics[height=4.9cm]{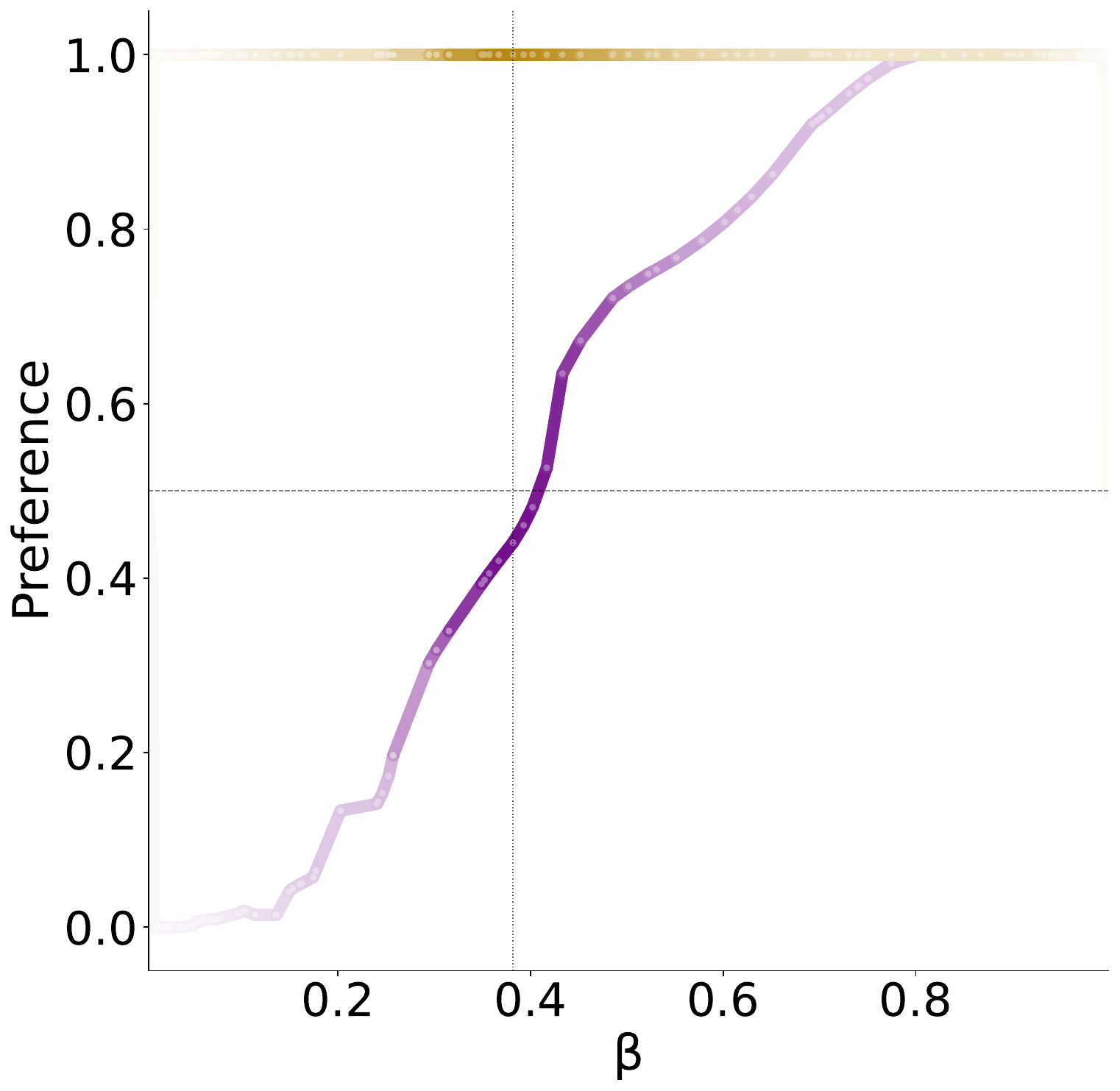}
\end{tabular}

\vspace{0.5cm} 

\begin{tabular}{@{}c@{}}
    \quad ted\_zhen \\
    \includegraphics[height=4.9cm]{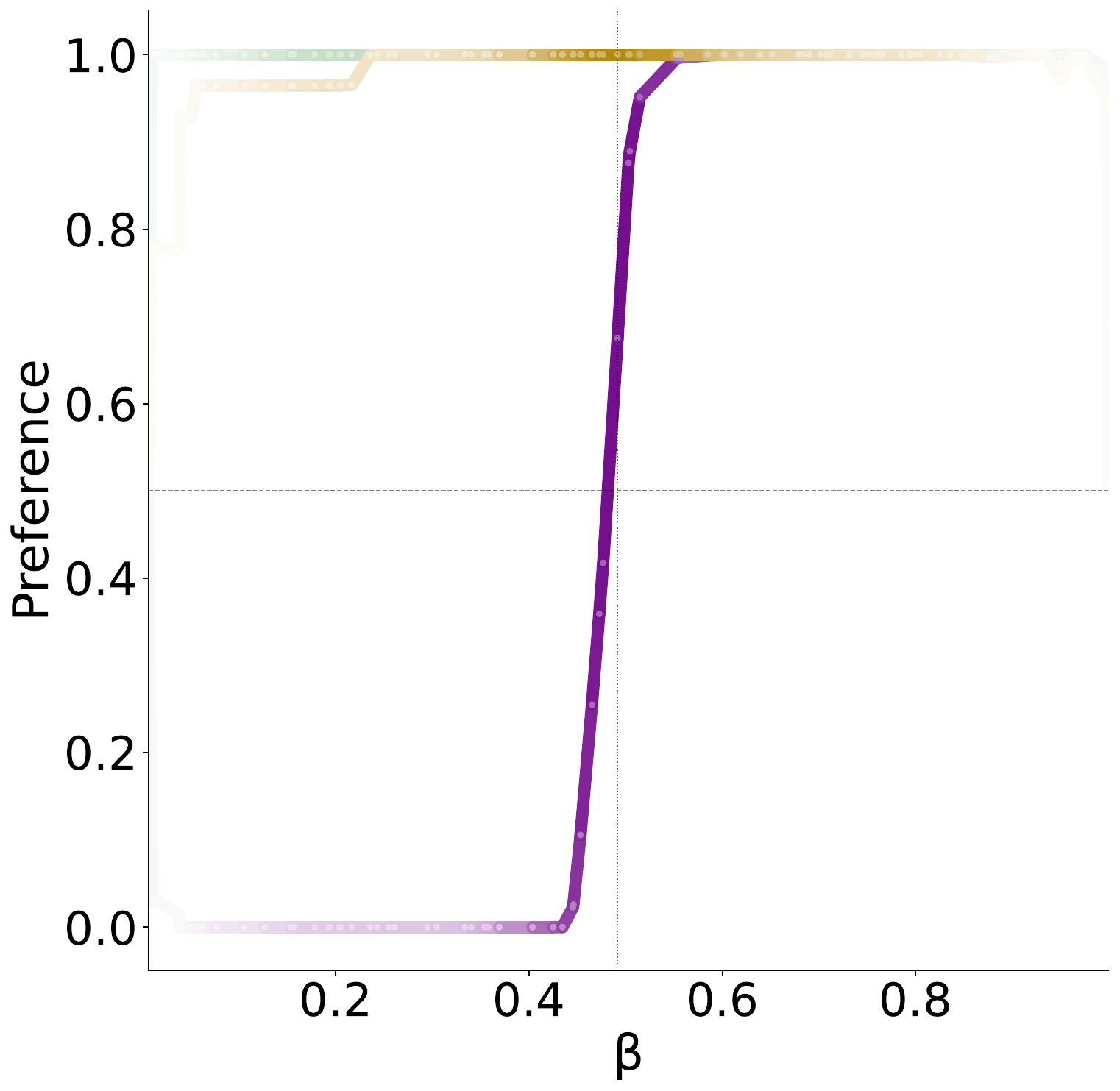}
\end{tabular}\hfill
\begin{tabular}{@{}c@{}}
    \quad zhen22 \\
    \includegraphics[height=4.9cm]{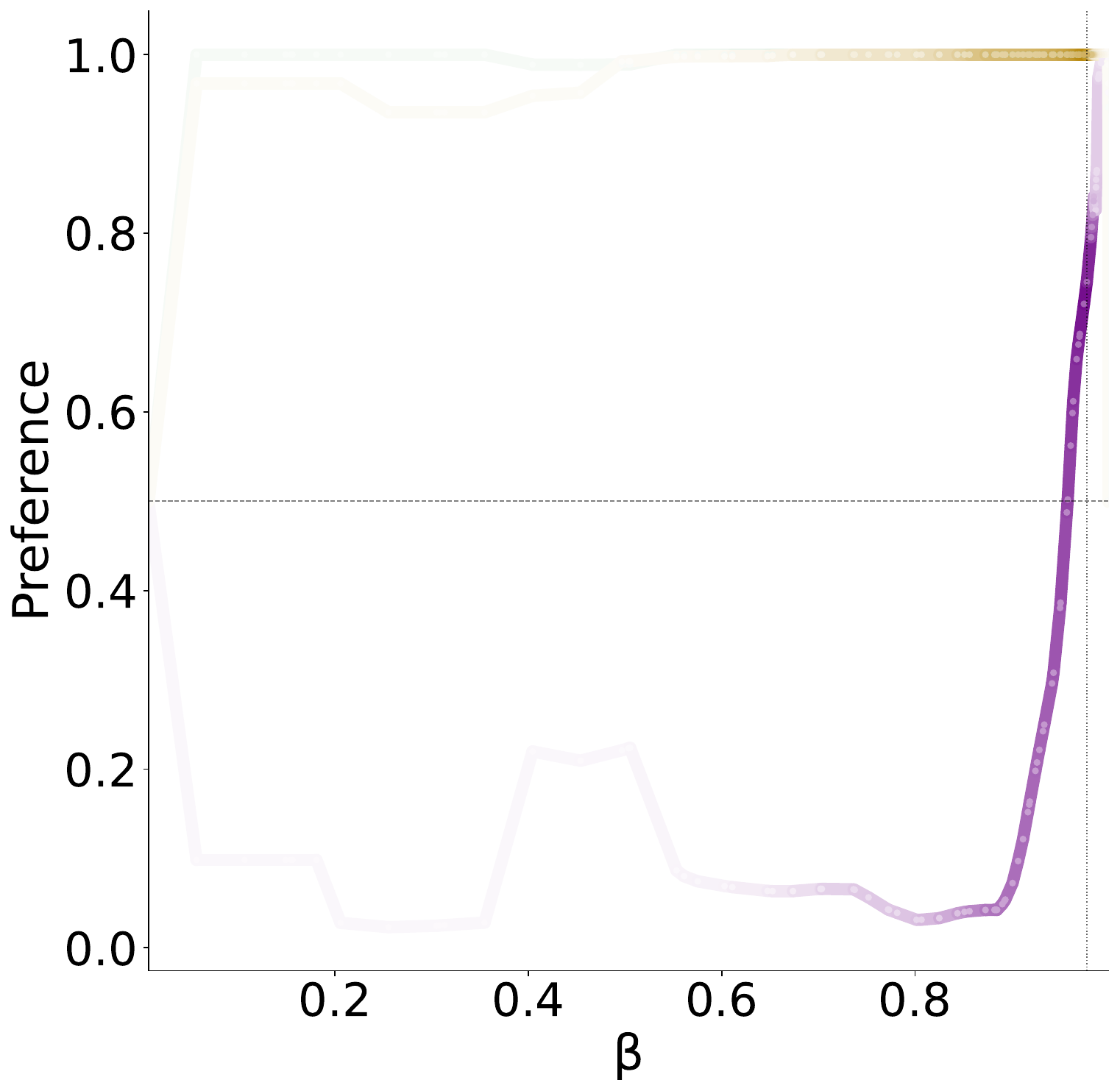}
\end{tabular}\hfill
\begin{tabular}{@{}c@{}}
    \quad zhen23 \\
    \includegraphics[height=4.9cm]{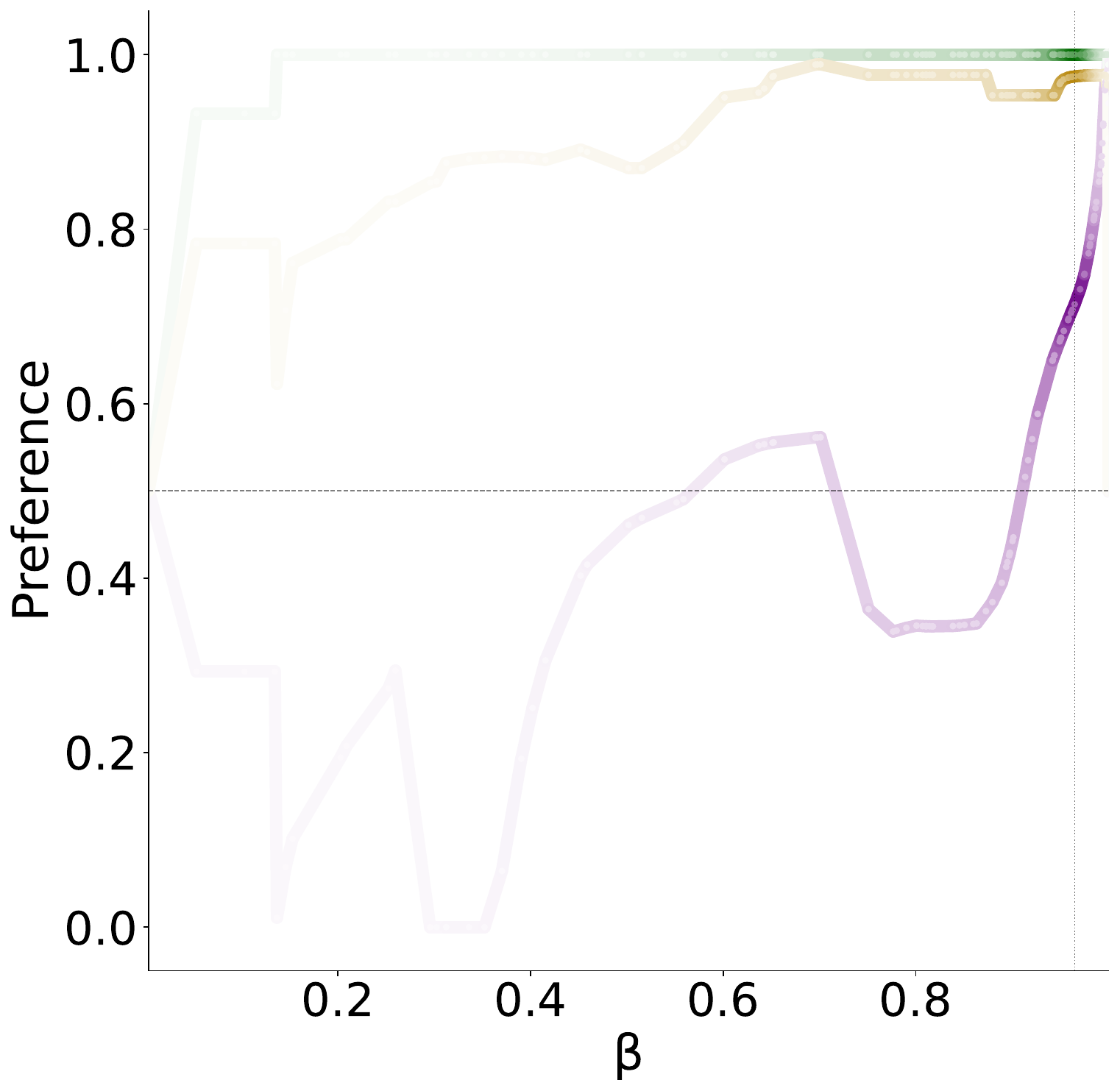}
\end{tabular}

\vspace{0.5cm} 

\begin{tabular}{@{}c@{}}
    \quad ende21 \\
    \includegraphics[height=4.9cm]{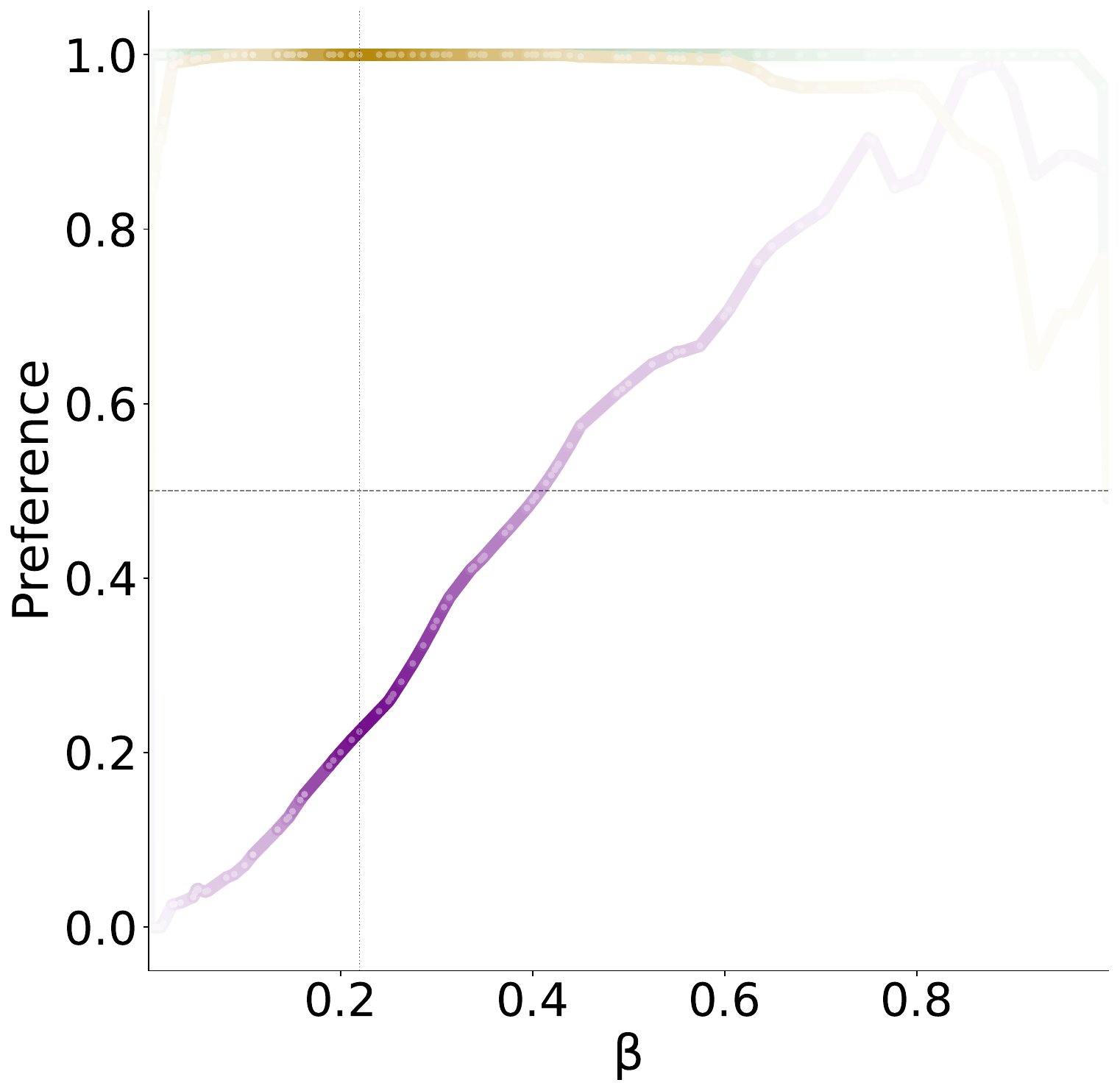}
\end{tabular}\hfill
\begin{tabular}{@{}c@{}}
    \quad ende24 \\
    \includegraphics[height=4.9cm]{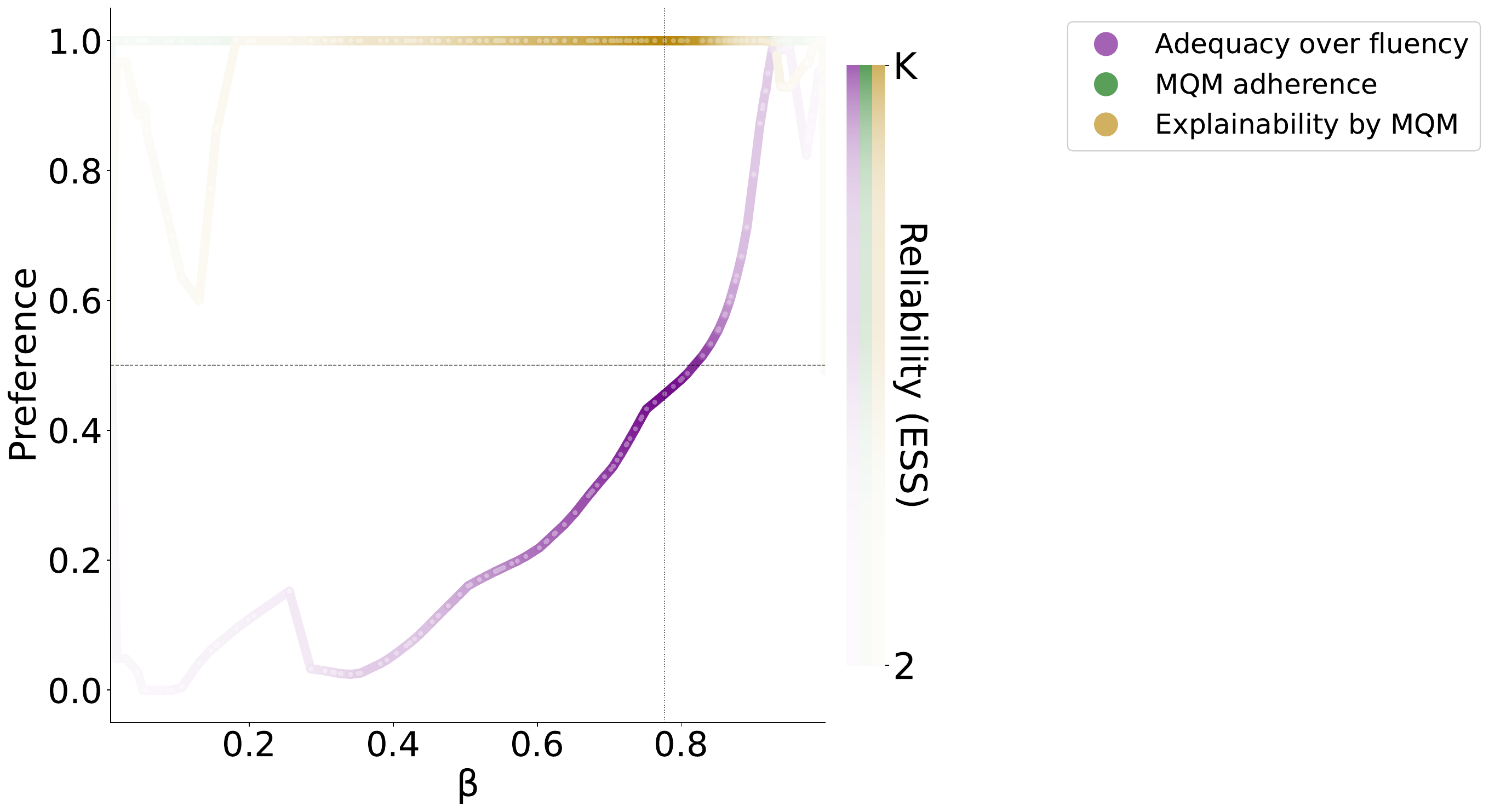}
\end{tabular}

\caption{Main results with Pearson as the meta-metric. Layout follows Figure~\ref{fig:3}.}
\label{fig:results_pearson}
\end{figure*}

\section{Variance Analysis for All Datasets}
\label{app:all_heatmaps}

Figure~\ref{fig:all_heatmaps} presents the variance analysis heatmaps across all datasets, reproducing the figures from the main text for completeness. The observed trends remain consistent throughout, demonstrating the reliability of $\beta$ as a control parameter.

\begin{figure*}[t]
\centering 

\begin{tabular}{@{}c@{}}
    \quad ende22 \\
    \includegraphics[height=7.0cm]{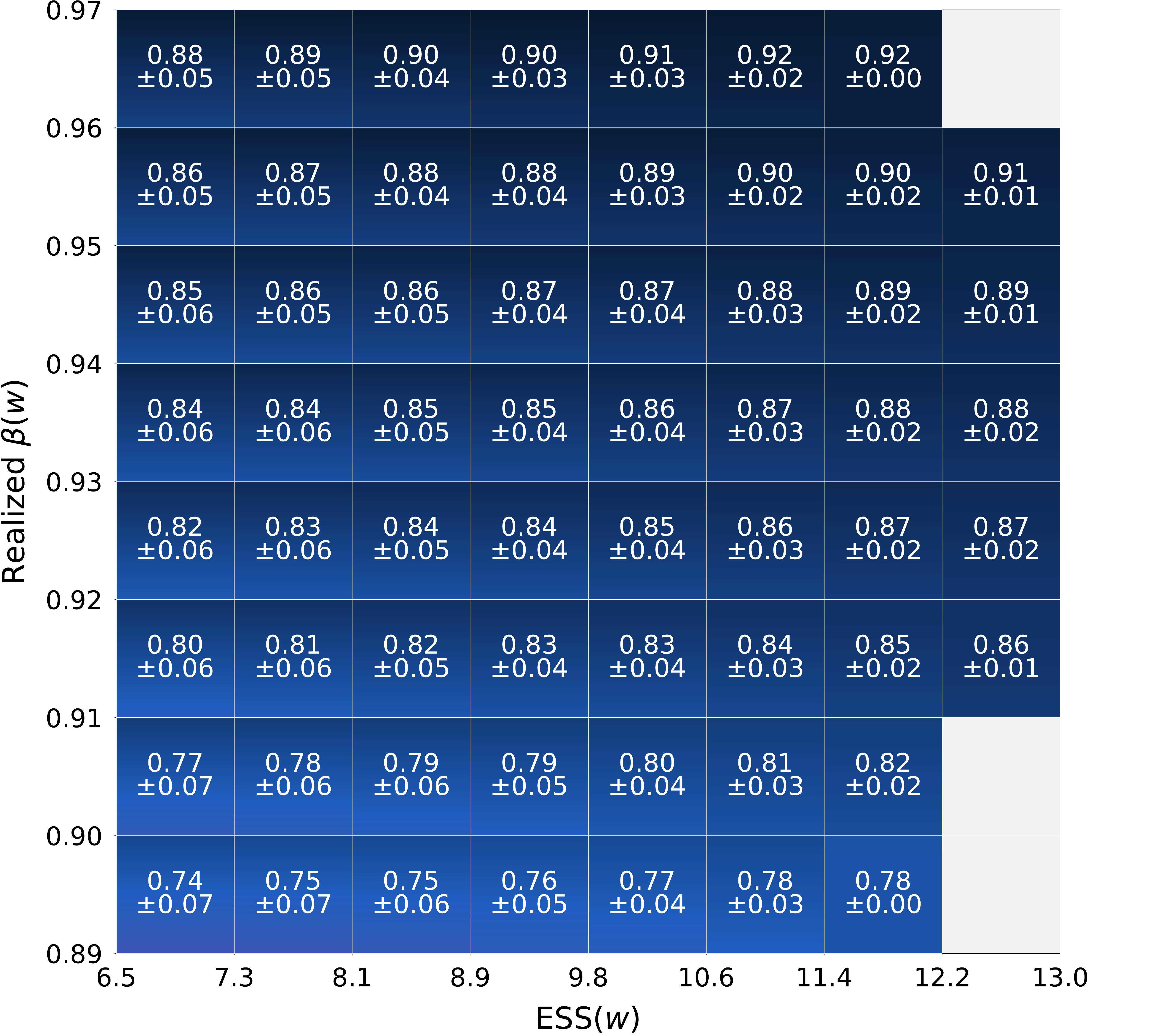}
\end{tabular}\hfill
\begin{tabular}{@{}c@{}}
    \quad zhen21 \\
    \includegraphics[height=7.0cm]{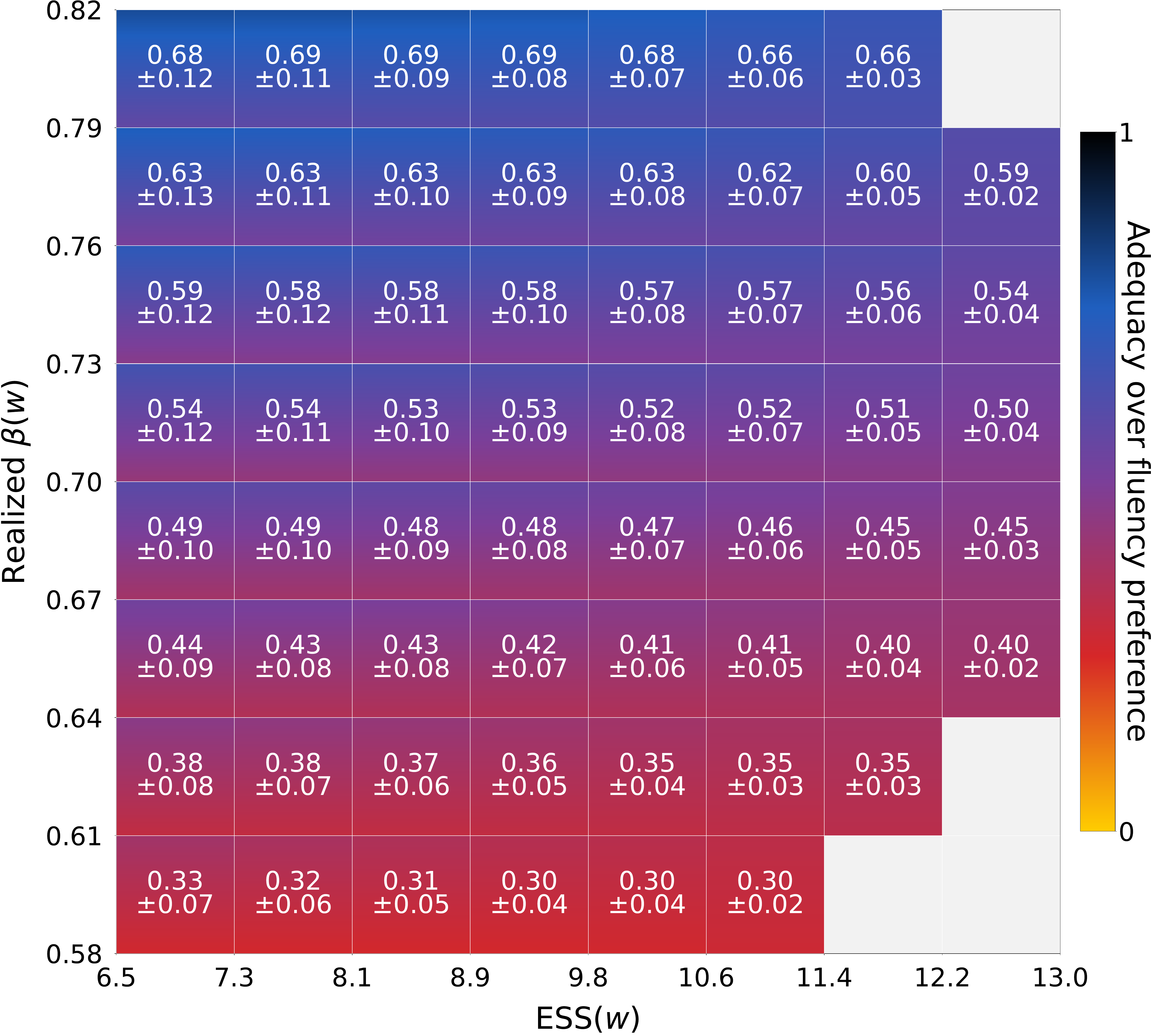}
\end{tabular}

\vspace{0.5cm} 

\begin{tabular}{@{}c@{}}
    \quad heen23 \\
    \includegraphics[height=7.0cm]{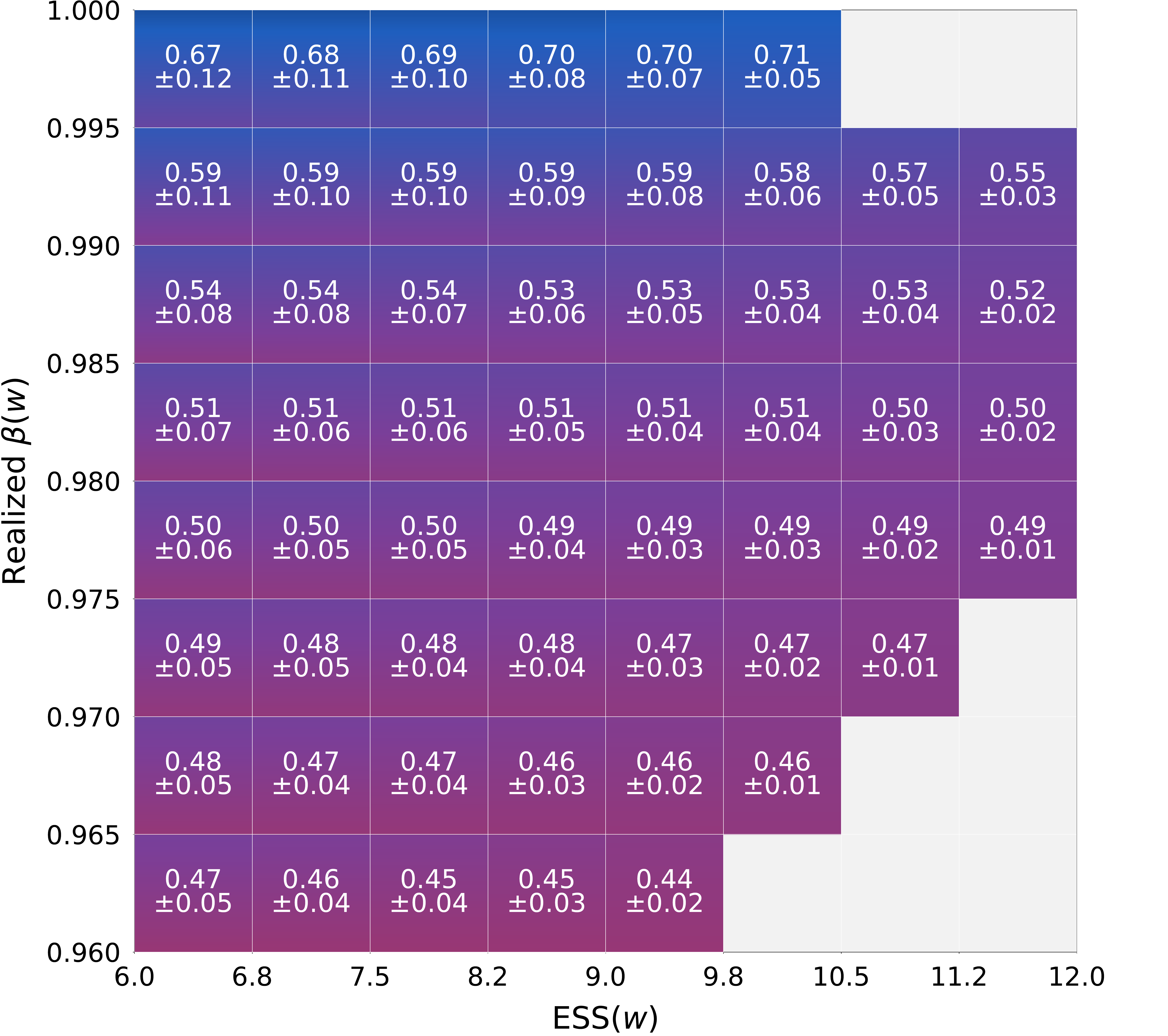}
\end{tabular}\hfill
\begin{tabular}{@{}c@{}}
    \quad jazh24 \\
    \includegraphics[height=7.0cm]{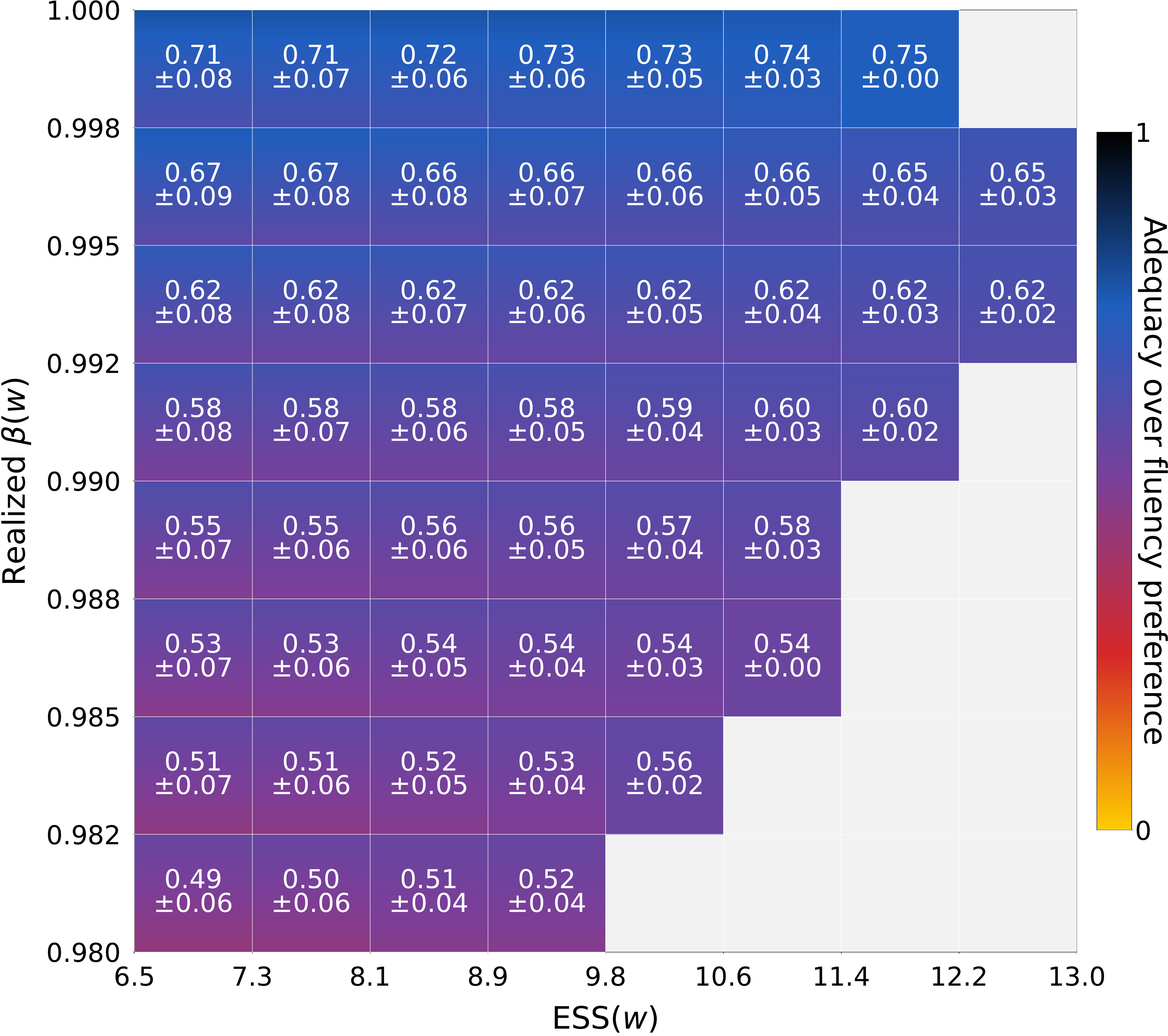}
\end{tabular}

\vspace{0.5cm} 

\begin{tabular}{@{}c@{}}
    \quad ted\_ende \\
    \includegraphics[height=7.0cm]{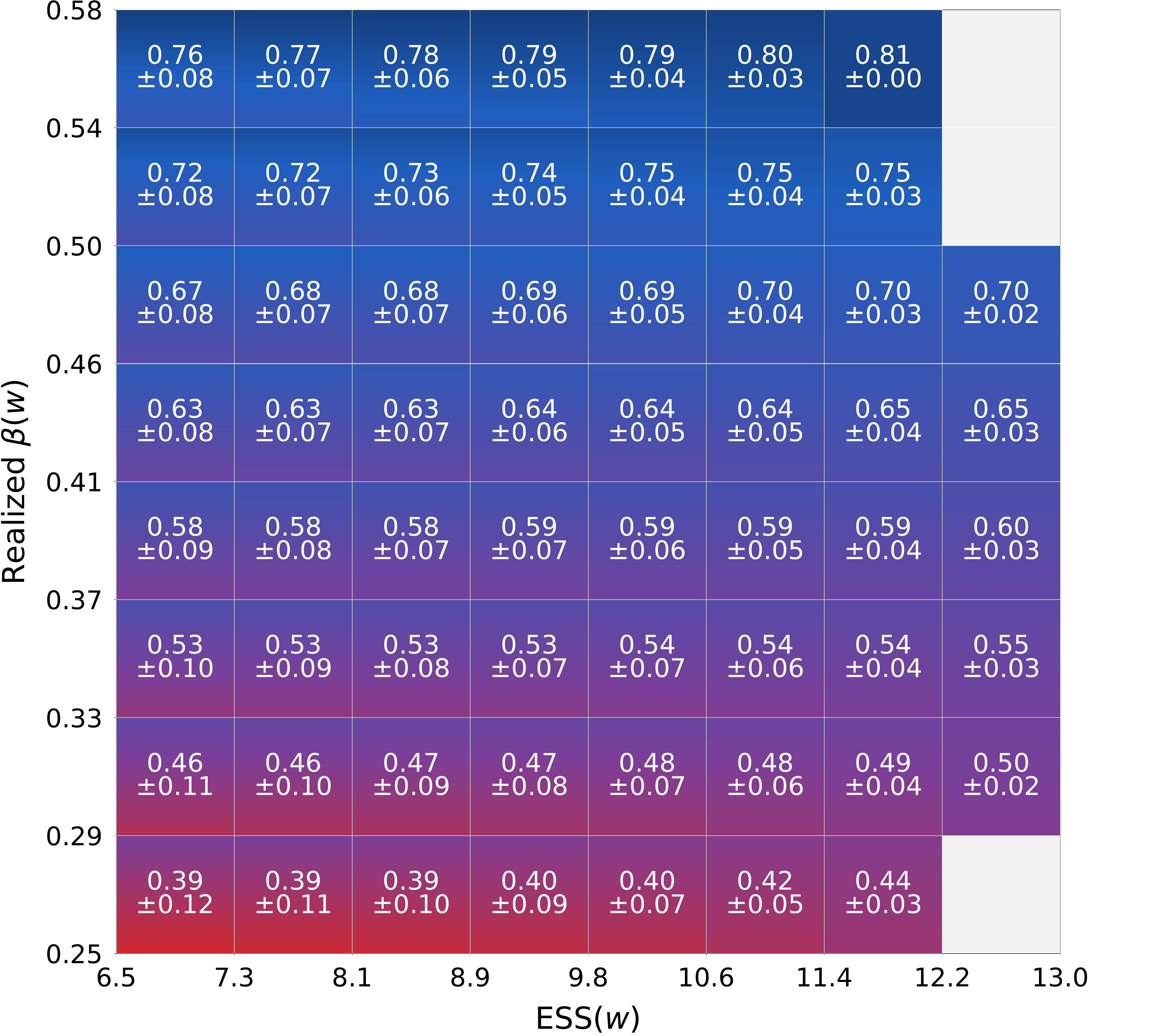}
\end{tabular}\hfill
\begin{tabular}{@{}c@{}}
    \quad enes24 \\
    \includegraphics[height=7.0cm]{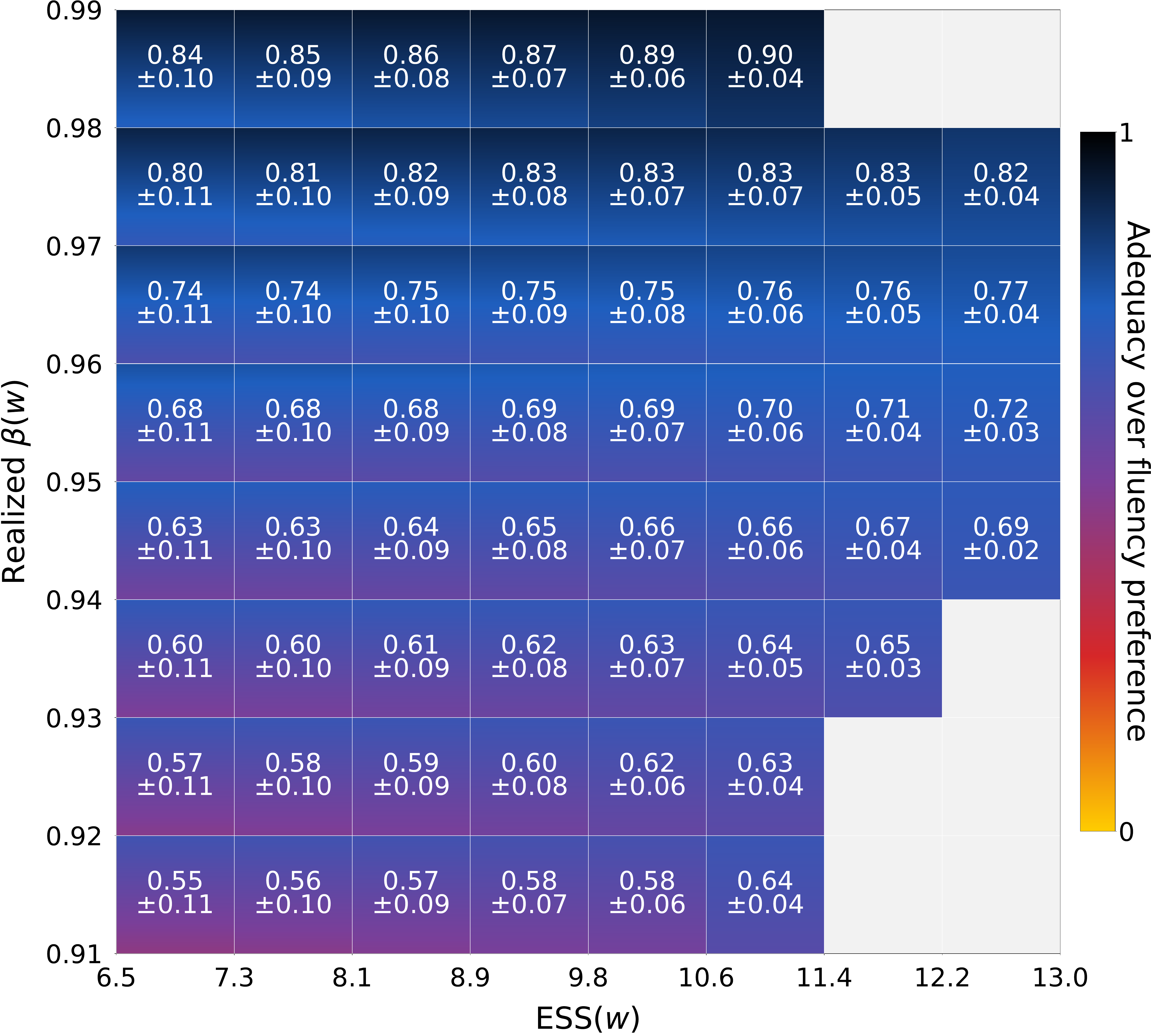}
\end{tabular}

\end{figure*}

\begin{figure*}[t]
\centering 

\begin{tabular}{@{}c@{}}
    \quad ende24 \\
    \includegraphics[height=7.0cm]{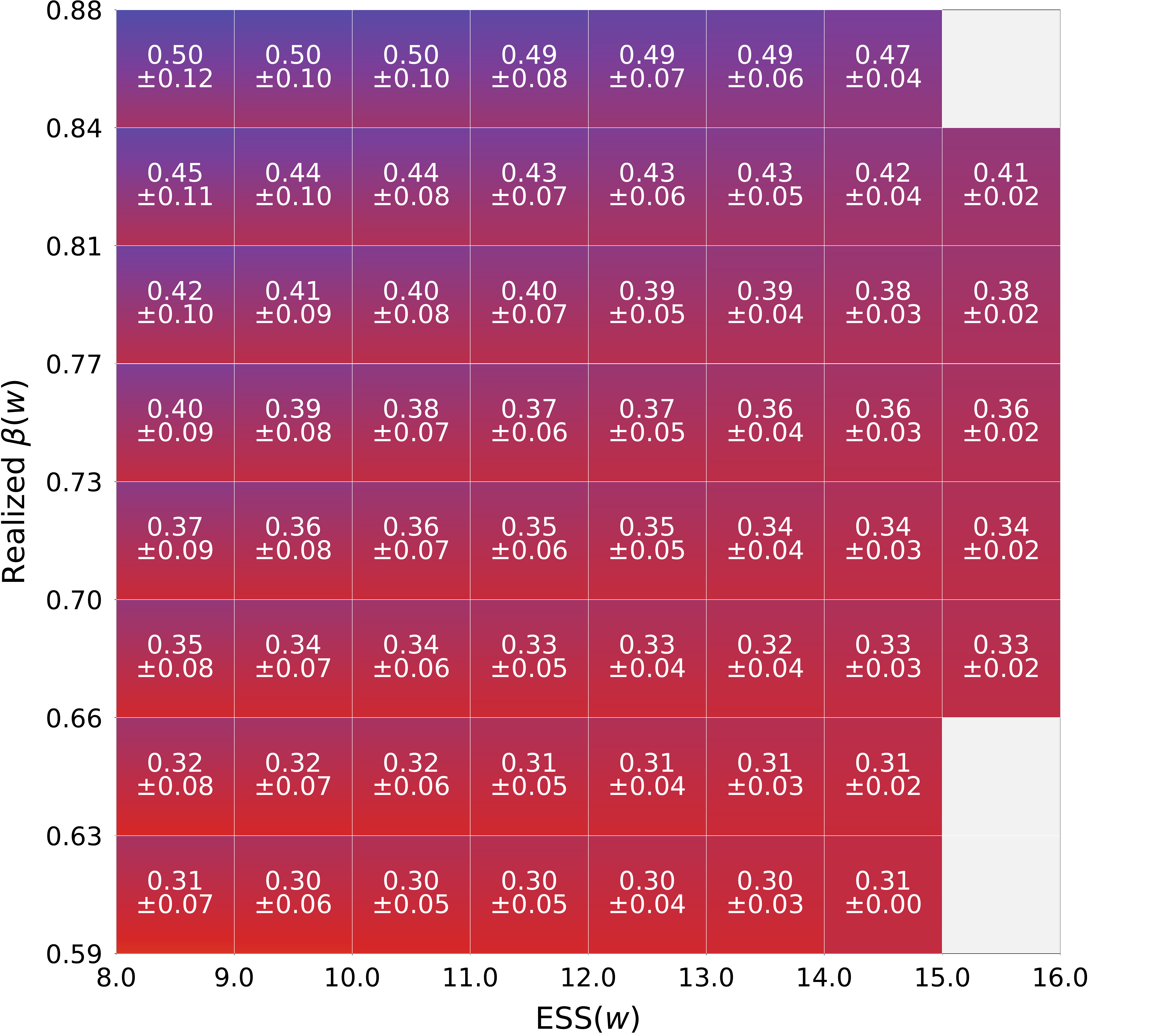}
\end{tabular}\hfill
\begin{tabular}{@{}c@{}}
    \quad zhen22 \\
    \includegraphics[height=7.0cm]{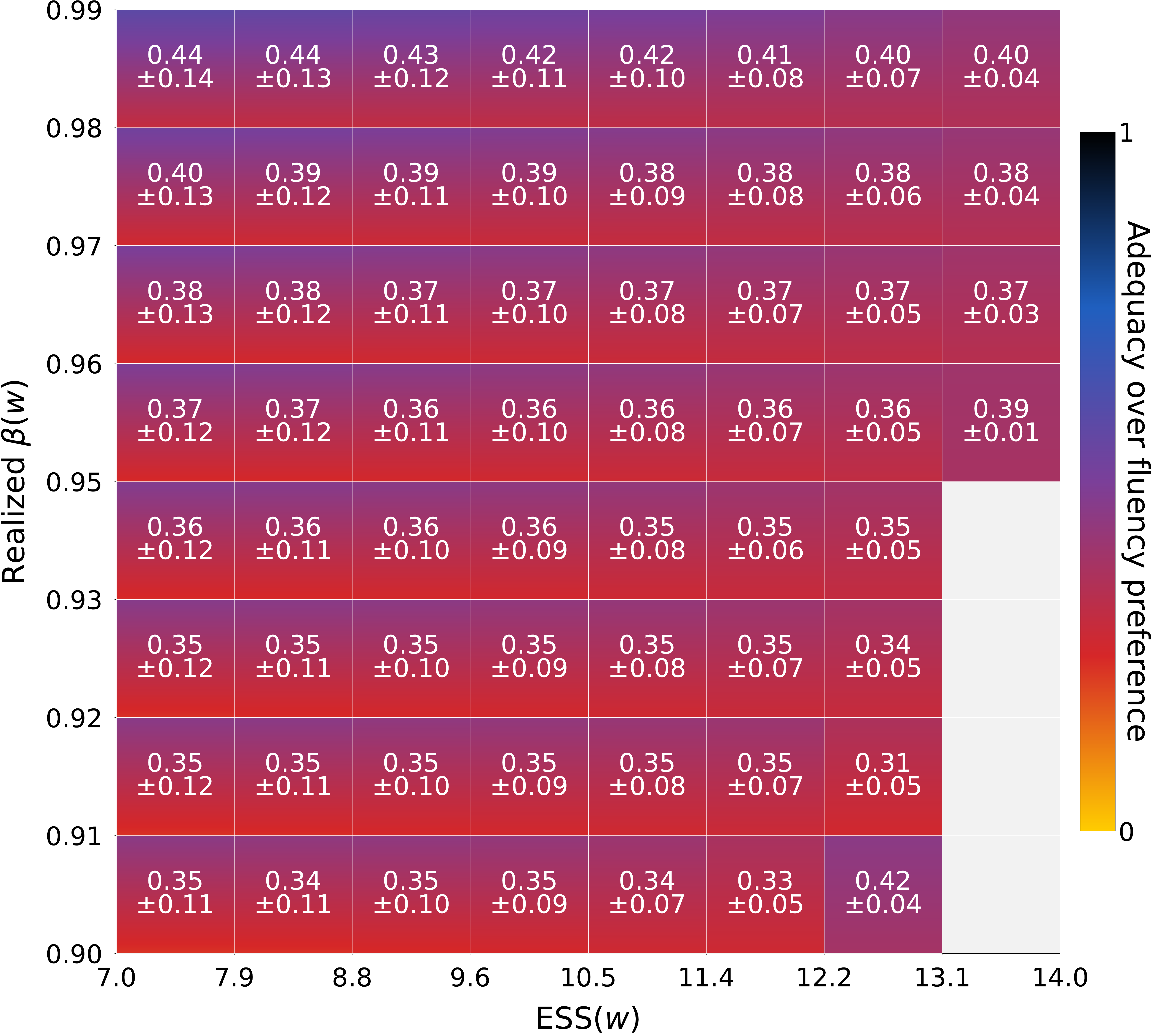}
\end{tabular}

\vspace{0.5cm} 

\begin{tabular}{@{}c@{}}
    \quad ende21 \\
    \includegraphics[height=7.0cm]{figures/beta_ess_preference_heatmap/beta_ess_preference_ende21_simplex4000k.pdf}
\end{tabular}\hfill
\begin{tabular}{@{}c@{}}
    \quad zhen23 \\
    \includegraphics[height=7.0cm]{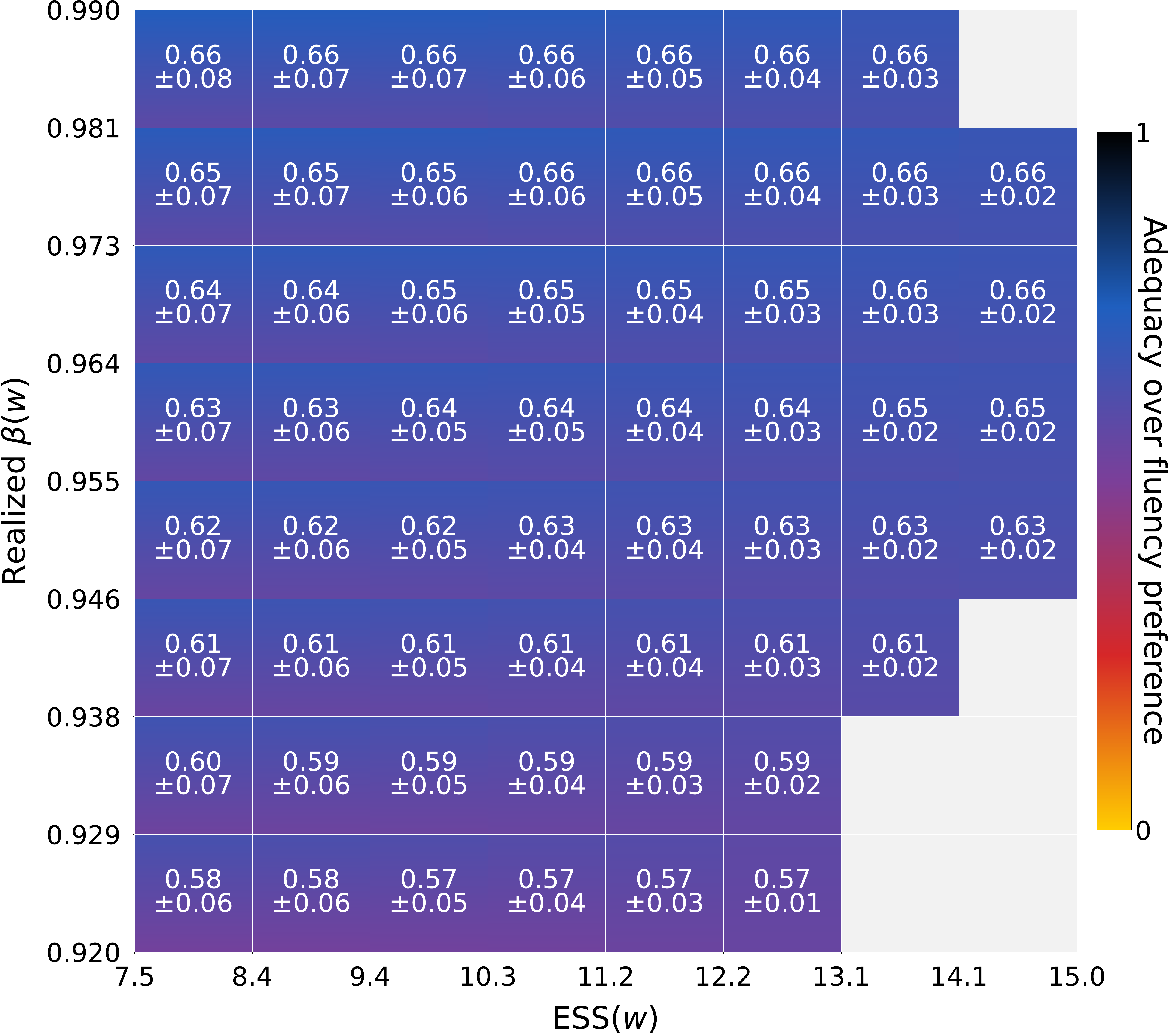}
\end{tabular}

\vspace{0.5cm} 


\begin{tabular}{@{}c@{}}
    \quad ted\_zhen \\
    \includegraphics[height=7.0cm]{figures/beta_ess_preference_heatmap/beta_ess_preference_ted_zhen_simplex4000k_colorbar.pdf}
\end{tabular}

\caption{Variance analysis results for all datasets. Layout follows Figure~\ref{fig:heatmap}.}
\label{fig:all_heatmaps}
\end{figure*}

\end{document}